\newcommand{\highlight}[1]{ \colorbox{black!20}{$#1$} }
\newtheorem{definition}{Definition}{\itshape}{\rmfamily}
\newtheorem{theorem}{Theorem}{\itshape}{\rmfamily}
\newtheorem{lemma}{Lemma}[theorem]{\itshape}{\rmfamily}
{\itshape}{\rmfamily}
\newtheorem{proposition}{Proposition}
{\itshape}{\rmfamily}
\newenvironment{proofidea}{\bigskip \par{\noindent \emph{Proof idea:}}}{\qed\par}
\newcommand{\squishlist}{
	\begin{list}{$\bullet$}
	{ 	\setlength{\itemsep}{0pt}      \setlength{\parsep}{1pt}
		\setlength{\topsep}{3pt}       \setlength{\partopsep}{0pt}
		\setlength{\leftmargin}{1.5em} \setlength{\labelwidth}{1em}
		\setlength{\labelsep}{0.5em} } }
\newcommand{\squishlisttwo}{
	\begin{list}{$\bullet$}
		{ \setlength{\itemsep}{0pt}    \setlength{\parsep}{0pt}
			\setlength{\topsep}{0pt}     \setlength{\partopsep}{0pt}
			\setlength{\leftmargin}{2em} \setlength{\labelwidth}{1.5em}
			\setlength{\labelsep}{0.5em} } }
\newcommand{\squishend}{\end{list}}
\newcommand{\eq}[2]{\begin{equation}\label{#1} #2 \end{equation}}
\newcommand{\eqm}[2]{\begin{equation}\label{#1}\begin{split} #2 \end{split}\end{equation}}
\newcommand{\eqa}[2]{\begin{align}\label{#1} #2 \end{align}}
\newcommand{\eqmx}[1]{\begin{equation*}\begin{split} #1 \end{split}\end{equation*}}
\newcommand{\define}[0]{\doteq}
\newcommand{\ra}[0]{\rightarrow}
\newcommand{\limit}[1]{\lim\limits_{#1}}
\newcommand{\Op}[2]{#1 \Big[ #2 \Big]}
\renewcommand{\P}{\operatorname*{\mathbb{P}}}
\renewcommand{\Pr}[1]{\P[#1]}
\newcommand{\E}[1]{\operatorname*{\mathbb{E}}_{\substack{#1}}}
\newcommand{\Exp}[2]{\Op{\E{#1}~}{#2}}
\newcommand{\indicator}[1]{\mathds{1}(#1)}
\newcommand{\Real}[0]{\mathbb{R}}
\renewcommand{\*}{\cdot}
\newcommand{\AS}{\mathcal{A}}
\renewcommand{\SS}{\mathcal{S}}
\newcommand{\T}{P}
\newcommand{\R}{R}
\newcommand{\ga}{\gamma}
\newcommand{\absorb}{s_{\scriptscriptstyle \Box}}
\newcommand{\terminals}{\SS_\perp}
\newcommand{\step}[1]{^{\scriptscriptstyle (#1)}}
\newcommand{\U}{\mathcal{U}}
\newcommand{\M}{\mathcal{M}}
\newcommand{\gammac}{{\scriptscriptstyle \ga_c}}
\newcommand{\D}{\mathcal{D}}
\newcommand{\J}{\tilde{J}}
\newcommand{\proofvspace}{\vspace{-0.15in}}
\title{Steady State Analysis of Episodic Reinforcement Learning}
\author{%
	Huang Bojun \\
	Rakuten Institute of Technology, Tokyo, Japan \\
	\texttt{bojhuang@gmail.com} \\
}
\begin{document}
\maketitle

\begin{abstract}
	This paper proves that the episodic learning environment of every finite-horizon decision task has a unique \emph{steady state} under any behavior policy, and that the marginal distribution of the agent's input indeed converges to the steady-state distribution in essentially all episodic learning processes. This observation supports an interestingly reversed mindset against conventional wisdom: While the existence of unique steady states was often presumed in continual learning but considered less relevant in episodic learning, it turns out their existence is \emph{guaranteed} for the latter. Based on this insight, the paper unifies episodic and continual RL around several important concepts that have been separately treated in these two RL formalisms. Practically, the existence of unique and approachable steady state enables a general way to collect data in episodic RL tasks, which the paper applies to policy gradient algorithms as a demonstration, based on a new \emph{steady-state policy gradient theorem}. Finally, the paper also proposes and experimentally validates a perturbation method that facilitates rapid steady-state convergence in real-world RL tasks.
\end{abstract}


\section{Introduction}
\label{sec_introduction}

Given a decision task, reinforcement learning (RL) agents iteratively optimize decision policy based on empirical data collected from continuously interacting with a learning environment of the task. To have successful learning, the empirical data used for policy update should represent some desired distribution. Every RL algorithm then faces two basic questions: \emph{what} data distribution shall be considered desired for the sake of learning, and \emph{how} should the agent interact with the environment to actually obtain the data as desired? Depending on the type of the given task, though, existing RL literature have treated this data collection problem in two different ways.

In \emph{continual reinforcement learning}, the agent immerses itself in a single everlasting sequential-decision episode that is conceptually of infinite length (or of life-long length). In this case, a \emph{stationary (or steady-state) distribution} is a fixed-point distribution over the agent's input space under the transition dynamic induced by a decision policy. The concept of steady state has been pivotal in continual RL literature. It is typically \emph{presumed} that a unique stationary distribution exists when rolling out any policy in the continual RL model being studied, and that the empirical data distribution of the rollout indeed converges over time to this stationary distribution due to (again assumed) ergodicity of the system dynamic~\cite{2002:tsitsiklis, 2012:opac, 2018:RL, 2019:generalized}. Continual RL algorithms can be derived and analyzed by examining system properties under the steady state~\cite{2001:continual_pg, 2002:tsitsiklis, 2014:thomas, 2015:YU}, and many of the resulted algorithms require the training data to follow the stationary distribution of some behavior policy~\cite{2012:opac, 2018:RL} (or a mixture of such distributions~\cite{2015:dqn}), which can then be efficiently collected from a few (or even a single) time steps once the rollout of the behavior policy converges to its steady state.

The situation has been quite different for \emph{episodic reinforcement learning}, in which the agent makes a finite number of decisions before an episode of the task terminates. Episodic RL tasks account for the vast majority of experimental RL benchmarks and of empirical RL applications at the moment~\cite{2016:gym, 2017:time_limit}. Due to the finiteness of decision horizon, for episodic tasks steady state was widely considered non-existent~\cite{2014:thomas} or having a degenerate form~\cite{2018:RL}, in either case irrelevant. The lack of meaningful steady state in such tasks has in turn led to more modeling disparities in the episodic RL formalism.  
Indeed, as \citet{2018:RL} (chap. 9, p. 200) wrote: "\emph{[t]he two cases, continuing and episodic, behave similarly, but with [function] approximation they must be treated separately in formal analyses, as we will see repeatedly in this part of the book}". In particular, the desired data distributions for episodic RL algorithms are usually characterized by alternative concepts, such as the expected \emph{episode-wise visiting frequencies}~\cite{2018:RL}, or a $\gamma$-discounted variant of it~\cite{2014:dpg, 2015:trpo}.

Intriguingly, despite the rather different theoretical framework behind, many episodic RL algorithms~\cite{2016:a3c,2018:RL} update policies using data from a small time slice (such as from one time step~\cite{2018:RL} or from several consecutive steps~\cite{2016:a3c, 2017:ppo}) of the episodic learning process, in almost the same way as how continual RL algorithms~\cite{1989:qlearning, 2016:a3c} collect data from ergodic models. This ``online-style'' data collection~\cite{2016:benchmarking}, while being popular in modern RL algorithms\cite{2016:a3c, 2017:ppo, 2015:dqn, 2015:ddpg,2018:sac} thanks to its simplicity and sample efficiency, is often not well justified in episodic tasks by the prescribed data distribution which requires collecting a whole episode trajectory as \emph{one} sample point of the distribution~\cite{2016:benchmarking}.
 
In this paper, we propose a new perspective to treat the data collection problem for episodic RL tasks. We argue that in spite of the finite decision horizon, the RL agent has the chance to \emph{repeat} the same decision process in the \emph{learning environment} of an episodic RL task. The resulted learning process is an infinite loop of homogeneous and finite episodes. By formalizing such an episodic learning environment into a family of infinite-horizon MDPs with special structures (Section \ref{sec_elp}), we mathematically proved that a unique stationary distribution exists in every such episodic learning process, regardless of the agent policy, and that the marginal distribution of the rollout data indeed converges to this stationary distribution in ``proper models'' of the episodic RL task (Section \ref{sec_ergodic}). 

Conceptually, our theoretical observation supports a reversed mindset against conventional wisdom: while unique steady state is (only) presumed to exist in continual tasks, it is now \emph{guaranteed} to exist in all episodic tasks. Moreover, through analyzing the Bellman equation under the steady state, we obtained interesting and rigorous connections between the separately defined semantics of important concepts (such as \emph{performance measure} and \emph{on-policy distribution}) across episodic and continual RL. These results help with unifying the two currently disparate RL formalisms (Section \ref{sec_conceptual}).

Algorithmically, the unified framework enables us to collect data efficiently in episodic tasks in a similar way as in the continual setting: Write the quantity to be computed as an expectation over the stationary distribution of some policy $\beta$, rollout $\beta$ and wait for it to converge, after which we obtain an unbiased sample point in every single time step. As a demonstration of this general approach, we derived a new \emph{steady-state policy gradient theorem}, which writes policy gradient into such an expected value. The new policy gradient theorem not only better justifies the popular few-step-sampling practice used in modern policy gradient algorithms~\cite{2016:a3c, 2018:RL}, but also makes explicit some less explored bias in those algorithms (Section \ref{sec_pg}). Finally, to facilitate \emph{rapid} steady-state convergence, we proposed a \emph{perturbation trick}, and experimentally showed that it is both necessary and effective for data collection in episodic tasks of standard RL benchmark (Section \ref{sec_perturb}).

\vspace{-0.04in}
\section{Preliminaries}
\vspace{-0.05in}
\label{sec_preliminaries}

\textbf{Markov chain}. 
In this paper, a \emph{markov chain} is a homogeneous discrete-time stochastic process with countable (finite or infinite) state space. The state-transition probabilities are written into a \emph{transition matrix} $M$, where $M(s,s')$ is the entry of row $s$ and column $s'$ which specifies $\Pr{s_{t+1}=s'|s_{t}=s}$. A \emph{rollout} of $M$ generates a trajectory $\zeta=(s_0,s_1,s_2,\dots)$ of infinite length. Let row vector $\rho\step{t}$ denote the marginal distribution of the state at time $t$, so $s_t \sim \rho\step{t} = \rho\step{t-1} M$. The \emph{limiting distribution} of $s_t$, if exists, is the marginal distribution at infinite time, i.e. $\lim_{t \ra \infty} \rho\step{t}$, and the \emph{stationary (or steady-state) distribution} is defined as the fixed-point distribution with $\rho= \rho M$. 

The existence and uniqueness of stationary and limiting distributions are characterized by the following well-known concepts and conditions: Given a markov chain with transition matrix $M$, a state $s$ is \emph{reachable} from a state $\bar{s}$ if $\Pr{s_t=s|s_0=\bar{s}}>0$ for some $t\geq 1$, and the markov chain $M$ is said \emph{irreducible} if every state is reachable from any state in $M$. The \emph{mean recurrence time} of a state $s$ is the expected number of steps for $s$ to reach itself, denoted as $\E{}[T_s] \define \sum_{t=1}^{\infty} t \* \Pr{s_t=s \texttt{~and~} s \not\in \{s_{1:t-1}\}|s_0=s}$, and state $s$ is said \emph{positive recurrent} if $\E{}[T_s]<\infty$. The markov chain $M$ is positive recurrent if every state in $M$ is positive recurrent. Finally, a state $s$ is \emph{aperiodic} if $s$ can reach itself in two trajectories with co-prime lengths, i.e. if $\gcd\{t>0:\Pr{s_t=s|s_0=s}>0\}=1$. 
\begin{proposition}
	\label{thm_mc_stationary}
	An irreducible markov chain $M$ has a unique stationary distribution $\rho_{\scriptscriptstyle M}=1/\E{}[T_s]$ if and only if $M$ is positive recurrent. 
	(\cite{2009:markov_chain}, Theorem 54)
\end{proposition}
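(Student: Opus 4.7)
The plan is to prove the biconditional by constructing an explicit stationary distribution from the mean recurrence times in one direction, and deriving the recurrence-time formula from any assumed stationary distribution in the other. This is a classical fact about countable-state Markov chains (cited as Proposition 54 of \cite{2009:markov_chain}), so my goal is to organize the standard structural argument cleanly rather than invent new machinery.

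For the direction ``stationary distribution $\Rightarrow$ positive recurrent,'' I would start from any probability vector $\rho$ with $\rho = \rho M$. First, use irreducibility to show $\rho(s) > 0$ at every state: the support of $\rho$ is forward-invariant under $M$, and irreducibility forces every state to be reachable from the support, so the support must be the whole state space. Second, by partitioning time into excursions from $s$ back to $s$ and computing expected occupation of $s$ under the stationary process, derive the Kac identity $\rho(s)\cdot \E{}[T_s] = 1$. This simultaneously gives $\E{}[T_s] < \infty$ (positive recurrence at every $s$) and the explicit formula $\rho(s) = 1/\E{}[T_s]$.

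For the other direction, I would fix an arbitrary reference state $s_0$ and define the excursion measure
\[
\mu(s) \define \E{}\!\left[\sum_{t=0}^{T_{s_0}-1} \indicator{s_t = s}\right],
\]
i.e.\ the expected number of visits to $s$ during one excursion from $s_0$ back to $s_0$ (starting at $s_0$). The key routine computations are: $\mu(s_0) = 1$ by definition; $\mu M = \mu$, proved by a first-step decomposition of the excursion together with the strong Markov property at the return time $T_{s_0}$; and $\sum_s \mu(s) = \E{}[T_{s_0}] < \infty$ by positive recurrence. Normalizing $\mu$ by $\E{}[T_{s_0}]$ then yields a stationary probability distribution $\rho$ with $\rho(s_0) = 1/\E{}[T_{s_0}]$.

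The main obstacle is \emph{uniqueness}, which is what pins down the formula $\rho(s) = 1/\E{}[T_s]$ simultaneously at every $s$. My approach is a minimality argument: for any invariant probability $\rho'$, irreducibility forces $\rho'(s_0) > 0$, and by unrolling $\rho' = \rho' M$ along paths that avoid $s_0$ until the final step one obtains the pointwise lower bound $\rho'(s) \geq \rho'(s_0)\,\mu(s)$. Summing over $s$ and using that both $\rho'$ and the normalized $\mu$ are probability measures upgrades the inequality to equality, so $\rho' = \rho$. Since the reference state $s_0$ was arbitrary, the identity $\rho(s) = 1/\E{}[T_s]$ holds at every $s$, completing both uniqueness and the explicit form.
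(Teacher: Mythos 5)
The paper does not actually prove this proposition --- it is imported verbatim as Theorem~54 of the cited Markov-chain reference, so there is no in-paper argument to compare against. Your reconstruction is the standard textbook proof (excursion occupation measure for existence, Kac's identity for the formula, minimality of the excursion measure for uniqueness) and is essentially sound, so the comparison reduces to two small points of rigor in your sketch. First, in the direction ``stationary $\Rightarrow$ positive recurrent,'' the decomposition of time into excursions from $s$ back to $s$ presupposes $T_s<\infty$ almost surely, i.e.\ recurrence; you must rule out transience \emph{before} invoking Kac, e.g.\ by noting that in a transient (or null-recurrent) irreducible chain $M^n(x,y)\to 0$ for all $x,y$, so $\rho(y)=\sum_x \rho(x)M^n(x,y)\to 0$ by dominated convergence, contradicting $\rho(y)>0$. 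Second, in the uniqueness step, the bound $\rho'(s)\geq \rho'(s_0)\,\mu(s)$ summed over $s$ yields only $1\geq \rho'(s_0)\E{}[T_{s_0}]$, which is the \emph{wrong} direction to force equality by itself; the standard way to close this is to observe that $\nu \define \rho'-\rho'(s_0)\mu$ is a nonnegative invariant measure with $\nu(s_0)=0$, and irreducibility (via $\nu(s_0)=\sum_x\nu(x)M^n(x,s_0)\geq \nu(x)M^n(x,s_0)$ with $M^n(x,s_0)>0$ for some $n$) forces $\nu\equiv 0$. Alternatively, uniqueness is immediate from your own first direction: the Kac identity $\rho(s)=1/\E{}[T_s]$ holds for \emph{every} stationary probability of an irreducible chain, which already pins the distribution down. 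With those two repairs your argument is a complete and correct proof of the cited result.
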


\begin{proposition}
	\label{def_ergodic}
	An irreducible and positive-recurrent markov chain $M$ has a limiting distribution $\lim\limits_{t \ra \infty}\rho\step{t}=\rho_{\scriptscriptstyle M}$ if and only if there exists one aperiodic state in $M$.
	(\cite{2009:markov_chain}, Theorem 59)
\end{proposition}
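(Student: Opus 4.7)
The plan is to handle the two directions of the equivalence separately, leaning on the classical fact that in an irreducible Markov chain the period is a class property: if one state is aperiodic, then every state is aperiodic. Granting this class-property lemma, the hypothesis ``there exists one aperiodic state in $M$'' becomes equivalent under irreducibility to ``every state in $M$ is aperiodic'', which is the form I will use throughout.

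For the necessity direction I would argue contrapositively. Suppose every state of $M$ has common period $d \geq 2$. Pick any state $s$ and take $\rho\step{0}$ to be the point mass at $s$. Then $\rho\step{t}(s) = \Pr{s_t=s \mid s_0=s}$ vanishes whenever $d \nmid t$, so the subsequence indexed by non-multiples of $d$ is identically $0$, whereas Proposition~\ref{thm_mc_stationary} guarantees that the would-be limit $\rho_{\scriptscriptstyle M}(s)=1/\E{}[T_s]$ is strictly positive. Hence no limiting distribution can exist.

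For the sufficiency direction I would use the standard coupling construction. Let $(X_t)$ be a copy of $M$ started from the arbitrary initial distribution $\rho\step{0}$, and let $(Y_t)$ be an independent copy started from the stationary distribution $\rho_{\scriptscriptstyle M}$. Form the product chain $Z_t=(X_t,Y_t)$ on $\SS\times\SS$ with transition kernel $M\otimes M$. The key structural claim is that $Z_t$ is itself irreducible and positive recurrent: irreducibility is where aperiodicity is used essentially, while positive recurrence then follows from Proposition~\ref{thm_mc_stationary} applied to $Z_t$, since $\rho_{\scriptscriptstyle M}\otimes\rho_{\scriptscriptstyle M}$ is a stationary distribution for the product. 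Granted these, define the coupling time $\tau=\inf\{t:X_t=Y_t\}$; positive recurrence of $Z_t$ at any diagonal state $(s,s)$ forces $\Pr{\tau<\infty}=1$. Splicing $(X_t)$ with $(Y_t)$ after $\tau$ and applying the standard coupling inequality gives $\tfrac{1}{2}\sum_{s}\lvert \rho\step{t}(s)-\rho_{\scriptscriptstyle M}(s)\rvert \leq \Pr{\tau>t} \ra 0$, which yields $\rho\step{t}\ra \rho_{\scriptscriptstyle M}$.

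The main obstacle, and the sole place where aperiodicity enters in an essential way, is proving irreducibility of $M\otimes M$. To reach a target $(u,u')$ from $(s,s')$ in the product chain I need a \emph{common} time $T$ with $M^T(s,u)>0$ \emph{and} $M^T(s',u')>0$; irreducibility of $M$ alone produces times $m,n$ that generally differ. Aperiodicity, combined with the number-theoretic fact that $\{t : M^t(w,w)>0\}$ is eventually co-finite whenever state $w$ has period $1$, lets me pad either trajectory by loops of arbitrarily large length, aligning the two arrival times at a single $T$. Making this padding argument uniform so that a single $T$ works for both coordinates is the crux; once it is in place, the coupling inequality and the class-property lemma for periods are standard and fill in the remaining gaps.
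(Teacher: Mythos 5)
The paper does not prove this proposition at all: it is imported verbatim as Theorem 59 of the cited Markov-chain reference, so there is no in-paper argument to compare yours against. Judged on its own, your proof is the standard and correct one. The necessity direction via a point mass at a periodic state is right (note the proposition must be read as quantifying over all initial distributions, which is exactly why starting from a point mass is the legitimate move; starting from $\rho_{\scriptscriptstyle M}$ itself would trivially converge even for a periodic chain). The sufficiency direction is the classical coupling proof: product chain, irreducibility of $M\otimes M$ from aperiodicity, positive recurrence from the product stationary measure via Proposition~1, almost-sure hitting of the diagonal, and the coupling inequality. The one step you leave as a sketch --- choosing a single time $T$ that works for both coordinates --- is routine once you invoke the fact that for an aperiodic state $w$ the additively closed set $\{t: M^t(w,w)>0\}$ with $\gcd$ equal to $1$ contains all sufficiently large integers: pick $m,n$ with $M^m(s,u)>0$ and $M^n(s',u')>0$, then take $T$ large enough that both $T-m$ and $T-n$ lie in the respective cofinite return-time sets of $u$ and $u'$. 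With that filled in, the argument is complete.
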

\vspace{-0.05in}
A markov chain satisfying the condition in Proposition \ref{def_ergodic} is called an \emph{ergodic} markov chain.

\textbf{Markov Decision Process (MDP)}.
An MDP $\M=(\SS, \AS, \R, \T, \rho_0)$ is a sequential decision making model where $\SS$ is a countable state space, $\AS$ a countable action space, $\R(s)\in [r_{min},r_{max}]$ is a reward assigned to each state $s\in \SS$, $\T(s'|s,a) \define \Pr{s_{t+1}=s' | s_t=s, a_t=a}$ specifies action-conditioned transition probabilities between states, and $\rho_0$ is the initial distribution with $s_0\sim \rho_0$.  

A policy function $\pi: \SS\times \AS \to [0,1]$ prescribes the probability to take action under each state. With $\Pr{a_t=a|s_t=s} = \pi(s,a)$, every policy $\pi$ induces a markov chain with transition matrix $M_{\pi} (s,s') = \sum_{a} \pi(s,a) \T(s'|s,a)$. Accordingly, row vectors $\rho\step{t}_{\pi}$ and $\rho_{\pi}$ denote the marginal and stationary distributions of $s_t$ under policy $\pi$, respectively, so $\rho\step{t+1}_{\pi} = \rho\step{t}_{\pi} M_{\pi}$, and $\rho_{\pi} = \rho_{\pi} M_{\pi}$. 

An MDP is ergodic if for every policy $\pi$ the induced markov chain $M_\pi$ is ergodic, in which case the \emph{steady-state performance}~\cite{2002:tsitsiklis} of a policy $\pi$ is $J_{avg}(\pi) \define \lim\limits_{ T \ra \infty} \frac{1}{T} \sum_{t=1}^{T} \R(s_t) = \E{s\sim \rho_\pi}[\R(s)]$. 
When a set of \emph{terminal states} $\terminals \subset \SS$ is identified, a rollout trajectory $\zeta$ is said \emph{terminated} when it reaches a terminal state. We use $T(\zeta)\define \inf\{t\geq 1:s_t\in \terminals\}$ to denote the \emph{termination time} of $\zeta$,
and the \emph{episode-wise performance} of a policy $\pi$ is defined as $J_{epi}(\pi) \define \E{\zeta\sim M_\pi}[\sum_{t=1}^{T(\zeta)} \R(s_t)]$. 

A value function $Q: \SS \times \AS \to \Real$ prescribes the ``value'' of taking an action under a state. 
In this paper, we consider the following \emph{product-form family} of value functions~\cite{2011:horde,2011:insights,2012:opac,2014:qlambda,2016:emphatic,2017:predictron,2017:unifying} :
\vspace{-0.05in}
\eqa{}{
	Q_\pi(s,a) 
	&\define \E{s_1\sim \T(s,a), \{s_{\geq 2}\}\sim M_\pi} [\sum_{t = 1}^\infty \R(s_{t}) \* \prod_{\tau=1}^{t-1} \gamma(s_\tau)] \label{q_pi} \\
	&= \E{s'\sim \T(s,a), a'\sim \pi(s')} [\R(s')+\gamma(s') \* Q_\pi(s',a')] \label{bellman}
}
in which $\T(s,a)$ and $\pi(s')$ are short-hands for the conditional distributions $\T(\cdot|s,a)$ and $\pi(\cdot|s')$, and $\gamma: \SS \to [0,1]$ is called the \emph{discounting function}. Besides entailing the \emph{Bellman equation} \eqref{bellman}, the definition of $Q_{\pi}$, i.e. \eqref{q_pi}, also induces the \emph{state-value function} $V_\pi(s) \define \E{a\sim \pi(s)}[Q_\pi(s,a)]$.

With $\ga(s) = \ga_c$ for constant $\ga_c \in [0,1)$, the product-form value functions $Q_\pi$ and $V_\pi$ subsume the classic value functions that have been underlying much of the existing RL literature, and we will use $Q^\gammac_\pi$ and $V^\gammac_\pi$ to refer to this particular form of value functions with constant discounting function. 
On the other hand, with $\ga(s)=\indicator{s\not\in \terminals}$, the episode-wise performance $J_{epi}$ can be recast as the product-form value of the initial state~\cite{2011:horde}: $J_{epi}(\pi)=V_\pi(s_0)$ if $\ga(s)=1$ for $s\not\in \terminals$ and $\ga(s)=0$ for $s\in \terminals$. We will call the function $\ga(s)=\indicator{s\not\in \terminals}$, the \emph{episodic discounting function}. See Appendix \ref{sec_mdp} for more discussions on the formulation choices behind the MDP model.

\textbf{Reinforcement Learning (RL)}. 
In continual RL, we are given a decision task modeled by an ergodic MDP $\M_D$, and the goal is to find good policy with respect to the steady-state performance $J_{avg}$ through a single continual rollout of $\M_D$. In episodic RL, the decision task $\M_D$ is a finite-horizon MDP with terminal states, and the goal is to optimize the episode-wise performance $J_{epi}$ through repeatedly rolling out $\M_D$ from the beginning. In this case, a special resetting procedure will intervene the learning process to start a new decision episode upon the end of the last~\cite{2016:gym}.

A common and basic idea, for both continual and episodic RL algorithms, is to use the rollout data to optimize a parameterized policy function $\pi(s,a;\theta)$ with respect to a surrogate objective $\J(\theta)$ via the stochastic gradient method. At a update time $t$, the gradient $\nabla_\theta \J$ is approximately computed as a sample mean of some computable function $F$ over a ``mini-batch'' $\D_t$, with
$
\nabla_\theta \tilde{J}(\theta_t) \approx \frac{1}{|\mathcal{D}_t|}\sum_{(s,a)\in \mathcal{D}_t} F(s,a,\theta_t)
$
, where $\D_t$ is a selected subset of all the rollout data up to time $t$. 

For example, \emph{policy gradient algorithms}, as a family of widely used RL algorithms, work on policy functions $\pi(s,a;\theta)$ that are directly differentiable, and typically choose $\J_{pg}(\theta) \define \E{s_0\sim \rho_0}[V^\gammac_{\scriptscriptstyle \pi(\theta)}(s_0)]$
~\footnote{
	To simplify notations we will write $\rho_\theta$, $Q_\theta$, $V_\theta$, $J(\theta)$ for $\rho_{\pi(\theta)}$, $Q_{\pi(\theta)}$, $V_{\pi(\theta)}$, $J(\pi(\theta))$ from now on.
} 
in episodic tasks as the surrogate objective, whose gradient can be estimated by $F_{pg}(s,a,\theta) \define \widehat{Q^\gammac_\theta}(s,a) \nabla \log\pi(s,a;\theta)$, where $\widehat{Q^\gammac_\theta}(s,a)$ is some practical estimation of $Q^\gammac_\theta(s,a)$ for the specific $(s,a)\in \D_t$. 
Substituting $\tilde{J}_{pg}$ and $F_{pg}$ to the general RL algorithmic framework above, yields 
\eqa{}{
	\nabla J_{epi}(\theta) 
	\approx \nabla \J_{pg}(\theta) 
	&= \sum_{\tau=0}^\infty ~(\ga_c)^\tau  
	\Exp{s_\tau\sim \rho\step{\tau}_{\theta},~ a_\tau\sim\pi(s_\tau;\theta)}{
		Q^\gammac_\theta(s_\tau,a_\tau) \* \nabla \log\pi(s_\tau,a_\tau;\theta) 
	}  \label{discounted_pg1}\\
	&\approx \frac{1}{|\mathcal{D}_t|}\sum_{(s,a)\in \mathcal{D}_t} 
	\widehat{Q^\gammac_\theta}(s,a) \* \nabla \log\pi(s,a;\theta) \label{discounted_pg2}
}
where the equality part in \eqref{discounted_pg1} is known as the classic \emph{policy gradient theorem}~\cite{2000:pg}.

Policy gradient algorithms illustrate a general disparity observed in episodic RL between the ``desired'' data distribution and the data actually collected. Specifically, according to \eqref{discounted_pg1}, the policy gradient should be estimated by rolling out an episode using $\pi(\theta)$ then summing over $(\ga_c)^\tau \cdot F_{pg}$ across all steps $\tau$ in the episode, which collectively provide one sample point to the right-hand side of \eqref{discounted_pg1}. Many policy gradient algorithms indeed work this way~\cite{2016:benchmarking, 2015:trpo}. However, some modern policy gradient algorithms, such as A3C~\cite{2016:a3c} or the variant of REINFORCE as described in \cite{2018:RL}, compute the policy gradient based on data only from a small time window, much smaller than the average episode length. Such an ``online'' approach~\cite{2016:benchmarking} has witnessed remarkable empirical success in practice~\cite{2016:a3c}.

A popular explanation for the disparity above is to re-arrange the sum $\sum_{\tau=0}^\infty ~(\ga_c)^\tau  \sum_{s_\tau} \rho\step{\tau}_\theta(s_\tau)$ in \eqref{discounted_pg1} into $\sum_s \big( \sum_{\tau=0}^\infty (\ga_c)^\tau \rho\step{\tau}_\theta(s) \big)$, which is a weighted sum over states, thus can be interpreted as an expectation over a state distribution that we will call it the \emph{episode-wise visitation distribution} $\mu_\pi^\gammac(s) \define \sum_{\tau=0}^\infty (\ga_c)^\tau \rho\step{\tau}_\pi(s)/Z$~\cite{2014:dpg,2015:trpo,2018:RL}, where $Z$ is a normalizing term. However, the visitation-distribution interpretation cannot explain why a \emph{single} rollout data point $s_t$ can serve as a faithful sample of $\mu^\gammac_\pi$. For $\ga_c<1$, we know that $s_t$ is just biased to $\mu^\gammac_\pi$~\cite{2014:thomas}; for $\ga_c=1$, we have $\mu^1_\pi \propto \E{\pi}[\sum_t \indicator{s_t=s}]$, which justifies the practice of taking long-run average across multiple episodes~\cite{2016:benchmarking, 2015:trpo}, but still cannot explain why a single step can represent this long-run average~\cite{2016:a3c, 2018:RL}.  

Importantly, we remark that the aforementioned disparity between desired data and actual data is observed not only in policy gradient algorithms, but also in other \emph{policy-based} algorithms like PPO\cite{2017:ppo}, and in \emph{value-based} and \emph{off-policy} algorithms like Q-Learning\cite{1989:qlearning, 2018:RL} too. The disparity between theory and practice in data collection is currently a \emph{general} issue in the RL framework thus discussed (which subsumes all these algorithms, see Appendix \ref{sec_rl} for more details).


\section{Modeling reinforcement learning process with MDP}
\label{sec_elp}

During the rollout of a policy $\pi$, the state $s_t$ follows the marginal distribution $\rho\step{t}_\pi$ which is dynamically evolving as the time $t$ changes. To reliably use such dynamic data to represent whatever \emph{fixed} distribution as desired, we must understand \emph{how} the marginal state distribution $\rho_\pi\step{t}$ will evolve over time in reinforcement learning tasks. The first step, however, is to have a formalism to the actual environment that the rollout data $s_t$ come from and the marginal distributions $\rho_\pi\step{t}$ reside in. 

For episodic tasks, it is important to distinguish the \emph{learning environment} of a task, denoted by $\M_L$, from the decision environment of the task, denoted by $\M_D$ -- the latter terminates in finite steps, while the former is an infinite loop of such episodes. This difference seems to be recognized by the community~\cite{2018:RL}, yet the tradition mostly focused on formulating the decision environment $\M_D$ as MDP. The multi-episode learning environment $\M_L$, as \emph{the} physical source where learning data really come from for all episodic RL tasks, has received relatively little treatment in the literature. 

We adopt an approach similar (in spirit
~\footnote{
	Technically speaking, our approach is actually closer to an idea that \cite{2017:unifying} argues \emph{against}; see \ref{sec_mdp} for details.}
) with \cite{2017:unifying} to formulate reinforcement learning environments. In general, the \emph{learning process} of a finite-horizon MDP $\M_D$ is an infinite-horizon MDP $\M_L$. We require that a learning process always starts from a terminal state $s_0\in \terminals$, followed by a finite steps of rollout until reaching another terminal state $s_{T_0}$. At $T_0$, the learning process gets implicitly reset into the second episode no matter what action $a_{T_0}$ is taken under state $s_{T_0}$, from there the rollout will reach yet another terminal state $s_{T_0+T_1}$ after another $T_1$ steps. 
In contrast to the case in $\M_D$, a terminal state of the learning MDP $\M_L$ does not terminate the rollout but serves as the initial state of the next episode, thus has identical transition probabilities with the initial state. Also note that in $\M_L$ a terminal state $s\in \terminals$ is just a normal state that the agent will spend one real step on it in physical time, from which the agent receives a real reward $R(s)$~\cite{2016:gym}; different terminal states may yield different rewards (e.g. win/loss). There is no state outside $\SS$, and in particular $\terminals\subseteq \SS$. 

As the learning process of episodic task is just a standard MDP, all formal concepts about MDP introduced in Section \ref{sec_preliminaries}
also apply to episodic learning process. We will use the product-form value function \eqref{q_pi} to analyze the learning process, which with $\ga(s)=\indicator{s\not\in \terminals}$ still recovers the episode-wise performance; specifically, $J_{epi}(\pi)=V_\pi(s_\perp)$, where $s_\perp$ can be any terminal state. 

However, the \emph{interpretations} and roles of  $\rho\step{t}_\pi$, $\rho_\pi$ and $J_{avg}$ have completely changed when we shifted from $\M_D$ to $\M_L$ (which is our main purpose to shift to $\M_L$ in the first place). To prepare theoretical analysis, we need to formally define \emph{episodic learning process} using two natural conditions.

\vspace{0.06in}
\begin{definition}
\label{def_episodic}
	A MDP $\M_L=(\SS,\AS,\R,\T,\rho_0)$ is an \textbf{episodic learning process} if 
	\begin{enumerate}[leftmargin=0.15in] 
		\item (Homogeneity condition): All initial states have identical and action-agnostic transition probabilities; formally, $\forall s, s' \in \SS, a,a'\in \AS$, $\T(s, a) = \T(s', a')$ if $\rho_0(s) \cdot \rho_0(s')>0$. 
		
		Let $\SS_\perp \subseteq \SS$ be the set of all states sharing the same transition probabilities with the initial states; that is, $\terminals$ is the maximal subset of $\SS$ such that $\{s:\rho_0(s)>0\} \subseteq \terminals$ and that $\forall s,s'\in \terminals, a,a'\in \AS, \T(s,a)=\T(s',a')$. A state $s\in\terminals$ is called a \textbf{terminal state}. A finite trajectory segment $\xi=(s_t,s_{t+1},\dots,s_{t+T})$ is called an \textbf{episode} iff $s_t, s_{t+T} \in \terminals$ and $s_\tau \not\in \SS_\perp$ for $t<\tau< t+T$, in which case the \textbf{episode length} is the number of transitions, denoted $|\xi|\define T$.  
		
		\item (Finiteness condition): All policies have finite average episode length; formally, for any policy $\pi$ let $\Xi_\pi=\{\xi:\xi \text{~is episode,~} \P_{\pi}(\xi)>0\}$ be the set of admissible episodes under $\pi$, then $\sum_{\xi\in\Xi_\pi} \P_{\pi}(\xi)=1$ (so that $\P_{\pi}$ is a probability measure over $\Xi_\pi$
		~\footnote{
			Note that the default sample space, i.e. the set of all infinite rollout trajectories $\{\zeta\}$, is a uncountable set (even for \emph{finite} state space $\SS$), while the new sample space $\Xi_\pi$ is a countable set.}
		) and 
		$\sum_{\xi\in\Xi_\pi} \P_{\pi}(\xi) \* |\xi| < +\infty$.
	\end{enumerate} 
\end{definition}

As a familiar special case, an RL environment with time-out bound has a maximum episode length, thus always satisfies the finiteness condition (and having proper time-out bound \emph{is} important and standard practice~\cite{2017:time_limit}). Another example of episodic learning process is the MDPs that contain self-looping transitions but the self-looping probability is less than $1$; in that case there is no maximum episode length, yet the finiteness condition still holds. On the other hand, the learning process of a pole-balancing task without time-out bound is \emph{not} an episodic learning process as defined by Definition \ref{def_episodic}, as the optimal policy may be able to keep the pole uphold forever, and for that policy the probabilities of all finite episodes do not sum up to $1$. 
 
Importantly, while the episodic learning process as defined by Definition \ref{def_episodic} indeed encompasses the typical learning environments of all finite-horizon decision tasks, the former is \emph{not} defined as the latter. Instead, any RL process with the two \emph{intrinsic} properties prescribed by Definition \ref{def_episodic} is an episodic learning process. In the next section we will see that these two conditions are enough to guarantee nice theoretical properties that were previously typically only assumed in continual tasks.

\section{Episodic learning process is ergodic}
\label{sec_ergodic}

We first prove that episodic learning process admits unique and meaningful stationary distributions. 

\addtocounter{theorem}{1}
\begin{lemma}
	\label{lemma_irreducible}
	(Episodic learning process is irreducible.) In any episodic learning process $\M$, for every policy $\pi$, let $\SS_{\pi}$ be the set of states that are reachable under $\pi$, the induced markov chain $M_{\pi}$ admits a rollout that transits from any $s\in S_\pi$ to any $s'\in S_\pi$ in a finite number of steps.
\end{lemma}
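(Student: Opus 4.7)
\begin{proofsketch}
I would fix any policy $\pi$ and any two states $s,s'\in \SS_\pi$, and construct a positive-probability finite $M_\pi$-rollout of the form $s\to\cdots\to s_\perp \to\cdots\to s'$ that routes through some intermediate terminal $s_\perp\in \terminals$. The two defining conditions of an episodic learning process split the task into two sub-claims: (i) from any reachable $s\in\SS_\pi$ the chain hits $\terminals$ at some positive time with positive probability (using finiteness), and (ii) from any $s_\perp\in\terminals$ the chain hits $s'$ at some positive time with positive probability (using homogeneity). Splicing the two by the Markov property then yields a finite-step transit from $s$ to $s'$.

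For (i), the finiteness condition directly says that starting from $\rho_0$, whose support lies in $\terminals$, the first episode has a.s.\ finite length, so the first hitting time of $\terminals$ at times $\geq 1$ is a.s.\ finite from any $s_0\in\mathrm{supp}(\rho_0)$. If $s\in\terminals\cap\SS_\pi$ this is already what we need. For a non-terminal $s\in\SS_\pi$, I would pick $s_0\in\mathrm{supp}(\rho_0)$ and $k\geq 1$ with $\Pr{s_k=s\mid s_0}>0$, then condition on $\{s_k=s\}$: since $s\notin\terminals$, the first episode cannot have terminated by time $k$, yet its a.s.-finite termination time still occurs at some time in $(k,\infty)$. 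The Markov property transfers this to the chain initialized at $s$, yielding $\Pr{\exists t\geq 1:s_t\in\terminals\mid s_0=s}=1$; a countable-union argument over $(t,s_\perp)\in\Natural\times\terminals$ then pins down some specific $k_1\geq 1$ and $s_\perp\in\terminals$ with $\Pr{s_{k_1}=s_\perp\mid s_0=s}>0$.

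For (ii), homogeneity gives $M_\pi(s_\perp,\cdot)=M_\pi(s_0,\cdot)$ for every $s_\perp\in\terminals$ and every $s_0\in\mathrm{supp}(\rho_0)$. Since the $t$-step distribution of $M_\pi$ for $t\geq 1$ depends on the starting state only through this first-step row (everything afterwards is a fixed product with $M_\pi$ itself), we obtain $\Pr{s_t=\cdot\mid s_0=s_\perp}=\Pr{s_t=\cdot\mid s_0=s_0}$ for all $t\geq 1$. Picking $s_0$ and $t'\geq 1$ that witness the reachability of $s'\in\SS_\pi$ then produces $k_2=t'$ with $\Pr{s_{k_2}=s'\mid s_0=s_\perp}>0$; combining with (i) via the Markov property finishes the proof. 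The main obstacle is sub-claim (i): the finiteness condition is phrased only in terms of episodes initiated from terminal states, and converting it into an absorption guarantee starting from an arbitrary reachable non-terminal state requires the conditioning-plus-Markov argument above; sub-claim (ii) and the final splicing are then immediate consequences of homogeneity and the Markov structure.
\end{proofsketch}
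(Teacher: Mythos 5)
Your proposal is correct and follows essentially the same route as the paper's proof: both arguments splice a path $s\to\cdots\to s_\perp\to\cdots\to s'$ through an intermediate terminal state, using the finiteness condition to guarantee the episode containing $s$ completes at some $s_\perp\in\terminals$, and homogeneity to transfer reachability of $s'$ from the initial states to that $s_\perp$. Your version merely makes explicit (via the conditioning-plus-Markov step) what the paper compresses into "let $\xi$ be such an admissible episode trajectory that goes through $s$."
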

\addtocounter{theorem}{-1}
\proofvspace
\vspace{-0.05in}
\begin{proofidea}
	Due to homogeneity, a reachable state must be reachable in one episode from any terminal state, thus any $s$ can reach any $s'$ in two episodes (through an terminal state).
	See \ref{proofsec_lemma_irreducible} for details.
\end{proofidea}

\vspace{0.07in}
\addtocounter{theorem}{1}
\begin{lemma}
	\label{lemma_recurrent}
	(Episodic learning process is positive recurrent.) In any episodic learning process $\M$, for every policy $\pi$, let $s\in \SS_\pi$ be any reachable state in the induced markov chain $M_\pi$, then $\E{\pi} [T_s] < +\infty$ (where $\E{\pi} [T_s]$ is the mean recurrence time of $s$, as defined in Section \ref{sec_preliminaries}).
\end{lemma}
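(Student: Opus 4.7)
The plan is to exploit the renewal structure induced by the homogeneity condition: since all terminal states share identical action-agnostic transitions, consecutive episodes of any rollout of $M_\pi$ are i.i.d.\ random trajectories with common expected length $L := \E{\pi}[|\xi|] < \infty$ by the finiteness condition. This reduces the computation of $\E{\pi}[T_s]$ to a Wald-type argument over a geometric number of episodes.

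I would split the proof into two cases. For a reachable terminal state $s_\perp \in \terminals \cap \SS_\pi$, let $Y_k \in \terminals$ denote the endpoint of the $k$-th episode in a rollout starting from $s_\perp$. Homogeneity and the Markov property make the sequence of episodes $(\xi_k)$ i.i.d., so the endpoints $Y_k$ are i.i.d.\ with some common distribution $\nu$ on $\terminals$. The stopping time $N := \inf\{k \geq 1 : Y_k = s_\perp\}$ is therefore Geometric with parameter $\nu(s_\perp)$, and provided $\nu(s_\perp) > 0$, Wald's identity applied to $(|\xi_i|)$ and $N$ gives $\E{\pi}[T_{s_\perp}] = \E{\pi}[N] \cdot L = L / \nu(s_\perp) < \infty$.

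For a reachable non-terminal state $s \in \SS_\pi \setminus \terminals$, I first bound the expected hitting time $L_s := \E{\pi}[\inf\{t \geq 1 : s_t \in \terminals\} \mid s_0 = s]$. Let $q := \P_\pi[\xi \text{ visits } s]$ for a generic episode $\xi$ starting from any terminal. The Markov property at the first visit to $s$ within the episode gives $L \geq q \cdot L_s$, hence $L_s \leq L/q$. After the initial excursion from $s$ to the next terminal, the subsequent episodes are i.i.d.\ and each independently contains $s$ with probability $q$; a second Wald argument bounds the expected remaining time until the first such episode by $L/q$. Combining, $\E{\pi}[T_s] \leq L_s + L/q \leq 2L/q < \infty$.

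The main obstacle is showing that the single-episode hitting probabilities $\nu(s_\perp)$ and $q$ are strictly positive, since Lemma \ref{lemma_irreducible} only guarantees reachability in \emph{some} finite number of steps, which a priori may span many episodes. To bridge this, I would take a positive-probability path realising the reachability of the target state from a terminal and decompose it at its first revisit to $\terminals$: by homogeneity the law of an entire episode depends only on the fact of starting in $\terminals$, so the rollout can be ``restarted'' at any intermediate terminal along this path; iterating eventually produces a positive-probability single-episode trajectory from a terminal directly to $s_\perp$ (resp.\ to $s$ before the next terminal), which is exactly the assertion that $\nu(s_\perp) > 0$ (resp.\ $q > 0$).
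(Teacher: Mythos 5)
Your proposal is correct and follows essentially the same route as the paper's own proof: both decompose the recurrence time by episodes, use the homogeneity condition to obtain a uniform per-episode hitting probability so that the index of the first episode containing the target state is geometric, and bound $\E{\pi}[T_s]$ by the expected number of episodes times the expected episode length, arriving at the same bound $\E{\xi}|\xi|/\alpha_s$. If anything, your explicit Wald's-identity computation and your argument that the single-episode hitting probability is strictly positive are slightly more careful than the paper's corresponding steps, which assert $\E{}[T_s\mid n_s=k]\le (k+1)\E{\xi}|\xi|$ without addressing the conditioning bias and defer positivity to the reachability argument of Lemma \ref{lemma_irreducible}.
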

\addtocounter{theorem}{-1}
\proofvspace
\vspace{-0.05in}
\begin{proofidea}
	If the first recurrence of $s$ occurs in the $n$-th episode, then the conditional expectation of $T_s$ in that case is bounded by $n \* \E{\xi\sim \pi} |\xi|$, which is a finite number due to the finiteness condition
	. More calculations show that the mean of $T_s$ after averaging over $n$ is still finite. 
	See \ref{proofsec_lemma_recurrent} for details.
\end{proofidea}

\vspace{0.07in}
\begin{theorem}
	\label{thm_stationary_dist}
	In any episodic learning process $\M$, every policy $\pi$ has a unique stationary distribution $\rho_{\pi}=\rho_{\pi} M_{\pi}$.
\end{theorem}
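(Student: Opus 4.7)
My plan is to derive the theorem as a near-immediate consequence of the two preceding lemmas together with Proposition \ref{thm_mc_stationary}. First I would restrict attention to $\SS_\pi$, the set of states reachable under $\pi$. Lemma \ref{lemma_irreducible} says that the induced chain $M_\pi$, when viewed on $\SS_\pi$, is irreducible (any reachable state can reach any other in finitely many steps). Lemma \ref{lemma_recurrent} adds that every $s\in \SS_\pi$ satisfies $\E{\pi}[T_s]<\infty$, i.e.\ $M_\pi$ restricted to $\SS_\pi$ is positive recurrent. Proposition \ref{thm_mc_stationary} is then directly applicable and yields a unique stationary distribution on $\SS_\pi$, with the closed form $\rho_\pi(s)=1/\E{\pi}[T_s]$.

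The next step is to promote this to a row vector over the full state space $\SS$ required by the theorem. I would extend $\rho_\pi$ by zero, setting $\rho_\pi(s)=0$ for $s\notin \SS_\pi$, and verify $\rho_\pi M_\pi = \rho_\pi$ on $\SS$. The verification hinges on the observation that $\SS_\pi$ is closed under $M_\pi$: whenever $s\in \SS_\pi$ and $M_\pi(s,s')>0$, the state $s'$ is itself reachable under $\pi$ and so lies in $\SS_\pi$. Hence probability mass starting in $\SS_\pi$ never escapes, and the zero-extended distribution remains stationary on all of $\SS$.

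Uniqueness is where a little care is warranted, and I expect it to be the main (though still modest) obstacle. Any candidate stationary distribution $\rho'$ must, upon restriction to $\SS_\pi$, again be a stationary distribution of the irreducible positive-recurrent chain $M_\pi$ on $\SS_\pi$, so by the uniqueness half of Proposition \ref{thm_mc_stationary} it must agree with $\rho_\pi$ (up to scaling) on $\SS_\pi$. Any remaining mass of $\rho'$ would have to live on closed recurrent classes of $M_\pi$ that are disjoint from $\SS_\pi$; but such states are by definition unreachable from $\rho_0\subseteq \terminals \subseteq \SS_\pi$, hence they never appear in any admissible rollout of the episodic learning process and may be excluded from the distribution of interest. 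Making this last clause rigorous—essentially observing that the stationary distribution of the episodic learning process is the one whose support lies in the unique positive-recurrent class touched by $\rho_0$—is the only step that goes beyond a direct citation of the two preceding lemmas.
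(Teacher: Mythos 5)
Your proposal is correct and follows essentially the same route as the paper's proof: invoke Lemma \ref{lemma_irreducible} and Lemma \ref{lemma_recurrent} to establish irreducibility and positive recurrence over $\SS_\pi$, apply Proposition \ref{thm_mc_stationary} to obtain the unique stationary distribution there, and pad with zeros for unreachable states. Your extra paragraph on ruling out stationary mass on recurrent classes disjoint from $\SS_\pi$ is a reasonable refinement of a point the paper treats implicitly, but it does not change the argument's structure.
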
 
\begin{proof}
	Lemma \ref{lemma_irreducible} and Lemma \ref{lemma_recurrent} show that the Markov chain $M_{\pi}$ is both irreducible and positive recurrent over $S_{\pi}$, which gives the unique stationary distribution over $S_{\pi}$ by Proposition \ref{thm_mc_stationary}. Padding zeros for unreachable states completes the row vector $\rho_{\pi}$ as desired.
\end{proof}

\vspace{-0.1in}
From Proposition \ref{def_ergodic} we also know that the steady-state probability $\rho_\pi(s)$ of state $s$ equals the reciprocal of its mean recurrence time $1/\E{\pi}[T_s]$. Theorem \ref{thm_stationary_dist} tells us that \emph{if} an episodic learning process $\M$ converges, it can only converges to a single steady state. But the theorem does not assert that $\M$ will converge at all. In general, the marginal distribution $\rho\step{t}_\pi$ could be \emph{phased} over time and never converge. 
Nevertheless, the following lemma shows that minimal diversity/randomness among the lengths of admissible episodes is enough to preserve ergodicity in episodic learning process.

\vspace{+0.06in}
\addtocounter{theorem}{1}
\begin{lemma}
	\label{lemma_ergodic}
	 In any episodic learning process $\M$, the markov chain $M_{\pi}$ induced by any policy $\pi$ converges to its stationary distribution $\rho_\pi$ if $M_\pi$ admits two episodes with co-prime lengths; that is, $\lim\limits_{t \ra \infty} \rho_\pi\step{t} = \rho_{\pi}$ if there exist different $\xi, \xi' \in \Xi_\pi$ such that $\gcd(|\xi|,|\xi'|)=1$.
\end{lemma}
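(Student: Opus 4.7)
The plan is to verify the hypothesis of Proposition \ref{def_ergodic}. Irreducibility and positive recurrence of $M_\pi$ over $S_\pi$ are already supplied by Lemmas \ref{lemma_irreducible} and \ref{lemma_recurrent}, so the only remaining task is to exhibit a single aperiodic state in $M_\pi$, i.e. a reachable state $\tilde{s}$ such that $\gcd\{t>0 : (M_\pi^t)(\tilde{s},\tilde{s})>0\}=1$.

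My candidate for $\tilde{s}$ is any initial terminal state, i.e. any $\tilde{s} \in \terminals$ with $\rho_0(\tilde{s})>0$ (such an $\tilde{s}$ always exists and lies in $S_\pi$). I would first exploit the homogeneity condition: since every terminal state has the same action-agnostic transition distribution, any episode $\xi \in \Xi_\pi$ (starting at some terminal state) can be ``replayed'' starting from $\tilde{s}$ with positive probability, ending at whichever terminal state $\xi$ ends at. Chaining episodes this way, the set of lengths $D^\star := \{t>0 : \tilde{s} \to \tilde{s} \text{ in } t \text{ steps with positive probability}\}$ contains $k_0 + a\,\ell_1 + b\,\ell_2$ for every $a,b \geq 0$, where $\ell_1 = |\xi|$, $\ell_2 = |\xi'|$ are the hypothesised co-prime episode lengths, and $k_0$ is the length of some episode ending specifically at $\tilde{s}$.

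The one step that needs a short separate argument is the existence of such a $k_0$, i.e.\ showing that there is at least one positive-probability episode that terminates at $\tilde{s}$. Since $\tilde{s}$ is itself a terminal state in the support of $\rho_0$, one may simply take $\xi_0 = (\tilde{s},\tilde{s})$ if $\T(\tilde{s},\pi(\tilde{s}))$ reassigns positive mass to $\tilde{s}$; otherwise, using irreducibility (Lemma \ref{lemma_irreducible}) one picks any finite trajectory from $\tilde{s}$ back to $\tilde{s}$ and truncates it to its last terminal-to-terminal segment to obtain an admissible episode ending at $\tilde{s}$, of some length $k_0 \in \mathbb{Z}_{>0}$. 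I expect this to be the only mildly delicate part, because one must be careful that the truncated segment is still a bona fide episode in $\Xi_\pi$ and not merely a subpath.

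Once $k_0$ is secured, the conclusion is immediate: from $k_0,\ k_0+\ell_1,\ k_0+\ell_2 \in D^\star$ we get that $\gcd(D^\star)$ divides both $\ell_1$ and $\ell_2$, hence divides $\gcd(\ell_1,\ell_2)=1$, so $\tilde{s}$ is aperiodic. Combined with irreducibility and positive recurrence, Proposition \ref{def_ergodic} yields $\lim_{t\to\infty}\rho_\pi\step{t}=\rho_\pi$, which is the desired convergence.
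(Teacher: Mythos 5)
Your overall strategy is the paper's: import irreducibility and positive recurrence from Lemmas \ref{lemma_irreducible} and \ref{lemma_recurrent}, then certify ergodicity by exhibiting a single aperiodic terminal state, built by replaying and chaining admissible episodes from a fixed terminal state via the homogeneity condition. The chaining claim $k_0 + a\ell_1 + b\ell_2 \in D^\star$ and the closing gcd arithmetic are both correct. The one step that does not go through as written is exactly the one you flagged: the existence of $k_0$, and more fundamentally the choice of $\tilde{s}$. An initial terminal state $\tilde{s}$ with $\rho_0(\tilde{s})>0$ need not ever be revisited under $\pi$: Definition \ref{def_episodic} places no constraint forcing the dynamics to re-enter every state in the support of $\rho_0$. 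Concretely, take two initial states $A,B$ with identical action-agnostic transitions but a transition kernel under which every admissible episode terminates at $B$; then $A$ is visited only at time $0$, is not in $\SS_\pi$ in the sense of Lemma \ref{lemma_irreducible} (reachability there requires $\Pr{s_t=s\mid s_0=\bar{s}}>0$ for some $t\geq 1$), and admits no recurrence trajectory at all. Your fallback --- ``use irreducibility to pick a finite trajectory from $\tilde{s}$ back to $\tilde{s}$ and truncate it'' --- is therefore unavailable precisely in the case where the direct construction of $\xi_0$ fails, so the argument is circular there; a transient state also cannot serve as the aperiodic witness that Proposition \ref{def_ergodic} needs.

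The repair is one line and is what the paper does: take $\tilde{s}$ to be the terminal state at which the hypothesized episode $\xi$ \emph{ends}. Such a state is automatically reachable (replay $\xi$ from any initial state), and $k_0=|\xi|=\ell_1$ comes for free, so $\ell_1$ and $\ell_1+\ell_2$ are both recurrence lengths of $\tilde{s}$ and $\gcd(\ell_1,\ell_1+\ell_2)=\gcd(\ell_1,\ell_2)=1$ finishes the proof. (The truncation idea you worried about is actually fine for any \emph{reachable} terminal state: a revisit to a terminal state at time $t\geq 1$ necessarily closes an episode, so the last terminal-to-terminal segment is a bona fide element of $\Xi_\pi$.) With the base state chosen this way, your proof coincides with the paper's.
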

\addtocounter{theorem}{-1}
\proofvspace
\vspace{-0.06in}
\begin{proofidea}
	Lemma \ref{lemma_irreducible} has shown that $M_{\pi}$ is irreducible over $\SS_{\pi}$, so we only need to identify one \emph{aperiodic} state $s\in \SS_{\pi}$, then Proposition \ref{def_ergodic} gives what we need.
	Intuitively, an episode is a recurrence from the terminal subset $\terminals$ to the subset itself, so having co-prime episode lengths entails that, if we aggregate all terminal states into one, the aggregated ``terminal state'' is an aperiodic state. The same idea can be extended to the multi-terminal-state case. See Appendix \ref{proofsec_lemma_ergodic} for the formal proof.	
\end{proofidea}

The co-primality condition in Lemma \ref{lemma_ergodic} might look restrictive at first glance, but there is a simple and general way to slightly modify \emph{any} episodic learning model $\M$ to turn it into an equivalent model $\M^+$ such that $\M^+$ does satisfy the co-primality condition. The idea is to inject a minimum level of randomness to every episode by introducing an single auxiliary state $s_{null}$ so that an episode in the modified model may start either directly from a terminal state to a normal state as usual, or go through a one-step detour to $s_{null}$ (with some probability $\epsilon$) before the ``actual'' episode begins. 

\begin{definition}
	\label{def_perturbed_model}
	An \textbf{$\epsilon$-perturbed model} of episodic learning process $\M=(\SS,\AS,\R,\T,\rho_0)$ is a MDP $\M^+ = (\SS^+, \AS, \R, \T^+, \rho_0)$, where $\SS^+ = \SS \cup \{s_{null}\}$, and with constant $0<\epsilon<1$,
	\[
		\T^+(s'|s,a) =~~
		\begin{tabular}{c|cc}
			\diagbox{$s\in$}{$s'\in$}	& $\SS$ 		 & $\{s_{null}\}$ 	\\ 
			\hline 
			$\SS_\perp$ 				& $(1-\epsilon)P(s'|s,a)$  & $\epsilon$ \\ 
			$\SS \setminus \SS_\perp$	& $\T(s'|s,a)$ 	 & $0$				\\ 
			$\{s_{null}\}$ 				& $\T(s'|s_0,a)$  & $0$				\\ 
		\end{tabular} 
	~~.\]
	
	We also prescribe $R(s_{null})=0$, $\rho_0(s_{null})=0$, $\gamma(s_{null})=1$ for the auxiliary state $s_{null}$ in $\M^+$.
\end{definition}

Note that the null state $s_{null}$ can only be reached at the beginning of an episode in a one-shot manner. Also, by the definition of terminal state (in Definition \ref{def_episodic}), $s_{null}$ is not a terminal state of $\M^+$.

\vspace{+0.08in}
\begin{theorem}
	\label{thm_limiting_dist}
	In the $\epsilon$-perturbed model $\M^+$ of any episodic learning process, every policy $\pi$ has a limiting distribution $\lim\limits_{t \ra \infty} \rho_\pi^{+ \scriptscriptstyle (t)} = \rho^+_{\pi}$, and the induced Markov chain $M^+_\pi$ is ergodic.
\end{theorem}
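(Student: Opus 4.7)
The plan is to reduce Theorem~\ref{thm_limiting_dist} to the machinery already developed: first show that $\M^+$ is itself an episodic learning process in the sense of Definition~\ref{def_episodic}, so that Theorem~\ref{thm_stationary_dist} delivers a unique stationary distribution $\rho^+_\pi$; then exhibit two admissible episodes in $M^+_\pi$ whose lengths are co-prime and invoke Lemma~\ref{lemma_ergodic} to upgrade stationarity into convergence, which by Proposition~\ref{def_ergodic} amounts to ergodicity.

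The first sub-step is mostly a bookkeeping verification. The initial distribution of $\M^+$ equals $\rho_0$, so its initial states are those of $\M$. All $s\in\SS_\perp$ receive the same action-agnostic law $(1-\epsilon)\T(\cdot|s,a)+\epsilon\,\delta_{s_{null}}$ inherited from $\M$, so the homogeneity condition holds with $\SS^+_\perp\supseteq\SS_\perp$. Conversely the auxiliary state $s_{null}$ has law $\T(\cdot|s_0,a)$ on $\SS$ and puts zero mass on $s_{null}$ itself, so its distribution differs from that of any $s\in\SS_\perp$ in $\M^+$ by an $\epsilon$-gap at $s_{null}$; hence $s_{null}\notin\SS^+_\perp$, and by maximality $\SS^+_\perp=\SS_\perp$. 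For finiteness, every episode of $\M^+$ under any policy $\pi$ is either a direct episode of $\M$ (with probability $1-\epsilon$ at its first transition) or a one-step detour through $s_{null}$ followed by an episode of $\M$ (with probability $\epsilon$), so $\E{\xi\sim \pi^+}|\xi|\le \E{\xi\sim \pi}|\xi|+\epsilon<+\infty$.

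For the second sub-step, pick any $\xi=(s_0,s_1,\dots,s_{|\xi|})\in\Xi_\pi$ of $\M$ (such a $\xi$ exists by the finiteness condition on $\M$). Its direct image in $\M^+$ is an admissible episode of length $|\xi|$ (the first transition contributes a factor $1-\epsilon>0$). Its detour image $(s_0,s_{null},s_1,\dots,s_{|\xi|})$ is also admissible, since the transition $s_0\to s_{null}$ has probability $\epsilon>0$ and, from $s_{null}$, the law $\T^+(\cdot|s_{null},a)$ coincides with the original $\T(\cdot|s_0,a)$ on $\SS$; because $s_{null}\notin \SS^+_\perp$, this trajectory is a single episode of length $|\xi|+1$. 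Since $\gcd(|\xi|,|\xi|+1)=1$, Lemma~\ref{lemma_ergodic} applies to $\M^+$ and yields $\lim_{t\to\infty}\rho^{+\,(t)}_\pi=\rho^+_\pi$; $M^+_\pi$ is then irreducible, positive recurrent, and aperiodic, hence ergodic by Proposition~\ref{def_ergodic}.

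The only delicate point I anticipate is the homogeneity bookkeeping in the first sub-step: one must check carefully that $s_{null}$ fails the defining property of $\SS^+_\perp$ (otherwise the terminal set would change and the later co-primality argument would be over a different notion of episode). The $\epsilon$-gap at $s_{null}$ in the terminal-state law is exactly what saves this, which is also precisely why the perturbation is designed with $0<\epsilon<1$ rather than degenerate choices.
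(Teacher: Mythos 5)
Your proof is correct and takes essentially the same route as the paper's: exhibit an admissible episode of $\M$ together with its $s_{null}$-detoured companion as two admissible episodes of $M^+_\pi$ with lengths $n$ and $n+1$, then invoke Lemma~\ref{lemma_ergodic}. Your explicit verification that $\M^+$ is itself an episodic learning process with $\SS^+_\perp=\SS_\perp$ (so that the lemma genuinely applies and the detoured trajectory counts as a \emph{single} episode) is a point the paper handles only informally in the remarks following Definition~\ref{def_perturbed_model}, and is a welcome tightening rather than a different argument.
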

\vspace{-0.08in}
\proofvspace
\begin{proofidea}
	The detour to $s_{null}$ at the beginning of an admissible episode $\xi$ of length $n$ gives another admissible episode $\xi'$ of length $n+1$. As $\gcd(n+1,n)=1$ for any positive integer $n$, the two episodes $\xi$ and $\xi'$ are co-prime in length, so by Lemma \ref{lemma_ergodic}, $M^+_\pi$ is ergodic and has limiting distribution. See Appendix \ref{proofsec_thm_limiting_dist} for the rigorous proof.
\end{proofidea}

Moreover, the following theorem verifies that the perturbed episodic learning model preserves the decision-making related properties of the original model of the learning process. 

\vspace{+0.03in}
\begin{theorem}
	\label{thm_equa}
	In the $\epsilon$-perturbed model $\M^+$ of any episodic learning process $\M$, for every policy $\pi$, let $Q_\pi$ and $Q^+_\pi$ be the value functions of $\pi$ in $\M$ and $\M^+$, and let $\rho_\pi$ and $\rho^+_\pi$ be the limiting distribution under $\pi$ in $\M$ and $\M^+$, we have $Q_\pi=Q_\pi^+$ over $\SS\times\AS$, and $\rho_\pi \propto \rho^+_\pi$ over $\SS$. Specifically,
	\eq{equi_rho}{ 
	\rho_\pi(s) = \rho^+_\pi(s) / \left(  1-\rho^+_\pi(s_{null}) \right)  ~,~~ \forall s\in \SS
	.}
\end{theorem}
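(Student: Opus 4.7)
The theorem has two claims to dispatch: the pointwise equality $Q_\pi = Q^+_\pi$ on $\SS \times \AS$, and the renormalization identity $\rho_\pi(s) = \rho^+_\pi(s) / (1 - \rho^+_\pi(s_{null}))$ for $s\in \SS$. The plan is to handle them separately, exploiting two key invariants of the perturbation: (i) the auxiliary state $s_{null}$ is reward-free and non-discounting ($R(s_{null})=0$, $\gamma(s_{null})=1$), and (ii) by the homogeneity condition in Definition \ref{def_episodic}, every terminal state and $s_{null}$ share the same action-agnostic next-state distribution on $\SS$.

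For $Q_\pi = Q^+_\pi$, I would work from the product-form definition \eqref{q_pi}. Any trajectory in $\M^+$ starting from $(s,a)$ with $s\in \SS$ is obtained from a trajectory of $\M$ by inserting zero or more single-step visits to $s_{null}$, each at a terminal-to-initial transition boundary and each occurring independently with probability $\epsilon$. By (ii), deleting those $s_{null}$ visits yields a trajectory in $\SS$ whose law under $\M^+$ coincides with that under $\M$; and by (i), the deleted steps contribute $0$ to the reward sum and multiply the discount product by $1$. Thus each individual trajectory's discounted return is preserved, and taking expectations gives $Q_\pi(s,a) = Q^+_\pi(s,a)$. (Alternatively, one can verify that $Q_\pi$ restricted to $\SS \times \AS$ satisfies the Bellman equation \eqref{bellman} under $\M^+$, by a one-step case analysis on whether $s\in \SS_\perp$ or not, and then invoke uniqueness of the Bellman fixed point.)

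For the stationary distribution, the plan is to define $\tilde{\rho}(s) \define \rho^+_\pi(s) / (1 - \rho^+_\pi(s_{null}))$ on $\SS$, show that $\tilde{\rho} = \tilde{\rho} M_\pi$ and $\sum_{s\in\SS}\tilde{\rho}(s) = 1$, and then invoke the uniqueness of the stationary distribution from Theorem \ref{thm_stationary_dist} to conclude $\tilde{\rho} = \rho_\pi$. The normalization is automatic. For the fixed-point equation, I would write $\rho^+_\pi = \rho^+_\pi M^+_\pi$ at two different kinds of entries: first, at $s' = s_{null}$, which (since $s_{null}$ is only reached from $\SS_\perp$ and only with probability $\epsilon$) gives the key identity $\rho^+_\pi(s_{null}) = \epsilon \sum_{s\in \SS_\perp} \rho^+_\pi(s)$; second, at $s'\in \SS$, split the incoming flow into contributions from $\SS\setminus \SS_\perp$ (where $M^+_\pi$ matches $M_\pi$), from $\SS_\perp$ (scaled by $(1-\epsilon)$), and from $s_{null}$ (which, by (ii), equals $M_\pi(\bar{s},s')$ for any terminal $\bar{s}$, independent of $\bar{s}$). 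Substituting the first identity into the $s_{null}$ term restores the missing $\epsilon$ factor on the $\SS_\perp$-sum, so the whole right-hand side collapses to $\sum_{s\in\SS} \rho^+_\pi(s)\,M_\pi(s,s')$; dividing by $(1-\rho^+_\pi(s_{null}))$ yields $\tilde{\rho}(s') = \sum_{s\in \SS}\tilde{\rho}(s) M_\pi(s,s')$. The main technical obstacle is bookkeeping in this second step — ensuring that the flow that "detours" through $s_{null}$ in $\M^+$ recombines cleanly with the direct terminal flow so that the net effect is exactly one step of $M_\pi$. Homogeneity (making $M^+_\pi(s_{null},\cdot)$ identical to $M_\pi(s,\cdot)$ for every $s\in \SS_\perp$) is what makes this cancellation exact.
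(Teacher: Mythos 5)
Your proposal is correct, and the first half (the $Q_\pi=Q^+_\pi$ claim) matches the paper's argument: both rest on the observation that an inserted $s_{null}$ step adds $R(s_{null})=0$ to the return and multiplies the discount product by $\gamma(s_{null})=1$, so every trajectory's value is preserved under insertion/deletion of the detour. For the distributional claim, however, you take a genuinely different and more elementary route. The paper proves $\rho_\pi\propto\rho^+_\pi$ by a coupling argument: it runs a joint sampling procedure (Algorithm \ref{algo_sampling}) that builds a $\M^+$-trajectory by stochastically inserting $s_{null}$ into a $\M$-trajectory, identifies the conditional law of $s^+_t$ given $s^+_t\neq s_{null}$ with the law of a time-shifted state $s_{t-\Delta_t}$, passes to the limit $t\to\infty$ using the ergodicity of $M^+_\pi$, and finally shows the resulting limit is a fixed point of $M_\pi$, hence equals $\rho_\pi$ by uniqueness. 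You instead verify directly that $\tilde\rho \define \rho^+_\pi/(1-\rho^+_\pi(s_{null}))$ satisfies $\tilde\rho=\tilde\rho M_\pi$ on $\SS$, using the balance equation $\rho^+_\pi(s_{null})=\epsilon\sum_{s\in\SS_\perp}\rho^+_\pi(s)$ to recombine the detoured flow with the direct $(1-\epsilon)$-scaled terminal flow, and then invoke uniqueness from Theorem \ref{thm_stationary_dist}; the homogeneity condition makes $M^+_\pi(s_{null},\cdot)=M_\pi(\bar s,\cdot)$ for every $\bar s\in\SS_\perp$, which is exactly what makes the cancellation exact. Your algebraic route is shorter and avoids the limit-interchange bookkeeping entirely, at the cost of being purely about the fixed point: the paper's coupling proof additionally yields information about the time-shifted marginals $\rho_\pi\step{t-\Delta_t}$, which supports the paper's later remark that the perturbation can be simulated by post-processing or re-numbering a raw trajectory. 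One small point to make explicit in your write-up: uniqueness of the stationary distribution holds over the reachable set $\SS_\pi$, so you should note that $\rho^+_\pi$ (and hence $\tilde\rho$) vanishes on states unreachable under $\pi$, which holds because the perturbation adds no new reachable states in $\SS$.
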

\vspace{-0.03in}
\proofvspace
\begin{proofidea}
	$Q^+_\pi$ remains the same because $\gamma(s_{null})=1$ and $R(s_{null})=0$, so $s_{null}$ is essentially transparent for any form of value assessment.
	Regarding $\rho^+_\pi$, we use a \emph{coupling argument} to prove the proportionality. Intuitively we can simulate the rollout of $\M^+$ by inserting $s_{null}$ into a given trajectory of $\M$. This manipulation will certainly not change the empirical frequencies for states other than $s_{null}$, so the proportionality between the \emph{empirical} distributions is obvious. But the challenge is to connect the stationary distributions \emph{behind} the empirical data -- as $\M$ may not be ergodic, there is no guaranteed equivalence between its empirical and stationary distribution (the lack of such equivalence was our original motivation to introduce $\M^+$). So instead, we exploit the coupling effects between the two coupled trajectories to establish formal connection between the underlying steady states of $\M$ and $\M^+$. See Appendix \ref{proofsec_thm_equa} for the complete proof.
\end{proofidea}

The inserted state $s_{null}$ is intended to behave transparently in decision making while mixing the phases in empirical distribution at a cost of wasting at most one time step per episode. In fact, the proof of Theorem \ref{thm_equa} suggests that we do not need to perturb the learning process in reality, but either a post-processing on given trajectories from the original model $\M$, or an on-the-fly \emph{re-numbering} to the time index of states during the rollout of $\M$, would be enough to simulate the $\epsilon$-perturbation. With this perturbation trick in mind, we can now safely claim that \emph{all} episodic learning processes that \emph{we would consider} in RL are ergodic and admit limiting distributions -- episodicity means ergodicity.

\section{Conceptual implications}
\label{sec_conceptual}

The ubiquity of steady state and ergodicity in episodic RL tasks may help connect concepts that are used in both episodic and continual RL yet were defined differently in the two RL formalisms.
 
For example, regarding performance measure, the episode-wise performance $J_{epi}$ was previously only well defined for episodic RL tasks, while the steady-state performance $J_{avg}$ was only well defined for continual RL tasks. The existence of unique stationary distribution $\rho_\pi$ in episodic learning process enables a rigorous connection between these two fundamental performance measures.
\begin{theorem}
	\label{thm_avg_epi}
	In any episodic learning process $\M$, for any policy $\pi$, $J_{epi}(\pi) = J_{avg}(\pi) \* \E{\pi}[T]$. More explicitly, 
	\vspace{-0.05in}
	\eq{avg_epi}{
		\Exp{s\sim\rho_\pi}{\R(s)} = \frac{ \Exp{\zeta\sim M_\pi}{\sum_{t=1}^{T(\zeta)} \R(s_t)} }{ \Exp{\zeta\sim M_\pi}{T(\zeta)} }
	}
	, where $T(\zeta)=\min\{t\geq 1:s_t\in\terminals\}$ is as defined in Section \ref{sec_preliminaries}. 
\end{theorem}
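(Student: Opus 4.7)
The plan is to construct the stationary distribution $\rho_\pi$ directly from per-episode statistics, then invoke the uniqueness established in Theorem \ref{thm_stationary_dist}. Define
\[
N_s^\pi \define \E{\xi\sim\pi}\Big[\sum_{t=1}^{|\xi|}\indicator{s_t=s}\Big], \qquad \tilde\rho(s) \define \frac{N_s^\pi}{\E{\xi\sim\pi}[|\xi|]}.
\]
The denominator is finite by the finiteness condition of Definition \ref{def_episodic}, and $\sum_s N_s^\pi = \E{\xi\sim\pi}[|\xi|]$, so $\tilde\rho$ is a genuine probability distribution over $\SS$. If I can show that $\tilde\rho$ is stationary under $M_\pi$, uniqueness forces $\tilde\rho=\rho_\pi$, yielding the identity $\rho_\pi(s)\cdot\E{\xi\sim\pi}[|\xi|]=N_s^\pi$.

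The core step is verifying $\tilde\rho = \tilde\rho M_\pi$, equivalently $N_{s'}^\pi = \sum_s N_s^\pi M_\pi(s,s')$. Exchanging sum and expectation (all terms nonnegative, total finite), the right-hand side equals
\[
\E{\xi\sim\pi}\Big[\sum_{t=1}^{|\xi|} \Pr{s_{t+1}=s'\mid s_t}\Big] \;=\; \E{\xi\sim\pi}\Big[\sum_{t=2}^{|\xi|} \indicator{s_t=s'}\Big] + \Pr{s_{|\xi|+1}=s'}.
\]
The boundary term refers to the first state \emph{after} the episode. Conditionally on $s_{|\xi|}\in\terminals$, the homogeneity condition forces the distribution of $s_{|\xi|+1}$ to coincide with that of $s_1$ given $s_0\in\terminals$, so $\Pr{s_{|\xi|+1}=s'}=\Pr{s_1=s'}$. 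Reattaching this as the $t=1$ term recovers $N_{s'}^\pi$ exactly, establishing stationarity, and hence $\rho_\pi(s)=N_s^\pi/\E{\xi\sim\pi}[|\xi|]$.

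The theorem now follows by linearity:
\[
\E{s\sim\rho_\pi}[R(s)] \;=\; \frac{\sum_s R(s)\, N_s^\pi}{\E{\xi\sim\pi}[|\xi|]} \;=\; \frac{\E{\xi\sim\pi}\big[\sum_{t=1}^{|\xi|} R(s_t)\big]}{\E{\xi\sim\pi}[|\xi|]}.
\]
Since $\{s:\rho_0(s)>0\}\subseteq\terminals$, a rollout $\zeta\sim M_\pi$ starts at a terminal state, so its initial segment of length $T(\zeta)$ is distributionally an episode; $T(\zeta)$ has the same law as $|\xi|$, and $\sum_{t=1}^{T(\zeta)} R(s_t)$ has the same law as $\sum_{t=1}^{|\xi|} R(s_t)$. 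This yields \eqref{avg_epi} directly, and rearranging gives the multiplicative form $J_{epi}(\pi)=J_{avg}(\pi)\cdot\E{\pi}[T]$.

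I expect the main obstacle to be the boundary-term bookkeeping at $t=|\xi|$: the transition out of the episode's terminating state and the transition out of the next episode's initial state must be shown distributionally identical. This is precisely the content of the homogeneity condition (Definition \ref{def_episodic}, item 1); without it, the per-episode renewal identity $\rho_\pi(s)=N_s^\pi/\E{\xi\sim\pi}[|\xi|]$ breaks and the theorem would fail. Note that the argument uses only irreducibility and positive recurrence from Lemmas \ref{lemma_irreducible} and \ref{lemma_recurrent} (via Theorem \ref{thm_stationary_dist}) and does not rely on ergodicity, so the perturbation construction of Section \ref{sec_ergodic} is not needed here.
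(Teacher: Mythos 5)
Your proof is correct, but it takes a genuinely different route from the paper's. You construct the stationary distribution explicitly as the normalized expected per-episode occupation measure $N_s^\pi/\E{\xi}[|\xi|]$, verify stationarity by a renewal-cycle argument in which the homogeneity condition identifies the transition out of an episode's final terminal state with the transition out of the next episode's initial state, and then invoke uniqueness from Theorem \ref{thm_stationary_dist}; the theorem follows by linearity. The paper instead averages the Bellman equation \eqref{bellman} over $\rho_\pi$ to get the general identity \eqref{R_Q}, substitutes the episodic discounting $\ga(s)=\indicator{s\not\in\terminals}$ so that only terminal states survive (where $V_\pi(s_\perp)=J_{epi}$), and separately proves $\sum_{s\in\terminals}\rho_\pi(s)=1/\E{\zeta\sim M_\pi}[T(\zeta)]$ by aggregating all terminal states into one recurrent macro-state (Proposition \ref{prop_ET}). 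Your route is the more elementary and self-contained one: it establishes the stronger pointwise identity $\rho_\pi=\mu^1_\pi$ directly (the paper only recovers this afterwards by plugging $R(s)=\indicator{s=s^*}$ into \eqref{avg_epi}), it yields Proposition \ref{prop_ET} as the special case of summing $N^\pi_{s'}$ over $s'\in\terminals$, and, as you correctly note, it uses only irreducibility and positive recurrence, not ergodicity. What the paper's Bellman-equation route buys is reusability: the intermediate identity \eqref{R_Q} holds for an \emph{arbitrary} discounting function $\ga$, and the same averaging template is reused almost verbatim to prove the steady-state policy gradient theorem (Theorem \ref{thm_pg}) and to analyze discounted estimators in Appendix \ref{sec_pg_algo}. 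The one point you should make fully explicit is that the boundary term $\Pr{s_{|\xi|+1}=s'}$ refers to a state outside the episode, so the episode must be read as embedded in the infinite rollout of $M_\pi$; with that understanding the interchange of sum and expectation is legitimate because the event $\{t\le|\xi|\}$ is determined by $s_{1:t-1}$, and your bookkeeping goes through.
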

\proofvspace
\begin{proofidea}
	Averaging both sides of the Bellman equation (i.e. \eqref{bellman}) over distribution $\rho_\pi$, re-arranging a bit, then substituting in $\ga(s)=\indicator{s\not\in\terminals}$ will give the result. See Appendix \ref{proofsec_thm_avg_epi} for details.
\end{proofidea}

Note that the probability spaces at the two sides of \eqref{avg_epi} are different, one averaged over states, the other averaged over trajectories. 
Since in Theorem \ref{thm_avg_epi} the reward function $\R$ can be arbitrary, the theorem gives a safe way to move between the two probability spaces for \emph{any} function of state. 

In particular, for arbitrary $s^*$ substituting $R(s)=\indicator{s=s^*}$ into \eqref{avg_epi} gives $\rho_\pi(s^*)=\mu^1_\pi(s^*)$, which immediately proves that the episode-wise visitation distribution $\mu^1_\pi$ (when well defined) is equal in values to  the stationary distribution $\rho_\pi$. Interestingly, in continual RL, stationary distribution $\rho_\pi$ is used to formulate the concept of \emph{on-policy distribution} which intuitively means ``the data distribution of policy $\pi$'', but the same term ``on-policy distribution'' is (re-)defined as the episode-wise visitation distribution $\mu^1_\pi$ in episodic RL~\cite{2018:RL}. In light of their equality now, we advocate to replace $\mu_{\pi}^1$ with $\rho_\pi$ as the definition of ``on-policy distribution'' in episodic RL (see Appendix \ref{sec_on_policy} for more arguments). 

Table \ref{tab_summary} summarizes the discussions around conceptual unification so far in the paper. In this unified perspective, for both episodic and continual RL we work on the MDP model of the learning (instead of the decision) environment. On-policy distribution always means the unique stationary distribution of the policy. The performance measures remain unchanged, but they are now rigorously connected by a factor of $\E{\pi}[T]$ (i.e. the average episode length under the policy $\pi$).

\begin{table} [h!]
	\begin{center}
		\begin{tabular}{|l|l|l|l|}
			\hline \rule[-1ex]{0pt}{3ex}
			& Target Environment  	& On-policy Distribution			& Performance Measure \\ 
			\hline \rule[-1ex]{0pt}{3ex} 
			Episodic	& $\M_D$ (finite-horizon)	 	& $\propto \E{\pi}[\sum_{t\leq T} \indicator{s_t=s}]$ 	& $J_{epi}$ \\ 
			\hline \rule[-1ex]{0pt}{3ex} 
			Continual	& $\M_L$ ~(assumed ergodic)  	& $\rho_\pi$ ~( $=\lim\limits_{t \ra \infty} \rho_\pi\step{t}$ )  	& $J_{avg}$\\ 
			\hline \rule[-1ex]{0pt}{3ex} 
			Episodic (*)& $\M_L$ ~(proved ergodic) 	& $\rho_\pi$ ~( $=\lim\limits_{t \ra \infty} \rho_\pi\step{t}$ ) 	& $J_{avg}\*\E{\pi}[T]$ ~~($=J_{epi}$) \\
			\hline
		\end{tabular} 
	\end{center}
	\caption{Concept comparison between RL settings. Our account is summarized in the row with (*).}
	\label{tab_summary}
\end{table}

\section{Algorithmic implications}
\label{sec_algo}

Sample inefficiency and learning instability are two major challenges of RL practice at the moment. Being unified in the conceptual level, we can now borrow ideas from continual RL to efficiently collect stable training data for practical problems in episodic RL: Write the quantity we want to compute as an expectation over the stationary distribution of some policy $\beta$, perturb the learning environment if needed, rollout $\beta$ in the (perturbed) environment and wait for it to converge to the steady state, after which we obtain an unbiased sample point in every single time step. As a demonstration, the following subsection applies this idea to policy gradient algorithms.

\subsection{Policy gradient}
\label{sec_pg}

The availability of steady state in episodic tasks allows us to write the policy gradient of the episode-wise performance as a steady-state property of the corresponding policy.
\begin{theorem}
	\label{thm_pg}
	(Steady-State Policy Gradient Theorem.) In episodic learning process $\M$, let $Q_\pi$ be the value function \eqref{q_pi} using $\ga(s)=\indicator{s\not\in \terminals}$, then for any differentiable policy function $\pi(s,a;\theta)$,
	\eq{pg}{
		\nabla_\theta~ J_{epi}(\theta) = \Big( \E{\zeta\sim \pi(\theta)}[T]-1 \Big) \* 
		\Exp{s,a\sim \rho_{\theta}}{ Q_{\theta}(s,a) \, \nabla_\theta \log\pi(s,a;\theta) ~\Big|~ s\not\in \SS_\perp} 
	}
	where $s,a\sim \rho_{\theta}$ means $s\sim \rho_{\pi(\theta)}, a\sim \pi(s;\theta)$.
\end{theorem}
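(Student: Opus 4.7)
The plan is to mimic the classical derivation of the policy-gradient theorem, exploiting the episodic discounting $\gamma(s) = \mathds{1}(s \notin \SS_\perp)$ to truncate the Bellman recursion at terminal states, and then to convert the resulting per-episode visit counts into the stationary distribution $\rho_\theta$ via a renewal argument made available by Theorem~\ref{thm_stationary_dist} and Proposition~\ref{thm_mc_stationary}. Throughout I abbreviate $\E{\pi}[T] := \E{\zeta \sim \pi(\theta)}[T]$.

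First, I start from $J_{epi}(\theta) = V_\theta(s_\perp)$ for any $s_\perp \in \SS_\perp$ (terminal-independent by homogeneity). Differentiating the Bellman equation \eqref{bellman} in $\theta$, and using that $P$ and $R$ carry no $\theta$-dependence, yields the standard recursion
\[ \nabla V_\theta(s) = \phi(s) + \sum_{s'} M_\pi(s,s')\, \gamma(s')\, \nabla V_\theta(s'), \qquad \phi(s) := \sum_a Q_\theta(s,a)\, \nabla_\theta \pi(s,a;\theta). \]
A key observation is that $\phi(s) = 0$ for every $s \in \SS_\perp$: the homogeneity condition forces $Q_\theta(s,\cdot)$ to be action-agnostic at terminal states, so $\sum_a Q_\theta(s,a)\, \nabla \pi(s,a) = Q_\theta(s,\cdot)\, \nabla \sum_a \pi(s,a) = 0$. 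Since $\gamma$ also vanishes on $\SS_\perp$, each level of unrolling only follows taboo paths that have not yet re-entered the terminal set; unrolling from $s_\perp$ therefore gives
\[ \nabla V_\theta(s_\perp) = \sum_{s \notin \SS_\perp} n_\pi(s)\, \phi(s), \]
where $n_\pi(s)$ denotes the expected number of visits to non-terminal $s$ during a single episode starting from $s_\perp$. Convergence of this series is underwritten by the finiteness condition in Definition~\ref{def_episodic}.

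The final step is the renewal identity $n_\pi(s) = \rho_\theta(s) \cdot \E{\pi}[T]$ for $s \notin \SS_\perp$. Viewing successive entries into $\SS_\perp$ as renewal epochs, one cycle has expected length $\E{\pi}[T]$ and expected visits to $s$ equal to $n_\pi(s)$, so by Theorem~\ref{thm_stationary_dist} and Proposition~\ref{thm_mc_stationary} the long-run frequency $\rho_\theta(s)$ equals $n_\pi(s)/\E{\pi}[T]$. Substituting gives
\[ \nabla V_\theta(s_\perp) = \E{\pi}[T] \sum_{s \notin \SS_\perp} \rho_\theta(s) \sum_a Q_\theta(s,a)\, \nabla \pi(s,a;\theta). \]
Rewriting $\nabla \pi = \pi \cdot \nabla \log \pi$ and expressing the restricted sum as a conditional expectation normalizes by $1 - \rho_\theta(\SS_\perp) = (\E{\pi}[T]-1)/\E{\pi}[T]$, converting the $\E{\pi}[T]$ prefactor into $\E{\pi}[T]-1$, which matches \eqref{pg}. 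The main obstacle I expect is making the renewal identity rigorous when $\SS_\perp$ contains multiple terminal states: one needs either to aggregate $\SS_\perp$ into a single super-state (whose mean recurrence time is exactly $\E{\pi}[T]$) or to invoke a renewal-reward argument, while also checking that the per-episode visit count does not depend on which particular terminal state the episode started from---again a consequence of homogeneity.
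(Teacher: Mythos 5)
Your proof is correct, but it takes a genuinely different route from the paper's. The paper never unrolls the recursion: it applies $\nabla_\theta$ to the Bellman equation and then averages the resulting identity over the stationary distribution, using $\rho_\theta M_\theta = \rho_\theta$ to collapse the successor-state expectation into the general identity $\E{s,a\sim \rho_\theta}[(1-\gamma(s))\nabla_\theta Q_\theta(s,a)] = \E{s,a\sim \rho_\theta}[\gamma(s)\, Q_\theta(s,a)\nabla_\theta\log\pi(s,a;\theta)]$; substituting the episodic $\gamma$ then isolates terminal states on the left (where $\nabla_\theta Q_\theta(s_\perp,a)=\nabla_\theta J_{epi}$) and non-terminal states on the right, and the prefactor $\E{\theta}[T]-1$ comes from $\sum_{s\in\terminals}\rho_\theta(s)=1/\E{\theta}[T]$ (Proposition \ref{prop_ET}, proved by aggregating $\terminals$ into a super-state exactly as you suggest). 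You instead recover the classical episode-wise form $\nabla V_\theta(s_\perp)=\sum_{s\notin\terminals}n_\pi(s)\,\phi(s)$ and then convert visit counts into $\rho_\theta$ via the renewal identity $n_\pi(s)=\rho_\theta(s)\,\E{\theta}[T]$. Two remarks. First, your observation that $\phi$ vanishes on $\terminals$ (because homogeneity makes $Q_\theta(s,\cdot)$ action-agnostic there) is genuinely needed on your route and is correctly proved; the paper's route gets this for free since terminal states enter its right-hand side weighted by $\gamma(s)=0$. Second, your renewal identity does not follow literally from Proposition \ref{thm_mc_stationary}, which gives $\rho(s)=1/\E{}[T_s]$ for recurrence of $s$ to itself rather than a cycle decomposition over returns to $\terminals$; the identity you need is exactly $\rho_\pi=\mu^1_\pi$, which the paper obtains from Theorem \ref{thm_avg_epi} by taking $\R(s)=\indicator{s=s^*}$, so the cleanest way to close the gap you flag is to cite Theorem \ref{thm_avg_epi} rather than re-derive a renewal-reward theorem. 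What your route buys is an explicit reduction of the new theorem to the classic policy gradient theorem plus the visitation-equals-stationary identity; what the paper's route buys is that its intermediate identity \eqref{pg_proof1} holds for arbitrary discounting functions, which it reuses in Appendix \ref{sec_pg_algo} to analyze what discounted estimators actually compute.
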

\proofvspace
\vspace{-0.01in}
\begin{proofidea}
	Applying the $\nabla_\theta$ operator over both sides of the Bellman equation \eqref{bellman}, then averaging both sides of the resulted equation over stationary distribution $\rho_\pi$, and re-arranging, will give
	\eq{pg_proof1}{
		\Exp{s,a\sim \rho_\theta}{\Big(1-\gamma(s)\Big) \nabla_\theta Q_\theta(s,a)} = 
		\Exp{s,a\sim \rho_\theta}{\gamma(s) \, Q_\theta(s,a)\nabla_\theta \log \pi(s,a;\theta)}
	.} 
	Substituting $\ga(s)=\indicator{s\not\in \terminals}$ into \eqref{pg_proof1} will leave only terminal states at LHS while leaving only non-terminal states at RHS. For terminal states, $\nabla Q_\theta(s_\perp,a) = \nabla J_{epi}$, which brings in the performance objective. Further re-arranging will give what we want. See the complete proof in Appendix \ref{proofsec_thm_pg}.
\end{proofidea}

Comparing with the classic policy gradient theorem \eqref{discounted_pg1}, Theorem \ref{thm_pg} establishes an \emph{exact equality} directly from the gradient of the \emph{end-to-end} performance $J_{epi}$ to an conditional expectation under the steady state of the target policy $\pi(\theta)$. In other words, Steady-State Policy Gradient (SSPG) algorithms have lossless surrogate objective $\tilde{J}_{SSPG} \define J_{epi}$, which can be estimated (truly) unbiasedly with estimator $F_{SSPG}(s_t,a_t,\theta) \define (\widehat{\E{\theta}[T]}-1) \* \widehat{Q_\theta} \* \nabla \log \pi(s_t,a_t;\theta)$ at each and every non-terminal step $t$ after the convergence time (thanks to the guaranteed ergodicity). We can further reduce the variance of the estimator $F_{SSPG}$ by running $K$ parallel rollouts~\cite{2016:a3c,2017:ppo}, in which case the step data at time $t$ across all these independent rollouts form an i.i.d sample of size $K$.

Comparing with A3C~\cite{2016:a3c}, one of the best ``online'' policy gradient algorithms, the SSPG estimator as presented above differs in three places. First, A3C uses the discounted value functions $Q^\gammac_\pi$ (and a $V^\gammac_\pi$ baseline) with $\gamma_c<1$ while SSPG uses $\gamma_c=1$. Interestingly, while we already know that A3C does not compute the unbiased policy gradient because of this discounting~\cite{2014:thomas}, it is still less clear what it \emph{computes instead} in episodic tasks (\cite{2014:thomas} only gave analysis in continual cases), but the proofs of Theorem \ref{thm_pg}  and \ref{thm_avg_epi} actually shed some light on the latter issue. See Appendix \ref{sec_pg_algo} for the details. 

Secondly, in the SSPG estimator there is an additional term $\widehat{\E{\theta}[T]}-1$, which is the estimated Average Episode Length (\emph{AEL} for short) of $\pi(\theta)$ minus $1$. This AEL term was often treated as a ``proportionality \emph{constant}'' and thus omitted in the literature~\cite{2018:RL}. However, Theorem \ref{thm_pg}, and specifically the subscript $\theta$ in $\E{\theta}[T]$, makes it explicit that this term is not really a constant as the policy parameter $\theta$ is certainly changing in any effective learning. The corresponding average episode length can change by \emph{orders of magnitude}, both increasingly (e.g. in Cart-Pole) and decreasingly (e.g. in Mountain-Car). While the dramatically changed AEL will not alter the gradient direction, it does affect \emph{learning rate control} which is known to be of critical importance in RL practice. 

We examined the impact of the changing AEL to the quality of policy gradient estimation in the Hopper environment of the RoboSchoolBullet benchmark~\cite{pybullet}. We implemented two variants of SSPG, one using exactly $F_{SSPG}$, the other omitting the AEL term. Both used constant learning rate. In order to observe the potential bias, we run $K=1,000,000$ parallel rollouts to reduce variance as much as possible, and used data from the single step at $\Delta t = 3 \* AEL$ across all rollouts to make each policy update. We then compared the estimated gradients from both SSPG variants with the ground-truth policy gradient 
at each update time (Figure \ref{fig:hopper_reinforce} in appendix), and observed that the exact SSPG estimator indeed well aligns with the ground-truth gradients throughout the learning process, while the variant without the AEL term exhibits significant bias in its gradient estimates after $100$ updates, due to a $7\times$ increase of AEL along the way. See \ref{sec_ael} for experimental details.

Lastly, A3C works on the raw environments without perturbation and also does not wait for steady-state convergence. The next subsection is dedicated to the ramifications in this aspect.

\subsection{Recursive perturbation}
\label{sec_perturb}

The $\epsilon$-perturbation trick introduced in Section \ref{sec_ergodic} perturbs an episode for at most one step (and only at the beginning of the episode, with arbitrarily small probability $\epsilon$). The goal there is to enforce steady-state convergence while adding only \emph{minimal} perturbation to the raw environment. In practice, there is a simple way to ``augment'' the perturbation to further facilitate \emph{rapid} convergence, so that we don't need to wait too long before enjoying the efficiency of few-step sampling under steady state.   

The key idea is that the same perturbation trick can be applied again to an already perturbed environment, which creates a detour visiting up to two null states at the beginning of an episode (and the ``doubly perturbed'' model must still remain ergodic). Repeatedly applying the same trick to the limit, we end up with a perturbed model with only one null state but the null state has a self-looping probability of $\epsilon$. The time steps that the agent will spend on the null state in an episode follows the geometric distribution with expectation $\epsilon / (1-\epsilon)$, so with $\epsilon$ approaching to $1$, the perturbation effect can be arbitrarily large. We call this augmented perturbation method, \emph{recursive perturbation}. 

\begin{figure}[h]
	\centering
	\includegraphics[width=0.9 \linewidth]{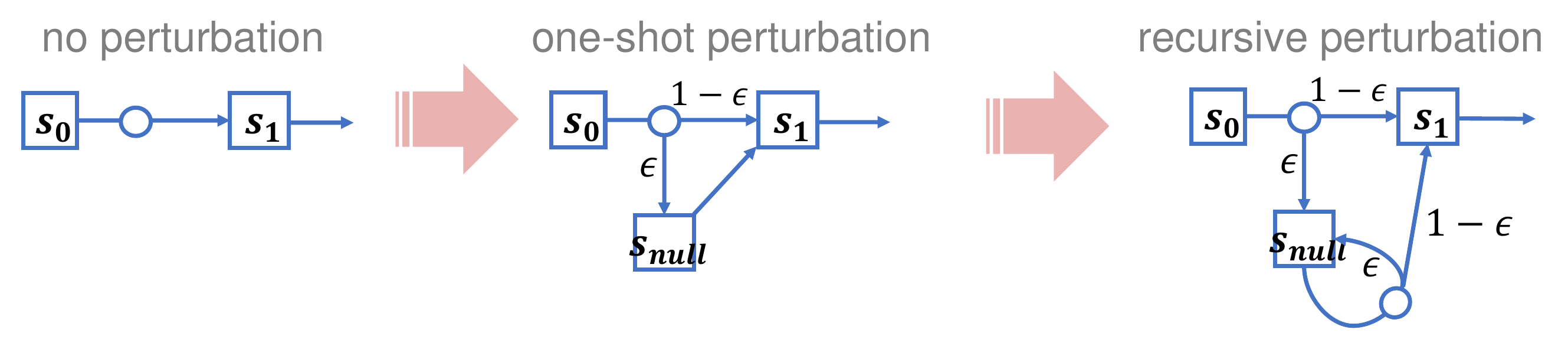}
	\caption{Illustration of the perturbation trick}
	\label{fig:perturbation}
\end{figure}

Interestingly, we empirically found that recursive perturbation with the particular $\epsilon=1-1/\E{}[T]$ appears to enforce steady-state convergence in $t^*=3\E{}[T]$ steps across \emph{various} environments. Specifically, we examined the time evolution of the marginal state distribution in three representative environments of RoboSchoolBullet~\cite{pybullet}: Hopper, Humanoid, and HalfCheetah. Without perturbation, the raw environment of Hopper indeed converges around $t = 2 \* AEL$, but that of Humanoid converges $\times 10$ slower, around $t = 24 \* AEL$. Moreover, without perturbation the time evolution in HalfCheetah does not converge at all, but instead periodically fluctuates around its steady state forever. These observations suggest that perturbation is indeed \emph{necessary} to ensure convergence in real-world tasks. On the other hand, after applying the recursive perturbation with $\epsilon=1-1/\E{}[T]$, we observed steady-state convergence at $t=3 \* AEL$ for both Humanoid and HalfCheetah.

To further test the generality of the ``$3$-AEL convergence'' phenomenon, we designed, as a worst-case scenario, a synthetic environment in which an episode goes through the state space $\SS=\{1,\dots, n\}$ in a strictly round-robin manner (regardless of the actions). The rollout of such environment is fully deterministic, and probabilities of its state distribution $\rho\step{t}$ fully concentrate at the single state $s_t = (t \mod n)$. We then applied the recursive perturbation with $\epsilon = 1-1/n$ to such environments with episode length ranging from $n=20$ to $2000$, and observed that for all of them, the marginal distribution $\rho\step{t^*}$ at $t^*=3n$ becomes \emph{uniformly} distributed after the perturbation. Details about the perturbation experiments, on both RoboSchoolBullet and the synthetic one, can be found in \ref{sec_perturb_exp}.

\section{Discussions}
\label{sec_discussion}

Ergodicity is a phenomenon inherently linked to recurrence. In episodic tasks there are recognized recurrent states, the terminal states, thus its guaranteed ergodicity should not be too surprising with hindsight. Intriguingly, it turns out that in the end what becomes more suspicious is the ergodicity presumption in the continual setting, which requires the existence of some \emph{implicit} recurrent states in the environment dynamic (see Proposition \ref{thm_mc_stationary} and \ref{def_ergodic}) -- but if that were true, wouldn't it imply that an ergodic ``continual task'' is in fact episodic with respect to these implicit recurrent states (which serve as hidden terminal states of the task)? It then follows that a truly and purely continual task \emph{cannot} be ergodic as usually postulated. Note that our definition of episodic learning process is not based on perceived category in the view of an observer, but based on intrinsic properties of the environment itself. In particular, the set of terminal states is defined as the equivalence closure of the set of initial states prescribed by $\rho_0$ (see Definition \ref{def_episodic}). It would be interesting to see if some  ``as-perceived'' continual task is actually episodic learning process under a differently specified $\rho_0$.

The methodology presented in Section \ref{sec_algo} essentially reduces the data collection problem of episodic RL to a Markov Chain Monte Carlo (MCMC) problem, so insights from the MCMC theory and practice could be potentially beneficial to RL too. In particular, more \emph{mixing time analysis} and \emph{non-equilibrium analysis} on episodic RL process can be another thread of interesting future works.

The idea of modeling the resetting procedure in episodic learning as normal steps in a multi-episode MDP has been discussed in \cite{2011:horde, 2011:insights, 2017:unifying}, which also used product-form value functions~\cite{2011:horde} to ``block'' the $\ga$ from propagating across episode boundaries. Our paper used similar ideas, but made a forward move by identifying strong \emph{structures} in the MDP thus constructed and by demonstrating both conceptual and pragmatic benefits of our theoretical insights. 
Importantly, as pointed out earlier in this paper, the choice of connecting finite episodes into an infinite-horizon MDP is not only about \emph{how} to formulate an environment, but more about \emph{which} environment has been formulated -- by looking at the episodic learning processes as defined, we are shifting the target of the formalism, intentionally or unintentionally, from the ``one-shot'' decision/testing environment to the real learning environment that the RL agent is \emph{physically} posited in (see Table \ref{tab_summary}).   

Comparing with other MDP-related research domains (e.g. \cite{1995:dpoc}), the engagement with a multi-episode MDP seems to be something \emph{unique} to model-free learning, as being able to repeat a task is an essential prerequisite to learn any task. We thus believe that explicitly and systematically studying the \emph{learning MDPs} of finite-horizon tasks is a topic that deserves more attention \emph{particularly} from the RL community. 


\section*{Broader Impacts}
This section discusses potential social impacts of this paper, as required by the NeurIPS 2020 program committee. This work mostly looked at a theoretical foundation of reinforcement learning, which, in the author's view, contributes to \emph{understand} RL as a general and \emph{natural} phenomenon of the world and of our society. 
At the engineering side, RL is in its nature an ``online'' learning paradigm that requires close interaction with the surrounding environment. This interaction, as triggered along with the learning, could possibly have unexpected effects especially if a less understood RL algorithm is deployed. The author hopes insights from this paper, on both theoretically justified and questionable aspects of existing RL practice, can contribute to more informed and responsible decisions when using RL in the real world. 
Finally, the perturbation trick and the new policy gradient estimator proposed in the paper may be integrated into fully-fledged RL algorithms in the future by others, which could in turn be used as ethically-neutral tools to influence the society, inevitably in both expected and unexpected ways.

\section*{Acknowledgments}
The author of the paper would like to thank his colleague Xu Wang for inspiring discussions on this work. The author also appreciates the many helpful comments from the anonymous reviewers of NeurIPS 2020 on earlier version of the paper. As funding disclosure, this research work received no project-dedicated financial support or engagement from any third party.


\small
\bibliographystyle{named}
\bibliography{rl}

\begin{thebibliography}{36}
\providecommand{\natexlab}[1]{#1}
\providecommand{\url}[1]{\texttt{#1}}
\expandafter\ifx\csname urlstyle\endcsname\relax
  \providecommand{\doi}[1]{doi: #1}\else
  \providecommand{\doi}{doi: \begingroup \urlstyle{rm}\Url}\fi

\bibitem[pyb()]{pybullet}
Pybullet gymperium.
\newblock URL \url{https://github.com/benelot/pybullet-gym}.

\bibitem[Bertsekas(1995)]{1995:dpoc}
D.~P. Bertsekas.
\newblock \emph{Dynamic programming and optimal control}, volume~1.
\newblock Athena scientific Belmont, MA, 1995.

\bibitem[Brockman et~al.(2016)Brockman, Cheung, Pettersson, Schneider,
  Schulman, Tang, and Zaremba]{2016:gym}
G.~Brockman, V.~Cheung, L.~Pettersson, J.~Schneider, J.~Schulman, J.~Tang, and
  W.~Zaremba.
\newblock Openai gym.
\newblock \emph{arXiv preprint arXiv:1606.01540}, 2016.

\bibitem[Degris et~al.(2012)Degris, White, and Sutton]{2012:opac}
T.~Degris, M.~White, and R.~S. Sutton.
\newblock Off-policy actor-critic.
\newblock \emph{arXiv preprint arXiv:1205.4839}, 2012.

\bibitem[Duan et~al.(2016)Duan, Chen, Houthooft, Schulman, and
  Abbeel]{2016:benchmarking}
Y.~Duan, X.~Chen, R.~Houthooft, J.~Schulman, and P.~Abbeel.
\newblock Benchmarking deep reinforcement learning for continuous control.
\newblock In \emph{International Conference on Machine Learning}, pages
  1329--1338, 2016.

\bibitem[Fujimoto et~al.(2018)Fujimoto, Van~Hoof, and Meger]{2018:td3}
S.~Fujimoto, H.~Van~Hoof, and D.~Meger.
\newblock Addressing function approximation error in actor-critic methods.
\newblock \emph{arXiv preprint arXiv:1802.09477}, 2018.

\bibitem[Gu et~al.(2016)Gu, Lillicrap, Sutskever, and Levine]{2016:naf}
S.~Gu, T.~Lillicrap, I.~Sutskever, and S.~Levine.
\newblock Continuous deep q-learning with model-based acceleration.
\newblock In \emph{International Conference on Machine Learning}, pages
  2829--2838, 2016.

\bibitem[Haarnoja et~al.(2018)Haarnoja, Zhou, Abbeel, and Levine]{2018:sac}
T.~Haarnoja, A.~Zhou, P.~Abbeel, and S.~Levine.
\newblock Soft actor-critic: Off-policy maximum entropy deep reinforcement
  learning with a stochastic actor.
\newblock In \emph{International Conference on Machine Learning}, pages
  1861--1870, 2018.

\bibitem[Kakade(2002)]{2002:natural_pg}
S.~M. Kakade.
\newblock A natural policy gradient.
\newblock In \emph{Advances in neural information processing systems}, pages
  1531--1538, 2002.

\bibitem[Lillicrap et~al.(2015)Lillicrap, Hunt, Pritzel, Heess, Erez, Tassa,
  Silver, and Wierstra]{2015:ddpg}
T.~P. Lillicrap, J.~J. Hunt, A.~Pritzel, N.~Heess, T.~Erez, Y.~Tassa,
  D.~Silver, and D.~Wierstra.
\newblock Continuous control with deep reinforcement learning.
\newblock \emph{arXiv preprint arXiv:1509.02971}, 2015.

\bibitem[Marbach and Tsitsiklis(2001)]{2001:continual_pg}
P.~Marbach and J.~N. Tsitsiklis.
\newblock Simulation-based optimization of markov reward processes.
\newblock \emph{IEEE Transactions on Automatic Control}, 46\penalty0
  (2):\penalty0 191--209, 2001.

\bibitem[Meyn and Tweedie(2012)]{2012:markov_chain}
S.~P. Meyn and R.~L. Tweedie.
\newblock \emph{Markov chains and stochastic stability}.
\newblock Springer Science \& Business Media, 2012.

\bibitem[Mnih et~al.(2015)Mnih, Kavukcuoglu, Silver, Rusu, Veness, Bellemare,
  Graves, Riedmiller, Fidjeland, Ostrovski, et~al.]{2015:dqn}
V.~Mnih, K.~Kavukcuoglu, D.~Silver, A.~A. Rusu, J.~Veness, M.~G. Bellemare,
  A.~Graves, M.~Riedmiller, A.~K. Fidjeland, G.~Ostrovski, et~al.
\newblock Human-level control through deep reinforcement learning.
\newblock \emph{Nature}, 518\penalty0 (7540):\penalty0 529--533, 2015.

\bibitem[Mnih et~al.(2016)Mnih, Badia, Mirza, Graves, Lillicrap, Harley,
  Silver, and Kavukcuoglu]{2016:a3c}
V.~Mnih, A.~P. Badia, M.~Mirza, A.~Graves, T.~Lillicrap, T.~Harley, D.~Silver,
  and K.~Kavukcuoglu.
\newblock Asynchronous methods for deep reinforcement learning.
\newblock In \emph{International conference on machine learning}, pages
  1928--1937, 2016.

\bibitem[Pardo et~al.(2017)Pardo, Tavakoli, Levdik, and
  Kormushev]{2017:time_limit}
F.~Pardo, A.~Tavakoli, V.~Levdik, and P.~Kormushev.
\newblock Time limits in reinforcement learning.
\newblock \emph{arXiv preprint arXiv:1712.00378}, 2017.

\bibitem[Schulman et~al.(2015)Schulman, Levine, Abbeel, Jordan, and
  Moritz]{2015:trpo}
J.~Schulman, S.~Levine, P.~Abbeel, M.~Jordan, and P.~Moritz.
\newblock Trust region policy optimization.
\newblock In \emph{International Conference on Machine Learning}, pages
  1889--1897, 2015.

\bibitem[Schulman et~al.(2017{\natexlab{a}})Schulman, Chen, and
  Abbeel]{2017:equivalence}
J.~Schulman, X.~Chen, and P.~Abbeel.
\newblock Equivalence between policy gradients and soft q-learning.
\newblock \emph{arXiv preprint arXiv:1704.06440}, 2017{\natexlab{a}}.

\bibitem[Schulman et~al.(2017{\natexlab{b}})Schulman, Wolski, Dhariwal,
  Radford, and Klimov]{2017:ppo}
J.~Schulman, F.~Wolski, P.~Dhariwal, A.~Radford, and O.~Klimov.
\newblock Proximal policy optimization algorithms.
\newblock \emph{arXiv preprint arXiv:1707.06347}, 2017{\natexlab{b}}.

\bibitem[Serfozo(2009)]{2009:markov_chain}
R.~Serfozo.
\newblock \emph{Basics of applied stochastic processes}.
\newblock Springer Science \& Business Media, 2009.

\bibitem[Silver et~al.(2014)Silver, Lever, Heess, Degris, Wierstra, and
  Riedmiller]{2014:dpg}
D.~Silver, G.~Lever, N.~Heess, T.~Degris, D.~Wierstra, and M.~Riedmiller.
\newblock Deterministic policy gradient algorithms.
\newblock In \emph{ICML}, 2014.

\bibitem[Silver et~al.(2017)Silver, van Hasselt, Hessel, Schaul, Guez, Harley,
  Dulac-Arnold, Reichert, Rabinowitz, Barreto, et~al.]{2017:predictron}
D.~Silver, H.~van Hasselt, M.~Hessel, T.~Schaul, A.~Guez, T.~Harley,
  G.~Dulac-Arnold, D.~Reichert, N.~Rabinowitz, A.~Barreto, et~al.
\newblock The predictron: End-to-end learning and planning.
\newblock In \emph{Proceedings of the 34th International Conference on Machine
  Learning-Volume 70}, pages 3191--3199. JMLR. org, 2017.

\bibitem[Sutton et~al.(2014)Sutton, Mahmood, Precup, and Hasselt]{2014:qlambda}
R.~Sutton, A.~R. Mahmood, D.~Precup, and H.~Hasselt.
\newblock A new q (lambda) with interim forward view and monte carlo
  equivalence.
\newblock In \emph{International Conference on Machine Learning}, pages
  568--576, 2014.

\bibitem[Sutton(1995)]{1995:td_models}
R.~S. Sutton.
\newblock Td models: modeling the world at a mixture of time scales.
\newblock In \emph{Proceedings of the Twelfth International Conference on
  International Conference on Machine Learning}, pages 531--539. Morgan
  Kaufmann Publishers Inc., 1995.

\bibitem[Sutton and Barto(2018)]{2018:RL}
R.~S. Sutton and A.~G. Barto.
\newblock \emph{Reinforcement learning: An introduction}.
\newblock MIT press, 2018.

\bibitem[Sutton et~al.(2000)Sutton, McAllester, Singh, and Mansour]{2000:pg}
R.~S. Sutton, D.~A. McAllester, S.~P. Singh, and Y.~Mansour.
\newblock Policy gradient methods for reinforcement learning with function
  approximation.
\newblock In \emph{Advances in neural information processing systems}, pages
  1057--1063, 2000.

\bibitem[Sutton et~al.(2011)Sutton, Modayil, Delp, Degris, Pilarski, White, and
  Precup]{2011:horde}
R.~S. Sutton, J.~Modayil, M.~Delp, T.~Degris, P.~M. Pilarski, A.~White, and
  D.~Precup.
\newblock Horde: a scalable real-time architecture for learning knowledge from
  unsupervised sensorimotor interaction.
\newblock In \emph{The 10th International Conference on Autonomous Agents and
  Multiagent Systems-Volume 2}, pages 761--768, 2011.

\bibitem[Sutton et~al.(2016)Sutton, Mahmood, and White]{2016:emphatic}
R.~S. Sutton, A.~R. Mahmood, and M.~White.
\newblock An emphatic approach to the problem of off-policy temporal-difference
  learning.
\newblock \emph{The Journal of Machine Learning Research}, 17\penalty0
  (1):\penalty0 2603--2631, 2016.

\bibitem[Thomas(2014)]{2014:thomas}
P.~Thomas.
\newblock Bias in natural actor-critic algorithms.
\newblock In \emph{International conference on machine learning}, pages
  441--448, 2014.

\bibitem[Tsitsiklis and Van~Roy(2002)]{2002:tsitsiklis}
J.~N. Tsitsiklis and B.~Van~Roy.
\newblock On average versus discounted reward temporal-difference learning.
\newblock \emph{Machine Learning}, 49\penalty0 (2-3):\penalty0 179--191, 2002.

\bibitem[Van~Hasselt et~al.(2016)Van~Hasselt, Guez, and Silver]{2016:ddqn}
H.~Van~Hasselt, A.~Guez, and D.~Silver.
\newblock Deep reinforcement learning with double q-learning.
\newblock In \emph{AAAI}, volume~2, page~5. Phoenix, AZ, 2016.

\bibitem[van Hasselt(2011)]{2011:insights}
H.~P. van Hasselt.
\newblock \emph{Insights in reinforcement rearning: formal analysis and
  empirical evaluation of temporal-difference learning algorithms}.
\newblock Utrecht University, 2011.

\bibitem[Watkins(1989)]{1989:qlearning}
C.~Watkins.
\newblock Learning from delayed rewards.
\newblock \emph{PhD thesis, King's College, University of Cambridge}, 1989.

\bibitem[White(2017)]{2017:unifying}
M.~White.
\newblock Unifying task specification in reinforcement learning.
\newblock In \emph{Proceedings of the 34th International Conference on Machine
  Learning-Volume 70}, pages 3742--3750. JMLR. org, 2017.

\bibitem[Williams(1992)]{1992:reinforce}
R.~J. Williams.
\newblock Simple statistical gradient-following algorithms for connectionist
  reinforcement learning.
\newblock \emph{Machine learning}, 8\penalty0 (3-4):\penalty0 229--256, 1992.

\bibitem[Yu(2015)]{2015:YU}
H.~Yu.
\newblock On convergence of emphatic temporal-difference learning.
\newblock In \emph{Conference on Learning Theory}, pages 1724--1751, 2015.

\bibitem[Zhang et~al.(2019)Zhang, Boehmer, and Whiteson]{2019:generalized}
S.~Zhang, W.~Boehmer, and S.~Whiteson.
\newblock Generalized off-policy actor-critic.
\newblock In \emph{Advances in Neural Information Processing Systems}, pages
  1999--2009, 2019.

\end{thebibliography}

\newpage
\appendix
\appendixpage
\addappheadtotoc

\section{Notes on the MDP formulation}
\label{sec_mdp}

This section provides more discussions about the MDP formulation as introduced in Section \ref{sec_preliminaries}. A faithful implementation of this formulation was used to generate all the results reported in Section \ref{sec_algo} and Section \ref{sec_exp}.

Our MDP formulation assumes both $\SS$ and $\AS$ are countable sets. This is mainly for aligning to the standard markov chain theory, and also for enabling convenient notations like transition matrix $M$ and summation $\sum_{s,a}$. Results in this paper are readily generalizable to uncountable action spaces (which still induce countable transitions after marginalizing over the actions), and may also be generalized to uncountable state spaces based on the theory of \emph{general-state-space} markov chains~\cite{2012:markov_chain}.

Our MDP formulation also assumes state-based deterministic reward function $\R$. While this formulation was used in many previous works~\cite{2015:trpo}, some literature~\cite{2018:RL} explicitly assign stochastic rewards to transitions, in the form of $\R(r|s,a,s')=\Pr{r_{t+1}=r | s_t=s,a_t=a, s_{t+1}=s'}$. Our reward-function formulation has no loss of generality here, as one can think of a ``state'' in our formulation as a $(s,r)$ pair in stochastic-reward models, with the transition-dependent stochasticity of the reward captured by the transition function $\T$.

Our MDP formulation has replaced the (often included) discounting constant $\ga_c$ with the (often excluded) initial state distribution $\rho_0$. Similar perspective to ``downgrade'' the discounting constant was discussed in \cite{2018:RL} (Chapter 10.4). As will become evident later in the paper, the discounting constant is neither necessary for the purpose of defining the problem, nor is it necessary for treating the problem. On the other hand, an explicit specification of $\rho_0$ is necessary to define the episode-wise performance $J_{epi}$ as used in all episodic tasks, and will be also needed to define the terminal states in the formulation of episodic learning process proposed in this paper. 

A finite-horizon task can have degenerated steady states if it is formulated as an infinite-horizon MDP with an imaginary absorbing state $\absorb$. In the absorbing MDP formulation, any finite episode will end up with moving from its terminal state to the absorbing state $\absorb$, from there the rollout is trapped in the absorbing state forever without generating effective reward~\cite{2018:RL,2014:thomas}. As a result, the stationary (and limiting) distribution of the absorbing MDP of any finite-horizon task concentrates fully and only to the absorbing state, making it of limited use for designing and analyzing RL algorithms.   

The definition of the steady-state performance measure $J_{avg}(\pi) \define \lim\limits_{ T \ra \infty} \frac{1}{T} \sum_{t=1}^{T} \R(s_t) = \E{s\sim \rho_\pi}[\R(s)]$, as introduced in Section \ref{sec_preliminaries}, is based on the following \emph{ergodic theorem} of markov chain:
\begin{proposition}
	\label{thm_ergodic}
	In ergodic markov chain $M$ with stationary distribution $\rho_{\scriptscriptstyle M}$, let $f(s)$ be any function such that $\E{s\sim \rho_{\scriptscriptstyle M}}[~|f(s)|~]<\infty$, then the time-average of $f$ converges almost surely to the state-average of $f$, i.e., $\lim\limits_{T \ra \infty} \Pr{~\frac{1}{T}\sum_{t=1}^T f(s_t) = \E{s\sim\rho_{\scriptscriptstyle M}}[f(s)]~}=1$.
	(\cite{2009:markov_chain}, Theorem 74)
\end{proposition}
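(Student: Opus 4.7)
The plan is to prove the classical ergodic theorem for Markov chains via the standard \emph{regenerative (excursion) decomposition}, i.e., by chopping the trajectory into i.i.d. blocks between successive visits to a fixed reference state and applying the strong law of large numbers (SLLN) separately to the per-block integrals of $f$ and to the block lengths.

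First, I would fix an arbitrary reference state $s_* \in \SS$. Since $M$ is ergodic, by Propositions~\ref{thm_mc_stationary} and \ref{def_ergodic} it is irreducible and positive recurrent, so $s_*$ is visited infinitely often with probability one, and its mean recurrence time $\mu_* \define \E{}[T_{s_*}] = 1/\rho_{\scriptscriptstyle M}(s_*)$ is finite. Let $0 = \tau_0 < \tau_1 < \tau_2 < \dots$ denote the successive visit times to $s_*$ (after a finite a.s. initial waiting time we may ignore in the $T\to\infty$ limit). By the strong Markov property, the excursions $\xi_k \define (s_{\tau_{k-1}}, \dots, s_{\tau_k - 1})$ are i.i.d.; hence so are the lengths $L_k \define \tau_k - \tau_{k-1}$ and the block-integrals $Y_k \define \sum_{t=\tau_{k-1}}^{\tau_k - 1} f(s_t)$.

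Next, I would identify the means. Positive recurrence gives $\E{}[L_1] = \mu_*$. For $Y_1$, the key identity is Kac's formula: the expected number of visits to state $s$ during one excursion from $s_*$ equals $\rho_{\scriptscriptstyle M}(s)/\rho_{\scriptscriptstyle M}(s_*)$. Summing against $f$ yields $\E{}[Y_1] = \mu_* \cdot \E{s\sim\rho_{\scriptscriptstyle M}}[f(s)]$; the same argument with $|f|$ in place of $f$ shows $\E{}[|Y_1|] \leq \E{}[\sum_t |f(s_t)|] = \mu_* \cdot \E{\rho_{\scriptscriptstyle M}}[|f|] < \infty$ by hypothesis, so SLLN is applicable. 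I would first prove Kac's formula by defining $\tilde{\rho}(s) \define \E{}[\text{visits to } s \text{ in one excursion}]/\mu_*$, verifying $\tilde{\rho}$ is a stationary distribution for $M$ by a direct calculation using the strong Markov property at visits to $s$, and invoking uniqueness of the stationary distribution (Proposition~\ref{thm_mc_stationary}) to conclude $\tilde{\rho} = \rho_{\scriptscriptstyle M}$.

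Then I would apply SLLN twice: $\frac{1}{N}\sum_{k=1}^N Y_k \to \mu_*\,\E{\rho_{\scriptscriptstyle M}}[f]$ a.s., and $\frac{1}{N}\sum_{k=1}^N L_k \to \mu_*$ a.s. Letting $N(T) \define \max\{k : \tau_k \leq T\}$, I would combine these by writing
\[
\frac{1}{T}\sum_{t=1}^T f(s_t) = \frac{N(T)}{T} \cdot \frac{1}{N(T)} \sum_{k=1}^{N(T)} Y_k + \frac{R_T}{T},
\]
where $R_T$ is the contribution of the incomplete last block. Since $N(T) \to \infty$ a.s. and $T/N(T) \to \mu_*$ a.s. (from the length SLLN), the main term converges a.s. to $\mu_*^{-1}\cdot \mu_*\,\E{\rho_{\scriptscriptstyle M}}[f] = \E{\rho_{\scriptscriptstyle M}}[f]$. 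The main technical obstacle is showing $R_T / T \to 0$ a.s.\ when $f$ is unbounded: I would first reduce to the case $f \geq 0$ by writing $f = f^+ - f^-$ (both integrable), and for nonnegative $f$ bound $|R_T|$ by $\sum_{t=\tau_{N(T)}}^{\tau_{N(T)+1}-1} f(s_t) \leq Y_{N(T)+1}$; another application of SLLN (or Borel--Cantelli) gives $Y_{N(T)+1}/N(T) \to 0$ a.s., and dividing by $T \sim \mu_* N(T)$ closes the argument. This obstacle is the standard subtlety in the regenerative proof; once handled, the theorem follows.
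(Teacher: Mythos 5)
The paper offers no proof of this proposition for you to be compared against: it is imported as a known result from the Markov-chain literature (\cite{2009:markov_chain}, Theorem 74) and used as a black box, e.g.\ to ground the definition of $J_{avg}$ in Appendix~\ref{sec_mdp} and the discussion in Appendix~\ref{sec_on_policy}. Judged on its own merits, your regenerative proof is correct and is the standard self-contained route to this result. The skeleton is sound: i.i.d.\ excursions between visits to a reference state $s_*$ via the strong Markov property, identification of the block means $\E{}[L_1]=\mu_*$ and $\E{}[Y_1]=\mu_* \cdot \E{s\sim\rho_{\scriptscriptstyle M}}[f(s)]$ through Kac's occupation-measure identity (your stationarity-plus-uniqueness argument for it is the standard one, with Proposition~\ref{thm_mc_stationary} supplying the uniqueness), two applications of the SLLN, and the renewal sandwich $\tau_{N(T)}\leq T<\tau_{N(T)+1}$ giving $T/N(T)\to\mu_*$ a.s. You also correctly anticipate the two places where such proofs usually leak: the sum--expectation interchange behind $\E{}[Y_1]$ is legitimized by first running the computation with $|f|$ (Tonelli, then Fubini), and the incomplete last block is controlled by $0\leq R_T\leq Y_{N(T)+1}$ with $Y_{N(T)+1}/N(T)\to 0$ a.s., after reducing to $f\geq 0$ via $f=f^+-f^-$. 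Two small points: (i) with your convention $\tau_0=0$ the first block is not identically distributed with the others unless $s_0=s_*$, so the discard-the-initial-segment remark should be built into the decomposition (start the SLLN indexing at the first visit to $s_*$) rather than left as a parenthetical; (ii) your argument nowhere uses aperiodicity, so you in fact prove the stronger statement that the time-average law holds for every irreducible positive-recurrent chain --- consistent with the literature, where aperiodicity is only needed for the limiting-distribution statement (Proposition~\ref{def_ergodic}), not for this one. Finally, note that the proposition as printed has the limit and the probability interchanged; what you prove, and what ``converges almost surely'' means, is $\Pr{\lim_{T \ra \infty} \frac{1}{T}\sum_{t=1}^T f(s_t) = \E{s\sim\rho_{\scriptscriptstyle M}}[f(s)]}=1$, which is the correct reading of the cited theorem.
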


The value function $Q$ is often more specifically called the \emph{action-value function}. We called the $Q$-function just value function because of its symmetric role with policy function $\pi$. Moreover, in classic literature, the term ``action-value function'' was often referred specifically to 
$
Q^\gammac_\pi(s,a) \define \E{s_1\sim P(s,a), \{s_{\geq 2}\}\sim M_\pi} [\sum_{t = 1}^\infty R(s_{t}) \* \gamma_c^{t-1}]
$
, in which the constant $0\leq \gamma_c <1$ is called the \emph{discounting factor}. The specialized definition of $Q_\pi^\gammac$ entails the specialized version of Bellman equation 
$
Q_\pi^\gammac(s,a) = \E{s'\sim P(s,a), a'\sim \pi(s')} [R(s')+\gamma_c \* Q_\pi^\gammac(s',a')]
,$
and induces the specialized version of state-value function 
$
V_\pi^\gammac(s) \define \E{a\sim \pi(s)}[Q_\pi^\gammac(s,a)]
.$

The product of $\ga$ in the general-form value functions \eqref{q_pi} was originally proposed as a virtual probabilistic termination in the agent's mind~\cite{1995:td_models}, 
but was latter found useful to account for the real episode-boundaries when the MDP is used to model multi-episode rollouts \cite{2011:insights,2017:unifying}. 
Our paper uses product-form value functions for the same purpose as in \cite{2011:insights} (Section 2.1.1) and \cite{2017:unifying}. However, both \cite{2011:insights} and \cite{2017:unifying} used transition-based $\gamma$-discounting, while the value function in our treatment uses state-based discounting and is probably closer to the method of Figure 1(c) in \cite{2017:unifying}, which was considered an sub-optimal design in that paper. In fact, \cite{2017:unifying} attributes much of its main technical contribution to the adoption and contrastive analysis of transition-based discounting (against state-based discounting) which concludes that ``\emph{transition-based discounting is necessary to enable the unified specification of episodic and continuing tasks}'' (\cite{2017:unifying} Section 2.1, see also Section 6 and B). Despite this major technical disparity, we nevertheless think the approach as described in Section \ref{sec_elp} of our paper actually aligns with \cite{2011:insights} (Section 2.1.1) and \cite{2017:unifying} in terms of the bigger idea that connecting finite episodes into a single infinite-horizon MDP greatly helps with unifying the episodic and continual formalisms.


\section{Gradient estimation in RL algorithms}
\label{sec_rl}

As mentioned in Section \ref{sec_preliminaries}, many RL algorithms can be interpreted under a common and basic idea that seeks to find a parameterized policy function $\pi(\theta)$, a surrogate objective $\tilde{J}$, and an estimator function $F$, such that good policies with respect to the end-to-end performance measure (typically $J_{avg}$ in continual tasks and $J_{epi}$ in episodic tasks) can be identified by optimizing the surrogate objective $\tilde{J}$ via stochastic gradient methods, where $\nabla_\theta \tilde{J}(\theta_t)$ is estimated by averaging $F$ over some data set $\D_t$, that is
\eq{}{
\nabla_\theta \tilde{J}(\theta_t) \approx \frac{1}{|\mathcal{D}_t|}\sum_{(s,a)\in \mathcal{D}_t} F(s,a,\theta_t)
.}
RL algorithms (policy-based or value-based, on-policy or off-policy) under this framework differ ``only'' in how its policy function $\pi$ is parameterized, and in how $\tilde{J}$, $F$, and $\D$ are constructed. This section briefly reviews the specific forms of $\tilde{J}$ and $F$, as well as the prescribed distribution behind $\D_t$, that are used in some popular RL algorithms at the moment. To simplify notations we write $\rho_\theta$, $Q_\theta$, $V_\theta$, $J(\theta)$ for $\rho_{\pi(\theta)}$, $Q_{\pi(\theta)}$, $V_{\pi(\theta)}$, $J(\pi(\theta))$, respectively, from now on. As all algorithms discussed in this section use constant discounting, we write $\gamma$ (instead of $\ga_c$) for the discounting constant (only) in this section. 

In the \emph{policy-based approach}\cite{1992:reinforce, 2000:pg, 2002:natural_pg, 2014:dpg, 2015:trpo, 2017:ppo}, the policy $\pi$ is a function directly differentiable to policy parameter $\theta$ (that is, $\nabla_\theta \pi$ is computable). In its most classic and popular form, the surrogate objective $\tilde{J}$ is chosen to be a \emph{discounted episodic return} $J_{epi}^{\ga}(\theta) \define \E{s_0\sim \rho_0}[V^{\gamma}_\theta(s_0)]$, whose estimator $F$ is derived as~\cite{2000:pg}
\eqa{}{
	\nabla J_{epi}(\theta) \approx \nabla J_{epi}^{\ga}(\theta) 
	&= \sum_{t=0}^\infty \ga^t \* \E{s_t\sim \rho^{(t)}_\theta} \Exp{a_t\sim\pi(s_t;\theta)}{\nabla \log \pi(s_t,a_t;\theta) \* Q^{\ga}_\theta(s_t,a_t)} \\
	&= \frac{1}{1-\ga} \* \E{s\sim \mu^{\ga}_\theta} \Exp{a\sim\pi(s;\theta)}{\nabla \log \pi(s_t,a_t;\theta) \* Q^{\ga}_\theta(s,a)} \label{dvd_pg} \\
	&\approx \frac{1}{|\mathcal{D}|}\sum_{(s_i,a_i)\in \mathcal{D}} \nabla \log \pi(s_t,a_t;\theta) \* \widehat{Q^{\ga}_\theta}(s_i,a_i) \label{appendix_pg}
	.}
In \eqref{dvd_pg}, $\mu^{\ga}_\theta \define \sum_{t=0}^\infty \rho^{(t)}_\theta \* \ga^t (1-\ga)$ is sometimes called the \emph{(normalized) discounted visitation distribution}, and $\widehat{Q^{\ga}_\theta}(s_i,a_i)$ is some approximation of the value $Q^{\ga}_\theta(s_i,a_i)$. In above, $F_{pg}(s,a,\theta)=\nabla \log \pi(s,a;\theta) \* \widehat{Q^{\ga}_\theta}(s,a)$. 

The vanilla REINFORCE algorithm~\cite{1992:reinforce, 2018:RL} uses one-step data $(s_t,a_t)$ from a single rollout to construct the data set $\D$ for each policy update. Modern variants of it employ batch-mode updates~\cite{2015:trpo, 2016:benchmarking}, using data accumulated from multiple episodes to construct the data set $\D$. The A3C algorithm~\cite{2016:a3c} uses the same surrogate objective $J^\ga_{epi}$ and the same estimator $F_{pg}$, but constructs $\D$ using data from a small time window (e.g. five consecutive steps\cite{2016:a3c}) of multiple \emph{parallel and independent} rollouts. The PPO algorithm~\cite{2017:ppo} collects data set $\D$ in similar way, but conduct multiple policy updates on a single data set $\D$, thus improving sample efficiency. To keep the policy updates well directed, PPO uses a slightly different surrogate objective that majorizes $J^\ga_{epi}$ around the base parameter $\theta^{old}$, an idea first employed in the TRPO algorithm~\cite{2015:trpo}. In all these RL algorithms, the data set $\D$ follows the on-policy distribution of the \emph{target policy} $\pi(\theta^{old})$, and are thus called \emph{on-policy algorithms}.    

In the \emph{value-based approach}~\cite{1989:qlearning,2015:dqn,2016:ddqn,2016:naf,2015:ddpg,2018:td3,2018:sac}, the agent policy $\pi$ is parameterized indirectly \emph{through} a differentiable function $Q$. For example, $\pi$ may be a greedy policy that has zero selection-probability for all sub-optimal actions with respect to $Q(\theta)$, i.e., with $\pi(s,a;\theta)=0$ for $a\not\in \arg\max_{\bar{a}} Q(s,\bar{a};\theta)$. In this case $\nabla_\theta \pi$ is generally not computable, but $\nabla_\theta Q$ is. In the most classic form of this approach, the surrogate objective $\tilde{J}$ is chosen to be the so-called \emph{Projected Bellman Error} $J_{PBE}(\theta) \define \sum_{s\in\SS}\sum_{a\in\AS} \delta^2(s,a;\theta)$, where $\delta(s,a;\theta) \define Q(s,a;\theta) - \E{s'\sim P(s,a), a'\sim \pi(s';\theta^{old})} [R(s')+\ga Q(s',a';\theta^{old})]$, whose estimator is derived as~\cite{2015:dqn}
\eqa{}{
	\nabla J_{PBE}(\theta) 
	&\approx \E{s,a\sim \U(\SS\times\AS)} [\nabla Q(s,a;\theta) \* \delta(s,a;\theta)] \label{appendix_ql1} \\
	&\approx \frac{1}{|\mathcal{D}|}\sum_{(s_i,a_i)\in \mathcal{D}} \nabla Q(s_i,a_i;\theta) \* \widehat{\delta}(s_i,a_i;\theta) \label{appendix_ql}
	,}
where $\U(\SS\times\AS)$ can be any positive distribution over the states and actions, $\widehat{\delta}(s_i,a_i;\theta)$ is some approximation of $\delta(s_i,a_i;\theta)$, and $F_{PBE}(s,a,\theta)= \nabla Q(s,a;\theta) \* \widehat{\delta}(s,a;\theta)$. 

Similar to the case in policy-based approach, early value-based algorithms such as Q-Learning~\cite{1989:qlearning} used one-step data $(s_t,a_t)$ from a single rollout to construct the data set $\D$ for each policy update based on \eqref{appendix_ql}, while modern variants of it typically conduct batch-mode updates, again either using multiple-episode data from a single rollout~\cite{2015:dqn} or using data of small time window from parallel rollouts~\cite{2016:a3c}. The basic surrogate objective $J_{PBE}$ and its estimator $F_{PBE}$ used in \eqref{appendix_ql} can also be improved in many ways, such as using two (weakly- or un-correlated) base parameters $\theta^{old}$ in the $\delta$ function~\cite{2016:ddqn, 2018:td3}, and adding entropy-regularization terms~\cite{2017:equivalence, 2018:sac}. Variants of \eqref{appendix_ql} that are applicable to continuous action spaces were also proposed~\cite{2015:ddpg,2016:naf}. 

In order to comprehensively approximate the positive distribution $\U(\SS\times \AS)$ in \eqref{appendix_ql1}, these value-based algorithms typically employ some behavior policy $\beta$ that is more exploratory than the target policy $\pi$, so that the data set $\D$ in \eqref{appendix_ql} follows the on-policy distribution $\rho_\beta$. To improve sample efficiency and reduce auto-correlation, the sample set $\D$ used by modern value-based algorithms is usually a mixture of data from multiple behavior policies $\beta_t$'s that the agent has been using over time $t$~\cite{2015:dqn}, in which case $\D$ follows a mixture of the on-policy distributions $\rho_{\beta_t}$. As the data set $\D$ does not follow the on-policy distribution of the target policy $\pi$ (or of any single policy) in this case, algorithms based on such data set $\D$ is called \emph{off-policy algorithms}. 

As we can see, all the RL algorithms discussed above, policy-based or value-based, on-policy or off-policy, they all rely on the capability to obtain high-quality data from rollouts (sequential or parallel) that follows a desired distribution. Our work mainly concern about the theoretical underpins and proper sampling strategies to generate the data as required, and is thus complementary to most of the works reviewed above.

\section{Complete Proofs}
\subsection{Lemma \ref{lemma_irreducible}}
\label{proofsec_lemma_irreducible}
\begin{proof}
	
	Due to the homogeneity condition of episodic learning, a reachable state must be reachable in one episode and from any terminal state. So let $\xi$ be such an admissible episode trajectory under $\pi$ that go through the state $s$, and suppose $\xi$ terminates at an arbitrary terminal state $s^*$, there must be an admissible episode trajectory $\xi'$ that starts from $s^*$ and go through the state $s'$. Due to the finiteness condition of episodic learning, both $\xi$ and $\xi'$ take finite steps, so the concatenated trajectory $\xi\xi'$ contains a finite path $s\to s'$ as subsequence.
\end{proof}

\subsection{Lemma \ref{lemma_recurrent}}
\label{proofsec_lemma_recurrent}
\begin{proof}
	For the purpose of the proof, suppose we roll out $M_\pi$ starting from an arbitrary $s$, so $T_s$ is the time index of the first reccurrence of $s$ in such a rollout. 
	To make the proof rigorous, we first slightly re-formulate the probability space of $\mathbb{E} [T_s]$: Imagine again we rollout $M_{\pi}$ starting from $s$ but we terminate the rollout immediately after the rollout returns to $s$ for the first time. The sample space of such truncated rollout, denoted by $\Omega_{s}$, consists of all the finite trajectories $\xi = (s,\dots,s)$ in which $s$ shows up \emph{only} at the first and the last time step. Let $T_s(\xi)$ denote the recurrence time of $s$ (i.e. the last time step) in a specific $\xi \in \Omega_s$, the probability to obtain such a $\xi$ from the truncated rollout is $\P_{\xi}[\xi] = \prod_{t\geq 1} M_{\pi}(s_{t-1},s_t)$, and $\sum_{\xi \in \Omega_s} \P_{\xi}[\xi] = 1$ due to the finiteness condition of episodic learning process. The expected recurrence time in the truncated rollout is the same as the one in the full rollout as in the former case we only truncate \emph{after} the recurrence, so $\E{\zeta\sim M_\pi}[T_s]=\E{\xi\in \Omega_s}[T_s]$. In the rest of this proof, when we write $\E{}[T_s]$ we mean $\E{\xi\in \Omega_s}[T_s]$.
	
	Let $n_s$ be the number of episodes \emph{completed} before time $T_s$ (i.e. by time step $T_s-1$), we have $\E{} [T_s] = \sum_{k\geq 0} \Pr{n_s=k} \* \E{} [T_s | n_s=k]$. 
	
	$n_s=k$ means that the first recurrence of $s$ occurs in the $(n_s+1)$-th episode. Due to the homogeneity condition, there is a \emph{uniform} episode-wise hitting probability $\P_{\xi}[s\in \xi]$ which applies to all the $n_s+1$ episodes. Denoting $\alpha_s = \P_{\xi}[s\in \xi]$, we have $\Pr{n_s=k} = (1-\alpha_s)^{k} \alpha_s$.  
	
	On the other hand, as the recurrence time $T_s$ falls in the $n_s+1$-th episode, it must be upper bounded by the sum of lengths of all the $n_s+1$ episodes. Thus, due to the finiteness condition, we have $\E{} [T_s | n_s=k] \leq (k+1) \cdot \E{\xi}|\xi|<+\infty$. 
	
	Putting things together, we have
	$
	\E{} [T_s] \leq \sum_{k\geq 0} (1-\alpha_s)^k \alpha_s  \* (k+1) \E{\xi}|\xi| = \alpha_s \E{\xi}|\xi| \* \sum_{k\geq 0} (1-\alpha_s)^k (k+1)  = \alpha_s \E{\xi}|\xi| \* \frac{1}{\alpha_s^2} = \E{\xi}|\xi| /\alpha_s
	$. Note that $\alpha_s>0$ because $s$ is reachable in $M_\pi$.
\end{proof}

\subsection{Lemma \ref{lemma_ergodic}}
\label{proofsec_lemma_ergodic}
\begin{proof}
	Lemma \ref{lemma_irreducible} shows that $M_{\pi}$ is irreducible over $\SS_{\pi}$, so we only need to identify one \emph{aperiodic} state $s\in \SS_{\pi}$, which will prove that $M_{\pi}$ is ergodic, then by Proposition \ref{def_ergodic} the stationary distribution $\rho_{\pi}$ is also the limiting distribution over $\SS_\pi$ (and clearly $\lim\limits_{t \ra \infty} \rho_\pi\step{t} = 0 = \rho_{\pi}$ for unreachable $s\not\in \SS_\pi$).
	
	Consider episodes in $M_{\pi}$ that end at some terminal state $s_1 \in \SS_\perp$ after $n$ steps. Such an episode can start from any terminal state, including $s_1$ itself. Let $\xi_{1,1}$ be the trajectory of such an episode, which is thus a $n$-step recurrence of state $s_1$. On the other hand, due to the assumed condition in the lemma, for some such $n$ we can find episodes with co-prime length $m$ with $\gcd(n,m)=1$. Let $\xi_{1,2}$ be such an episode trajectory of length $m$, which starts again at $s_1$ but ends at some terminal state $s_2\in \SS_\perp$.
	
	Now if $s_1=s_2$, then $\xi_{1,2}$ is another recurrence trajectory of $s_1$ which has co-prime length with $\xi_{1,1}$, thus $s_1$ is aperiodic. Otherwise if $s_1$ and $s_2$ are different terminal states, then we replace the initial state of $\xi_{1,1}$ from $s_1$ to $s_2$, obtaining a third episode trajectory $\xi_{2,1}$, which starts from $s_2$ and ends at $s_1$ after $|\xi_{2,1}|=|\xi_{1,1}|=n$ steps. Consider the concatenated trajectory $\xi_{1,2}\xi_{2,1}$, which is the trajectory of two consecutive episodes that first goes from $s_1$ to $s_2$ in $m$ steps, then goes from $s_2$ back to $s_1$ in $n$ steps, thus form a $(m+n)$-step recurrence of $s_1$. As $\gcd(m+n,n)=\gcd(m,n)=1$ for any $n\neq m$, we know $\xi_{1,1}$ and $\xi_{1,2}\xi_{2,1}$ are two recurrences of $s_1$ with co-prime lengths, thus $s_1$ is still aperiodic.  
\end{proof}

\subsection{Theorem \ref{thm_limiting_dist}}
\label{proofsec_thm_limiting_dist}
\begin{proof}
	Consider an arbitrary admissible episode $\xi$ in the Markov chain $M_\pi$ induced by $\pi$ in the original learning model $\M$. Let the episode length $|\xi|=n$ be an arbitrary integer $n>0$. $\xi$ is still a possible episode under $\pi$ in the perturbed model $\M^+$. In particular, from the definition of $P^+$ (in Definition \ref{def_perturbed_model}) we have $\P_{\xi\sim M^+_\pi}[\xi] = (1-\epsilon) \* \P_{\xi\sim M_\pi}[\xi]$.
	
	On the other hand, in the perturbed model, the trajectory $\xi=(s_0,s_1,\dots,s_n)$ is accompanied with a ``detoured'' trajectory $\xi'=(s_0,s_{null},s_1,\dots,s_n)$, which is the same with $\xi$ except for the detour steps $s_0\ra s_{null} \ra s_1$. The detour is always possible under any policy $\pi$ as both $P^+(s_{null}|s_0,a)=\epsilon$ and $P^+(s_1|s_{null},a)=P(s_1|s_0,a)$ are action-agnostic and non-zero. So, $M_{\pi}$ admits both $\xi$ and $\xi'$, which have episode lengths $n$ and $n+1$, respectively. As $\gcd(n+1,n)=1$ for any positive integer $n$, we got two episodes with co-prime lengths now, then from Lemma \ref{lemma_ergodic} we know $M^+_\pi$ is ergodic and thus has limiting distribution.
\end{proof}

\subsection{Theorem \ref{thm_equa}}
\label{proofsec_thm_equa}
\begin{proof}
	As terminal states have $\gamma=0$, the calculation of the $\prod \ga_\tau$ term in $Q^+_\pi(s,a)$ (in Eq. \eqref{q_pi}) truncates at the end of an episode. As the auxiliary state $s_{null}$ is reachable only from a terminal state in the very first step of an episode, for any other $s\not\in \SS_\perp$ the whole transition model remains the same within an episode, thus $Q_\pi^+(s,a)=Q_\pi(s,a)$ for $s\not\in \SS_\perp$. For $s\in \SS_\perp$, their action values is also unchanged because, with $\gamma(s_{null})=1$ and $R(s_{null})=0$, the detour to $s_{null}$ does not lead to any discounting nor any addition reward. The only effect is the prolonged episode lengths.
	
	On the other hand, we use a coupling argument to prove $\rho^+_\pi \propto \rho_\pi$ over $\SS$. Consider the  \emph{coupled sampling} procedure shown in Algorithm \ref{algo_sampling}. For ease of notation we use ``null'' to denote the auxiliary state $s_{null}$ in the rest of the proof. Consider the status of the variables in the procedure at an arbitrary time $t>0$. $s_t$ is simply a regular sample of the original model $M_\pi$, so $s_t \sim \rho_\pi^{(t)}$. 
	
	$\zeta^+$ is obtained by inserting with probability $\epsilon$ a null state after each terminal state in the original rollout trajectory $\zeta=\{s_t\}$. Comparing with Definition \ref{def_perturbed_model}, we see that $\zeta^+$ follows the perturbed model $(M^+_\pi,\rho_0)$ under $\pi$. More accurately, let $s^+_t$ denote the state in $\zeta^+$ at time $t$, we have $s^+_t \sim \rho^{+(t)}_\pi$. 
	
	$z_t$ always equals an old state that $\zeta$ has encountered at an earlier time, with $\Delta_t$ being the time difference, so $z_t$ (as a random variable well defined by Algorithm \ref{algo_sampling}) must follow the same marginal distribution with $s_{t-\Delta_t}$, thus we have $z_t \sim \rho_\pi^{(t-\Delta_t)}$.
	
	\begin{minipage}[t]{.55\linewidth}
		\vspace{0pt}
		\centering
		\begin{algorithm}[H] \small
			\caption{a coupled sampling procedure}
			\label{algo_sampling}
			\KwIn{$M_\pi$,~  $\rho_0$,~  an i.i.d. sampler $random() \sim \U[0,1]$}
			\KwOut{an infinite tajectory $(z_0,z_1 \dots)$}
			
			sample $s_0 \sim \rho_0$ \\
			initialize trajectory $\zeta^+ \gets (s_0)$ \\
			set $\Delta_0 \gets 0$ \\
			set $z_0 \gets s_0$ \\
			\For{$t = 1,2,\dots$} 
			{
				sample $s_t \sim M_\pi(s_{t-1})$ \\
				\If{$s^+_{t-1} \in \SS_\perp$ {\bf and} $random()<\epsilon$}{
					append $(s_{null}, s_t)$ to $\zeta^+$
				} \Else{
					append $s_t$ to $\zeta^+$
				}
				set $\Delta_t \gets \# s_{null}$ in subsequence $\zeta^+_{0:t-1}$ \\
				set $z_t \gets s_{t-\Delta_t}$
			}
		\end{algorithm}
	\end{minipage}
	\hfill
	\begin{minipage}[t]{0.4\linewidth}
		\vspace{3pt}
		\centering
		\begin{tabular}{cccc}
			\hline
			$\zeta$		& $\zeta^+$			& $\Delta_t$	& $\{z_t\}$ \\	
			\hline
			$s_0$		& $\highlight{s_0}$	& $0$			& $s_0$ 	\\
			$s_1$		& null 				& $0$ 			& $s_1$		\\
			$s_2$		& $\highlight{s_1}$	& $1$			& $s_1$		\\
			$s_3$		& $\highlight{s_2}$	& $1$			& $s_2$		\\
			$s_4$		& null				& $1$			& $s_3$		\\
			$s_5$		& $s_3$				& $2$			& $s_3$		\\
			& $\highlight{s_4}$	& 				&			\\
			& $s_5$				&				&			\\
			&					&				&			\\
			\hline
		\end{tabular} 
		\flushleft
		\vspace{3pt} 
		\small
		An example running of Algorithm \ref{algo_sampling}. The table shows a snapshot when $t=5$. In the $\zeta^+$ column, ``null'' denotes $s_{null}$, and terminal states ($s_0, s_1, s_2, s_4$) are highlighted with gray background.
	\end{minipage}
	%

	In above we have associated $\rho\step{t}_\pi$ and $\rho^{+{\scriptscriptstyle (t)}}_\pi$ to the state distributions of the three random sequences $\zeta$, $\zeta^+$, and $\{z_t\}$, now we consider the coupling effects between these random sequences. Observe that when $s^+_t$ is not null, it must be an old state in $\zeta$ with time index shifted by the number of null states inserted before $t$ (i.e. by $t-1$) in previous samplings, which is \emph{the} state the procedure uses to assign value for $z_t$. In other words, $z_t=s^+_t$ conditioned on $s^+_t\neq$ null. Therefore, for any state $s\in \SS$,
	\eq{z_s_plus}{
		\Pr{z_t = s} = \Pr{s^+_t=s | s^+_t\neq \text{null}} 
		= \frac{\Pr{s^+_t=s} \* \Pr{s^+_t=\text{null}|s^+_t=s}}{\Pr{s^+_t\neq \text{null}}}
		= \frac{\Pr{s^+_t=s} \* 1}{\Pr{s^+_t\neq \text{null}}}
		.}
	
	\eqref{z_s_plus} holds for any $t>0$, thus must also hold at limit when $t\ra \infty$, as long as such limits exist. As $s^+_t$ follows the perturbed Markov chain $M^+_\pi$, it is known to have limiting distribution as proved in Theorem \ref{thm_limiting_dist}, thus the limits of both $\Pr{s^+=s}$ and $\Pr{s^+_t\neq \text{null}}$ at the rhs of \eqref{z_s_plus} exist, which means the limit of the lhs $\limit{t \ra \infty} \Pr{z_t=s}$ must also exist. Let $c_\pi \define \limit{t\ra\infty}\Pr{s^+_t \neq \text{null}}$, we have
	\eq{z_limit}{
		\limit{t\ra\infty} \Pr{z_t=s} = \limit{t\ra\infty} \Pr{s^+_t=s} / c_\pi = \rho^+_\pi(s) / c_\pi
		.}
	Note that $c_\pi>0$ because $\epsilon<1$ by definition.
	
	Now we only need to prove $\limit{t\ra\infty} \Pr{z_t=s} = \rho_\pi$. For that purpose, first observe that $\Delta$ gets increased in Algorithm \ref{algo_sampling} only if $\zeta^+$ entered $s_{null}$ in the last step -- only in that case $\Delta$ has a new ``null'' counted in. In other words, $\Delta_{t+1}=\Delta_t+1$ if $s^+_t=\text{null}$, otherwise $\Delta_{t+1}=\Delta_t$.
	
	Now consider the value of $z_{t+1}$. When $s^+_t=\text{null}$, we have $z_{t+1} = s_{t+1-\Delta_{t+1}} = s_{t+1-\Delta_t-1} = s_{t-\Delta_t}$. When $s^+_t\neq \text{null}$, we have $z_{t+1} = s_{t+1-\Delta_{t+1}} = s_{t+1-\Delta_t}$. Let $c_\pi^{(t)} = \Pr{s^+_t\neq \text{null}}$ for any $t$, then for any $s\in \SS$ we have
	\eq{z_t_plus_1}{
		\Pr{z_{t+1}=s} = (1-c_\pi^{(t)}) \* \Pr{s_{t-\Delta_t}=s} + c_\pi^{(t)} \* \Pr{s_{t+1-\Delta_t}=s}
		.}
	Due to \eqref{z_limit}, the two terms $\Pr{z_{t+1}=s}$ and $\Pr{s_{t-\Delta_t}=s}$ (which equals $\Pr{z_t=s}$ by the definition of $z_t$) in \eqref{z_t_plus_1} have the same limit. Taking both sides of \eqref{z_t_plus_1} to limit and re-arranging, yields
	\eq{consecutive_step}{
		c_\pi \* \limit{t\ra\infty} \Pr{s_{t-\Delta_t}=s} 
		= \limit{t\ra\infty} \Pr{z_{t+1}=s} - (1-c_\pi) \limit{t\ra\infty} \Pr{z_t=s} 
		= c_\pi \* \limit{t\ra\infty} \Pr{s_{t+1-\Delta_t}=s}
		.}
	The two ends of \eqref{consecutive_step} gives $\limit{t\ra\infty} \Pr{s_{t-\Delta_t}=s} = \limit{t\ra\infty} \Pr{s_{t+1-\Delta_t}=s}$, which holds for all states $s\in \SS$, thus by the definition of $\rho\step{t}_\pi$ we have
	\eq{rho_recursion}{
		\limit{t\ra\infty} \rho_\pi^{({t-\Delta_t})} = 
		\limit{t\ra\infty} \rho_\pi^{({t+1-\Delta_t})} = 
		\limit{t\ra\infty} \rho_\pi^{({t-\Delta_t})} \* M_\pi
		.}
	The two ends of \eqref{rho_recursion} gives a fixed point of the operator $M_\pi$, for which $\rho_\pi$ is known to be the only solution, so $\limit{t\ra\infty} \rho_\pi^{({t-\Delta_t})} = \rho_\pi$. Further combining with \eqref{z_limit}, we finally obtain
	\eq{}{
		\rho_\pi(s) = \limit{t\ra\infty} \Pr{s_{t-\Delta_t}=s} = \limit{t\ra\infty} \Pr{z_t=s} = \rho^+_\pi(s) / c_\pi
		,}
	for every $s\in \SS$.
\end{proof}

\subsection{Theorem \ref{thm_avg_epi}}
\label{proofsec_thm_avg_epi}
\begin{proof}
	Averaging the both sides of the Bellman equation \eqref{bellman} over the stationary distribution $\rho_pi$ and re-arranging, yields
	\eqmx{
		0 
		&= \Exp{s\sim \rho_\pi \\ a\sim \pi(s)}{ 
			\E{s'\sim P(s,a) \\ a'\sim\pi(s')} [R(s')+\gamma(s')Q_\pi(s',a')] -Q_\pi(s,a) 
		} \\
		&= \E{s\sim \rho_\pi \\ a\sim \pi(s)} \Exp{s'\sim P(s,a) \\ a'\sim\pi(s')}{R(s')} +
		\E{s\sim \rho_\pi \\ a\sim \pi(s)} \Exp{s'\sim P(s,a) \\ a'\sim\pi(s')}{\gamma(s')Q_\pi(s',a')} -
		\Exp{s\sim \rho_\pi \\ a\sim \pi(s)}{Q_\pi(s,a)} \\
		&= \E{s\sim \rho_\pi} [R(s)] + \E{s\sim \rho_\pi \\ a\sim \pi(s)} [\gamma(s)Q_\pi(s,a)] - \E{s\sim \rho_\pi \\ a\sim \pi(s)} [Q_\pi(s,a)] \\
		&= \E{s\sim \rho_\pi} [R(s)] - \E{s\sim \rho_\pi \\ a\sim \pi(s)} [\big( 1-\gamma(s) \big) Q_\pi(s,a)]
	}
	or equivalently
	\eq{R_Q}{
		\Exp{s\sim\rho_\pi}{R(s)} = \E{s\sim\rho_\pi}~ \Exp{a\sim\pi(s)}{\Big( 1-\gamma(s) \Big) \* Q_\pi(s,a)}
	.}

	Substituting $\ga(s)=\indicator{s\not\in \terminals}$ into \eqref{R_Q} will remove all terms corresponding to non-terminal states, giving	
	\eqm{R_Q0}{
		\E{s\sim \rho_\pi} [R(s)] &= \sum_{s\in \SS_\perp} \rho_\pi(s) \* \E{a\sim\pi(s)}[Q_\pi(s,a)] \\
		&= \Big( \sum_{s\in \SS_\perp} \rho_\pi(s) \Big) \* V_\pi(s_\perp) 
		= \Big( \sum_{s\in \SS_\perp} \rho_\pi(s) \Big) \* J_{epi} 
	} 	
	Then the following proposition transforms $\sum_{s\in \SS_\perp} \rho_\pi(s)$ into $1/\E{\zeta\sim M_\pi}[T(\zeta)]$, as desired.
	
	\begin{proposition}
	\label{prop_ET}
		$\E{\zeta\sim M_\pi}[T(\zeta)] = 1 / \sum_{s\in \SS_\perp} \rho_\pi(s)$
	\end{proposition}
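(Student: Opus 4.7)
The plan hinges on the observation that the homogeneity condition makes $\SS_\perp$ behave as a single regenerative site for the chain $M_\pi$. First I would note that the distribution of $T(\zeta)$ does not depend on the particular starting terminal state: since all $s\in\SS_\perp$ share an identical and action-agnostic one-step transition kernel (Definition \ref{def_episodic}), the conditional law of $(s_1,s_2,\dots)$ given $s_0\in\SS_\perp$ is the same for every choice of $s_0$, and in particular $\E{\pi}[T(\zeta)\mid s_0=s]=\E{\zeta\sim M_\pi}[T(\zeta)]$ for every $s\in\SS_\perp$.

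Next, I would aggregate all terminal states of $M_\pi$ into a single super-state $\tilde{s}_\perp$, producing a quotient Markov chain $\tilde{M}_\pi$ on $(\SS\setminus\SS_\perp)\cup\{\tilde{s}_\perp\}$. This aggregation is well defined as a Markov chain precisely because of the homogeneity condition; otherwise the outgoing distribution from $\tilde{s}_\perp$ would depend on which original terminal state had been collapsed into it, breaking the Markov property. Irreducibility and positive recurrence transfer from $M_\pi$ (Lemmas \ref{lemma_irreducible} and \ref{lemma_recurrent}) to $\tilde{M}_\pi$ by routine checking, so Proposition \ref{thm_mc_stationary} applies to $\tilde{M}_\pi$ and yields that the stationary probability of $\tilde{s}_\perp$ equals the reciprocal of the mean recurrence time of $\tilde{s}_\perp$.

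Finally I would identify the two sides of that reciprocal relation with the quantities in the claim. The mean recurrence time of $\tilde{s}_\perp$ under $\tilde{M}_\pi$ is exactly $\E{\zeta\sim M_\pi}[T(\zeta)]$, because a single recurrence of $\tilde{s}_\perp$ in $\tilde{M}_\pi$ corresponds step-for-step with one episode of $M_\pi$ starting and ending in $\SS_\perp$. The stationary probability of $\tilde{s}_\perp$ under $\tilde{M}_\pi$ equals $\sum_{s\in\SS_\perp}\rho_\pi(s)$, which I would verify by directly checking that the projected row vector with entries $\rho_\pi(s)$ for $s\notin\SS_\perp$ and $\sum_{s\in\SS_\perp}\rho_\pi(s)$ at $\tilde{s}_\perp$ satisfies the fixed-point equation for $\tilde{M}_\pi$, then invoking uniqueness (Theorem \ref{thm_stationary_dist} applied to $\tilde{M}_\pi$).

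The main obstacle is bookkeeping around the aggregation: verifying that $\tilde{M}_\pi$ is a genuine Markov chain, that irreducibility and positive recurrence are inherited, and that mean recurrence to $\tilde{s}_\perp$ really equals the mean episode length under $M_\pi$. All three are conceptually routine consequences of homogeneity and the strong Markov property, but need to be stated with care. An alternative route that avoids aggregation altogether would combine the time-average ergodic theorem with a renewal-process argument on the i.i.d.\ episode lengths, giving $\sum_{s\in\SS_\perp}\rho_\pi(s)=\lim_N V_N/N=1/\E{\pi}[T]$ where $V_N$ counts terminal visits in the first $N$ steps; however, that route pulls in machinery beyond Proposition \ref{thm_mc_stationary}, whereas the aggregation argument stays inside the toolkit already developed in the paper.
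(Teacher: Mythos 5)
Your proposal is correct and follows essentially the same route as the paper: the paper's own proof likewise aggregates all terminal states into a single macro-state (exploiting the homogeneity condition to keep the chain well defined), identifies the episode length with the recurrence time of that macro-state, and applies the reciprocal relation of Proposition \ref{thm_mc_stationary}. Your extra care in verifying that the projected stationary vector satisfies the fixed-point equation of the quotient chain is a slightly more explicit version of the paper's terse claim that $\rho'(s)=\rho(s)$ off the terminal set, but it is not a different argument.
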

	\begin{proof} 
	Consider the MDP $\M'$ obtained by grouping all terminal states in $\M$ into a single ``macro-state'' $s_\perp$. Note that in general, grouping terminal states into one will change how the model delivers the rewarding feedback to the agent, as $R(s)$ may vary between terminal states. However, for the purpose of this proof, the transition dynamics of $\M$ and $\M'$ under \emph{given} policy $\pi$ (that is, the markov chains $M_\pi$ and $M'_\pi$) will remain the same as all terminal states are homogeneous in transition probabilities. In particular, the expected episode length of $M_\pi$ will be the same with that of $M'_\pi$, from which we obtain 
	\eqm{}{
		\E{\zeta\sim M_\pi}[T(\zeta)] &= \E{\zeta\sim M'_\pi}[T(\zeta)] = \E{\zeta\sim M'_\pi}[T_{s_\perp}(\zeta)] \\ 
		&= 1/\rho'(s_\perp) = 1/(1-\sum_{s\in \SS \setminus \terminals} \rho'(s)) \\
		&= 1/(1-\sum_{s\in \SS \setminus \terminals} \rho(s)) = 1/\sum_{s\in \SS_\perp} \rho_\pi(s)
	.}
	In above $\E{}[T]=\E{}[T_{s_\perp}]$ because $s_\perp$ is the only terminal states in $M'_\pi$, and $\rho'(s)$ exists because $\M'$ is episodic.  
	\end{proof}
	Substituting Proposition \ref{prop_ET} back to \eqref{R_Q0} completes the proof.
\end{proof}

\subsection{Theorem \ref{thm_pg}}
\label{proofsec_thm_pg}
\begin{proof}
	First consider the quantity $\Exp{s,a\sim \rho_\theta}{\nabla_\theta Q_\theta (s,a) }$ in its general form (not necessarily with the specific episodic $\gamma$ function as assumed). By the Bellman equation \eqref{bellman} we have
	\eqm{avg_grad_q}{
		&\Exp{s,a\sim \rho_\theta}{\nabla_\theta Q_\theta (s,a) }
		= \Exp{s,a\sim \rho_\theta} {
			\nabla_\theta \Exp{s'\sim \T(s,a)}{\R(s') + \gamma(s') \sum_{a'} \pi(a'|s';\theta) \, Q_\theta(s',a')}
		} \\
		&= \E{s,a\sim \rho_\theta} \Exp{s'\sim \T(s,a)}{
			\gamma(s') \sum_{a'} \nabla_\theta \Big( \pi(a'|s';\theta) \, Q_\theta(s',a') \Big)
		} \\
		&= \Exp{s \sim \rho_\theta}{
			\gamma(s) \sum_{a} \nabla_\theta \Big( \pi(a|s;\theta) \, Q_\theta(s,a) \Big)
		} \\
		&= \Exp{s\sim \rho_\theta}{
			\gamma(s)\sum_a  \Big( Q_\theta(s,a) \nabla_\theta \pi(a|s;\theta) + \pi(a|s;\theta) \nabla_\theta Q_\theta(s,a) \Big)
		} \\
		&= 	\Exp{s,a\sim \rho_\theta}{\gamma(s) Q_\theta(s,a)\nabla_\theta \log \pi(a|s;\theta)} + 
		\Exp{s,a\sim \rho_\theta}{\gamma(s) \nabla_\theta Q_\theta(s,a)}
		.}
	Moving the second term in the right-hand side of \eqref{avg_grad_q} to the left, yields
	\eq{avg_grad_q_2}{
		\Exp{s,a\sim \rho_\theta}{\Big(1-\gamma(s)\Big) \nabla_\theta Q_\theta(s,a)} = 
		\Exp{s,a\sim \rho_\theta}{\gamma(s) \, Q_\theta(s,a)\nabla_\theta \log \pi(a|s;\theta)}
	} 
	Now consider the specific $\gamma$ function with $\gamma(s)=1$ for $s\not\in \SS_\perp$, and $\gamma(s)=0$ for $s\in \SS_\perp$. Substituting such episodic $\gamma$ function into \eqref{avg_grad_q_2}, yields
	\eq{avg_grad_q_3}{
		\sum_{s\in \SS_\perp} \rho_\theta(s) \Exp{a\sim\pi(s;\theta)}{\nabla_\theta Q_\theta(s,a)} 
		= \sum_{s\not\in \SS_\perp} \rho_\theta(s) \Exp{a\sim\pi(s;\theta)}{Q_\theta(s,a) \nabla_\theta \log\pi(s,a;\theta)}
		.}
	The left-hand side of \eqref{avg_grad_q_3} is an average of $\nabla_\theta Q_\theta(s,a)$ over terminal states $s\in \terminals$ and over the policy-induced actions $a\sim \pi(s;\theta)$. Note that $\nabla_\theta Q_\theta(s_\perp,a)=\nabla_\theta J_{epi}(\theta)$ for any $s_\perp \in \terminals$ and any $a\in \AS$, which means in the left-hand side we are averaging a constant term that can be moved out of the summations (of both $s$ and $a$), yielding 
	\eq{avg_grad_q_4}{
		\nabla_\theta J_{epi} \* \sum_{s\in \SS_\perp} \rho_\theta(s)
		= \sum_{s\not\in \SS_\perp} \rho_\theta(s) \Exp{a\sim\pi(s;\theta)}{Q_\theta(s,a) \nabla_\theta \log\pi(s,a;\theta)}
		.}
	
	Denoting $\rho_\theta(\SS_\perp) \define \sum_{s\in \SS_\perp} \rho_\theta(s)$ and $\rho_\theta(\SS \setminus \SS_\perp) \define \sum_{s\not\in \SS_\perp} \rho_\theta(s)$, and substituting back to \eqref{avg_grad_q_4}, yields
	\eqm{}{
		\nabla_\theta J_{epi}
		&= \frac{1}{\rho_\theta(\SS_\perp)} \* \sum_{s\not\in \SS_\perp} \rho_\theta(s) \, \Exp{a\sim\pi(s;\theta)}{Q_\theta(s,a) \nabla_\theta \log\pi(s,a;\theta)} \\
		&= \frac{\rho_\theta(\SS \setminus \SS_\perp)}{\rho_\theta(\SS_\perp)} \* \sum_{s\not\in \SS_\perp} \frac{\rho_\theta(s)}{\rho_\theta(\SS \setminus \SS_\perp)} \, \Exp{a\sim\pi(s;\theta)}{Q_\theta(s,a) \nabla_\theta \log\pi(s,a;\theta)} \\
		&= \frac{1-\rho_\theta(\SS_\perp)}{\rho_\theta(\SS_\perp)} \* \Exp{s,a \sim \rho_\theta}{Q_\theta(s,a) \, \nabla_\theta \log \pi(s,a;\theta) ~\Big|~ s \not\in \SS_\perp}
	}
	Finally, due to Proposition \ref{prop_ET}, $\frac{1-\rho_\theta(\SS_\perp)}{\rho_\theta(\SS_\perp)} = \E{\theta}[T]-1$. 
\end{proof}

\section{On the definition of on-policy distribution in episodic tasks}
\label{sec_on_policy}
In page 199 of \citet{2018:RL}, the on-policy distribution of episodic tasks has been defined as $\mu^1_\pi$, the normalized undiscounted expected visiting frequency of a state in an episode. We believe the stationary distribution $\rho_\pi$ identified in this paper helps with better shaping this conception in at least three aspects:

First, defining on-policy distribution directly as the stationary distribution $\rho_\pi$ helps unify the notion of on-policy distribution across continual and episodic tasks without changing its mathematical identity (as $\rho_\pi=\mu^1_\pi$).

Second, it seems that the conception of on-policy distribution is \emph{intended} to capture ``\emph{the number of time steps sent, on average, in state $s$ in a single episode}'' (\citet{2018:RL}, page 199). However, note that there are two possible formal semantics: $\E{\zeta} [n_s] / \E{\zeta} [T]$ and $\E{\zeta} [n_s/T]$ -- both capture the intuition of ``time spent on $s$ in an episode on average'', and it is not immediately clear why we should favor normalizing all episodes uniformly by the average episode length over normalizing in a per-episode manner. In fact, the ergodic theorem (i.e. Proposition \ref{thm_ergodic}) has ``favored'' the latter semantic, by connecting the per-episode-normalized reward $\E{}[\sum_{t=1}^T R(s_t)/T]$ to the steady-state reward $\E{s\sim \rho_\pi}[R(s)]$. In comparison, Theorem \ref{thm_avg_epi} ``favors'' the former semantic, establishing equality between the uniformly-normalized distribution $\mu^1_\pi$ and the steady-state distribution $\rho_\pi$, which justifies formal semantic of ambiguous intuition in a more principled way.

Lastly, in \citet{2018:RL} (page 199, Eq. 9.2), the ``existence'' of the on-policy distribution is defended by writing $\mu_\pi^1$ into the (normalized) solution of a non-homogeneous system of linear equations. But not every non-homogeneous linear system has a unique solution. Also, the linear-system argument becomes more subtle when generalizing from finite state spaces (which is assumed in \cite{2018:RL}) to infinite state spaces. Our treatment to the stationary distribution $\rho_\pi$ thus consolidates the concept of on-policy distribution by providing an alternative theoretical basis (based on the markov chain theory) for this term.

\section{On what traditional policy-gradient estimators actually compute}
\label{sec_pg_algo}

As common practice, traditional policy gradient algorithms use the discounted value function but compute the gradient based on undiscounted data distribution, which neither follows the classic (discounted) policy gradient theorem nor does it follow the undiscounted steady-state policy gradient theorem. This gap between theory and practice is well known in the community, and \cite{2014:thomas} has examined what such ``mixed'' policy gradient estimation would obtain in the continual settings. But similar analysis in the episodic case was not done exactly because of the lack of steady state for the latter. As \citet{2014:thomas} noted, ``\emph{[the steady-state objective] $\bar{J}$ is not interesting for episodic MDPs since, for all policies, [the steady-state distribution] $\bar{d}^\theta(s)$ is non-zero for only the post-terminal absorbing state. So, henceforth, our discussion is limited to the non-episodic setting}''.

However, the existence of unique stationary distribution now enables such analysis even for episodic tasks. Specifically, substituting $\ga(s)=\ga_c<1$ into \eqref{pg_proof1}, which is copied below for convenience
\begin{equation*}
\Exp{s,a\sim \rho_\theta}{\Big(1-\gamma(s)\Big) \nabla_\theta Q^{\ga_c}_\theta(s,a)} = 
\Exp{s,a\sim \rho_\theta}{\gamma(s) \, Q^{\ga_c}_\theta(s,a)\nabla_\theta \log \pi(a|s;\theta)}
,\end{equation*}
yields,
\eqm{}{
\Exp{s,a\sim \rho_\theta}{Q^{\ga_c}_\theta(s,a)\nabla_\theta \log \pi(a|s;\theta)} 
&= \frac{1-\ga_c}{\ga_c} \Exp{s,a\sim \rho_\theta}{\nabla_\theta Q^{\ga_c}_\theta(s,a)} 
}

The left-hand side is exactly what the classic policy gradient algorithms compute in practice, and the right-hand side is proportional to the gradient of the steady-state performance \emph{if} the policy change does not affect the stationary distribution. This observation is in analogy to what \citet{2014:thomas} concluded in the continual setting.


We note that, together with the proofs of Theorem \ref{thm_avg_epi} and \ref{thm_pg}, analysis in this section also exemplify a pattern of steady state analysis for episodic tasks: first examine properties of \emph{any} RL task under its steady state, then substitute in a specific $\gamma$ function to derive corollaries dedicated to episodic tasks as special cases. The same approach could potentially be used in more theoretical and algorithmic problems of episodic RL.

\section{Experiment details}
\label{sec_exp}

This section complements Section \ref{sec_algo} and reports details about the experimentation settings and results.
In all experiments, the behavior policy is a Gaussian distribution with diagonal covariance, whose mean is represented by a neural network with two fully-connected hidden layers of 64 units\cite{2017:ppo}. All model parameters are randomly initialized. The RoboSchoolBullet benchmark~\cite{pybullet} consists of challenging locomotion control tasks that have been used as test fields for state-of-the-art RL algorithms~\cite{2018:sac}.

\subsection{Details of the policy gradient experiments}
\label{sec_ael}

In the gradient checking experiment reported in Section \ref{sec_pg}, we run a standard stochastic gradient ascent procedure with the gradient estimated by $F_{SSPG}$. In each policy-update iteration, we run $K$ independent rollouts, each lasts for ten episodes. As our focus is to examine the quality of gradient estimation, we set the batch size $K$ to be one million, so as to have accurate estimate of the ground-truth gradient. We then compute the AEL term $\widehat{\E{\theta}[T]}-1$ by averaging over the $10\* K$ episodes, which is very close to the true AEL value (minus $1$) thanks to the very large sample size. The next step is to collect the $(s_t,a_t)$ pair at the single step of $t=3 \* AEL$ in each of the $K$ rollouts, creating an i.i.d. sample $\D$ of size $K$. The $\widehat{Q_\theta}(s,a)$ term for $(s,a)\in \D$ is estimated using the corresponding episode return (from $s$). 

Two gradient estimators are implemented: one follows exactly $F_{SSPG}$ with the AEL term, the other omits the AEL term from $F_{SSPG}$. We applied constant learning rate $\alpha=5\times 10^{-4}$ to the estimator \emph{omitting} the AEL term, which simulated what traditional policy gradient estimators have been doing. The resulted policy update from this baseline method is thus $\Delta \theta = \alpha \* \bar{F}_{SSPG} / (\E{}[T]-1)$, where $\bar{F}_{SSPG}$ denotes the mean value of $F_{SSPG}$ over the sample $\D$. We call such a ``standard practice'' of policy-gradient update, the \emph{constant learning rate} method. It is clear that the ``constant learning rate'' method is equivalent to applying a drifting learning rate $\alpha/(\E{}[T]-1)$ to the truly unbiased gradient estimator $\bar{F}_{SSPG}$.

On the other hand, we applied the same constant learning rate $\alpha$ also to the estimator \emph{with} the AEL term, computing policy update as $\Delta \theta = \alpha \* \bar{F}_{SSPG}$. This is equivalent to applying an AEL-adaptive learning rate $\alpha \* (\E{}[T]-1)$ to the traditional policy gradient estimator (that omits the AEL term), and is thus called the \emph{adaptive learning rate} method here. Note that the names of ``constant learning rate'' and ``adaptive learning rate'' are from the perspective of traditional policy gradient framework which omits the AEL term.

We measure the quality of an estimated policy gradient $\widehat{\nabla_\theta J}$ by examining the quality of the projected performance change $\widehat{\Delta J}$ as entailed by the estimated gradient. Specifically, let $\Delta\theta = \widehat{\nabla_\theta J} \* \alpha$ be the corresponding policy update of the estimated gradient, the projected performance change from such a policy update is calculated as $\widehat{\Delta J} \define ||\widehat{\nabla_\theta J}||_2 \* ||\Delta \theta||_2 = ||\widehat{\nabla_\theta J}||_2^2 \* \alpha$. It is known that for the $\widehat{\Delta J}$ thus computed, we have $\widehat{\Delta J}=\Delta J$ if $\widehat{\nabla_\theta J} = \nabla_\theta J$, and that the more biased the estimated policy gradient $\widehat{\nabla_\theta J}$ is (to the true gradient $\nabla_\theta J$), the more biased the projected performance change $\widehat{\Delta J}$ will be (to the true performance change $\Delta J$).
Based on this principle, we computed, for each policy-update iteration, the projected performance changes from both the ``constant learning rate'' method and the ``adaptive learning rate'' method, and compared them with the true performance change $\Delta J =J_{new}-J_{old}$ whose $\%90$-confidence interval is calculated from the statistics of episode-wise returns in the $10 \* K$ independent rollouts, for both the old and new policies. 

\begin{figure}[b]
	\centering
	
	\subfloat[Policy iteration $0-30$\label{fig:pg_0}]{
		\includegraphics[width=0.5\linewidth]{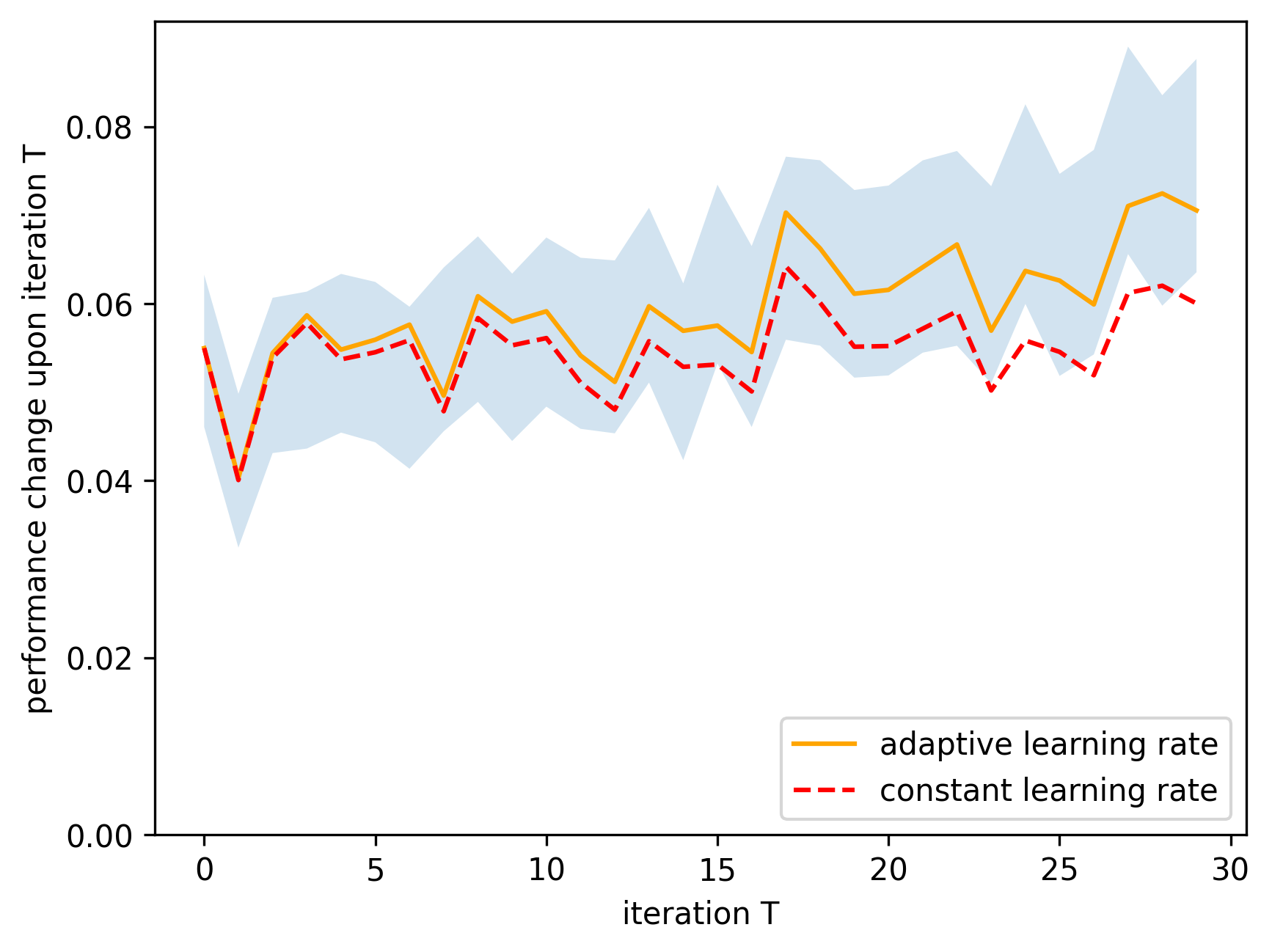}
	}
	\subfloat[Policy iteration $100-130$\label{fig:pg_100}]{
		\includegraphics[width=0.5\linewidth]{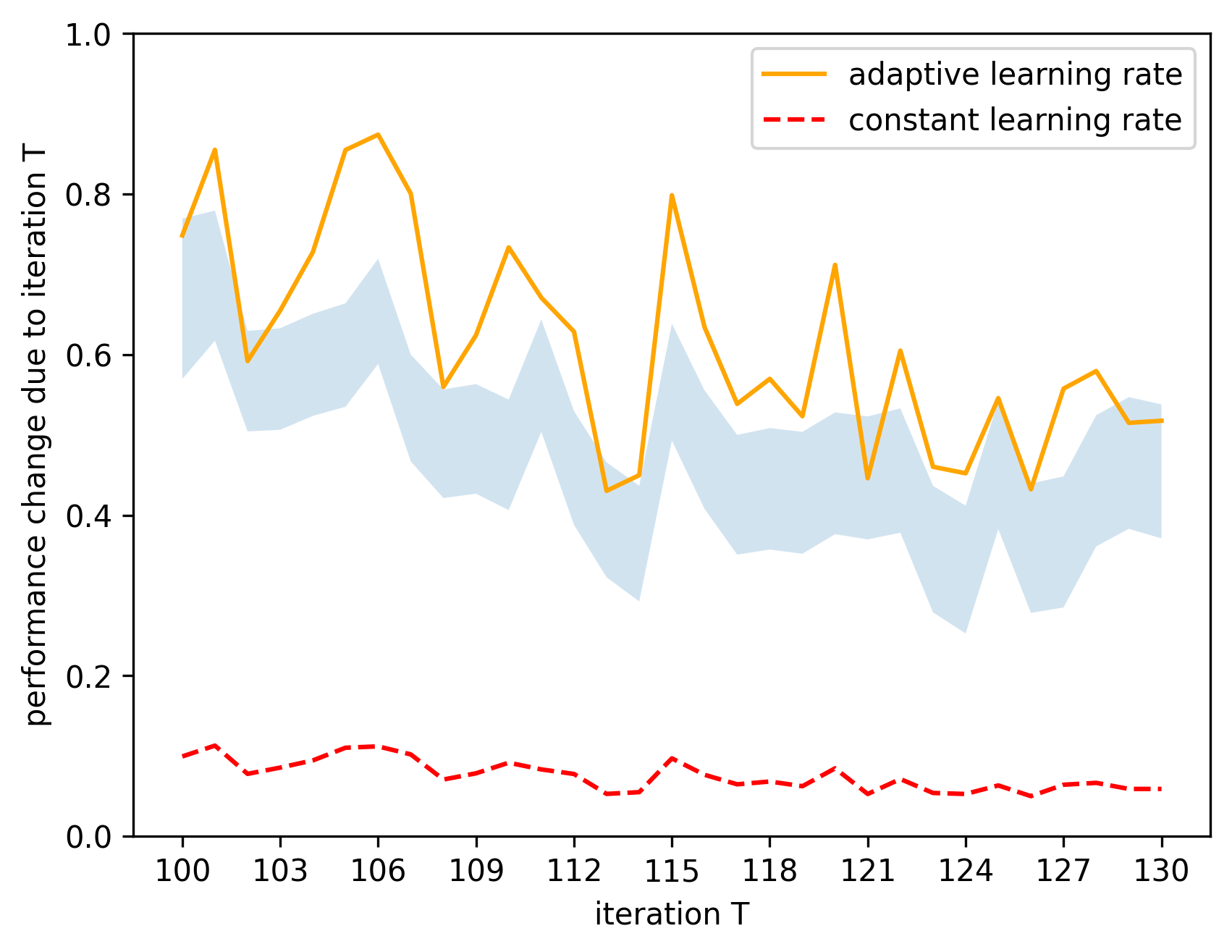}
	}
	\caption{Projected vs actual performance changes in HopperBulletEnv-v0 as quality checking for policy gradient estimators with the AEL term (corresponding to ``adaptive learning rate'') and without the AEL term (corresponding to ``constant learning rate''). 
	}
	\label{fig:hopper_reinforce}	
\end{figure}

We conducted the above experimentation procedure to the HopperBulletEnv-v0 environment in RoboSchoolBullet, and Figure \ref{fig:hopper_reinforce} revealed how quickly the drifting AEL term can hurt the quality of gradient estimation in this environment. The shaded area illustrates the $90\%$ confidence intervals of the true performance changes after each policy update. The red dotted curve is the projected performance change from the ``constant learning rate'' method which treats the AEL term as ``a proportionality constant that can be absorbed in the learning rate''. We see that this traditional policy gradient method quickly leads to bias after only tens of iterations, as Figure \ref{fig:hopper_reinforce}(a) shows. The bias becomes quite significant after $100$ updates, as Figure \ref{fig:hopper_reinforce}(b) shows. On the other hand, the orange curves are the projected performance change by the ``adaptive learning rate'' method, which follows the unbiased estimator as given by Theorem 5, and leads to much less bias as Figure \ref{fig:hopper_reinforce}(a) and Figure \ref{fig:hopper_reinforce}(b) show.

\subsection{Details of the perturbation experiments}
\label{sec_perturb_exp}

Figure \ref{fig:mixing_hopper} shows how the mean value of the marginal distribution for each state dimension evolves over time in the (multi-episode) learning process of the \emph{raw} Hopper environment under random policy, without perturbation. Each mean-value point in the figure is calculated by averaging over 100,000 rollouts, which serves as an index, or an indicator to the marginal distribution for the corresponding state dimension. Time is normalized to multiples of Average Episode Length (or AEL), and we see that the marginal distributions of all state dimensions have converged at $t= 2 \* AEL$.
We found that similar convergence rates apply to policies at different stages of the RL training in Hopper. The result suggests that perturbation may not be needed for the Hopper environment. 

On the other hand, Figure \ref{fig:mixing_humanoid_raw} and \ref{fig:mixing_halfcheetah_raw} show that in some other environments, the marginal distributions may converge much slower: In the Humanoid environment, the marginal distribution of the state dimensions takes more than $20\* AEL$ steps to converge, while in the HalfCheetah environment, the marginal distribution appears to not converge at all. Both environments indeed have strong periodicity in episode length. Especially, the episode length of Halfcheetah is actually fixed at exactly $1000$ under the random policy (and under any other policy as well). These observations indicate that perturbation is indeed needed in general, if we want to approximate the stationary distributions with marginal distributions of a single (or a few) step.

Moreover, Figure \ref{fig:mixing_humanoid_perturbed} and \ref{fig:mixing_halfcheetah_perturbed} illustrate that the rollouts in Humanoid and HalfCheetah quickly converge to their respective steady states after applying the recursive perturbation trick with $\epsilon=1-1/\E{}[T]$. Comparing with Figure \ref{fig:mixing_humanoid_raw} and \ref{fig:mixing_halfcheetah_raw}, respectively, we clearly see the effectiveness of the recursive perturbation on these two popular RL environments. In both cases the convergence occurs before $t^* = 3 \E{}[T]$.

Figure \ref{fig:mixing_synthetic} shows that the ``$3$-AEL convergence'' observation generalizes to even adversarially synthesized environments. These environments have deterministic episode lengths $n$ and the state $s_t$ regularly goes over from $0$ to $n-1$ in an episode. Without perturbation, the marginal distribution $\rho\step{t}$ of such environment would concentrate entirely on the single state $s_t = (t\mod n)$. We then applied the recursive perturbation to such ``state-sweeping'' environment with $n=20, 100, 500, 2000$. In all cases $\epsilon = 1-1/n$, and we run a large number of independent (and perturbed) rollouts for marginal distribution estimation (we have to run 30 million rollouts when $N=2000$ so as to observe the ground-truth distribution). From each of the rollouts we collected the two states at $t=3n$ and $t=3n-1$ as two sample points, and we observed the empirical state distribution over all the sample points thus obtained, for each environment (i.e. for each $n$). As we can see from Figure \ref{fig:mixing_synthetic}, even for the completely periodic environment with fixed episode length $n=2000$, the marginal distribution still converges well to the stationary distribution (which is the uniform distribution over $\{0\dots n-1\}$) in only $3n$ steps, after applying the recursive perturbation.

Note that the perturbation with self-loop probability $\epsilon=1-1/n$ causes the rollout to stay in the null state for $n-1$ steps per episode on average, which in turn causes half of the samples obtained at a fixed rollout time to be the null state (if the marginal distribution at that time has already converged). We sampled two consecutive time steps around $3n$ to compensate this loss of data, so that the two-step sampling provides roughly the same amount of ``useful samples'' as the original batch size (in one-step sampling without perturbation). Again, as Figure \ref{fig:mixing_synthetic} illustrates, the empirical distribution from such two-step sampling approximates the desired uniform distribution pretty well around $t^*=3n$. In other words, the practical cost for the half amount of samples wasted in the null state is to just sample one more step in the rollout.  

\begin{figure}[t]
	\centering
	\subfloat[HopperBulletEnv-v0 (raw)\label{fig:mixing_hopper}]{
		\includegraphics[width=0.5\linewidth]{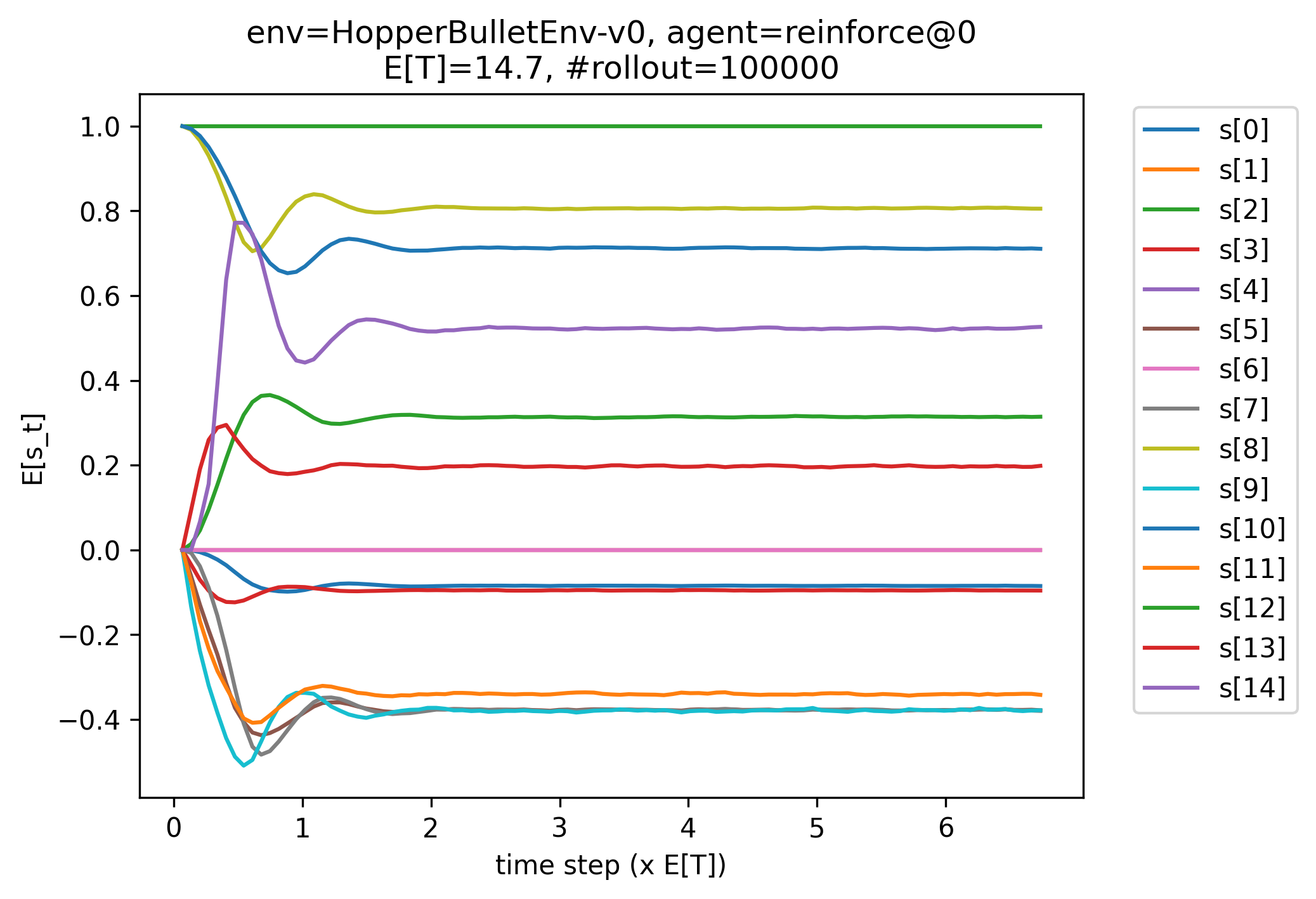}
	}
	\subfloat[State-Sweeping (perturbed)\label{fig:mixing_synthetic}]{
		\includegraphics[width=0.42\linewidth]{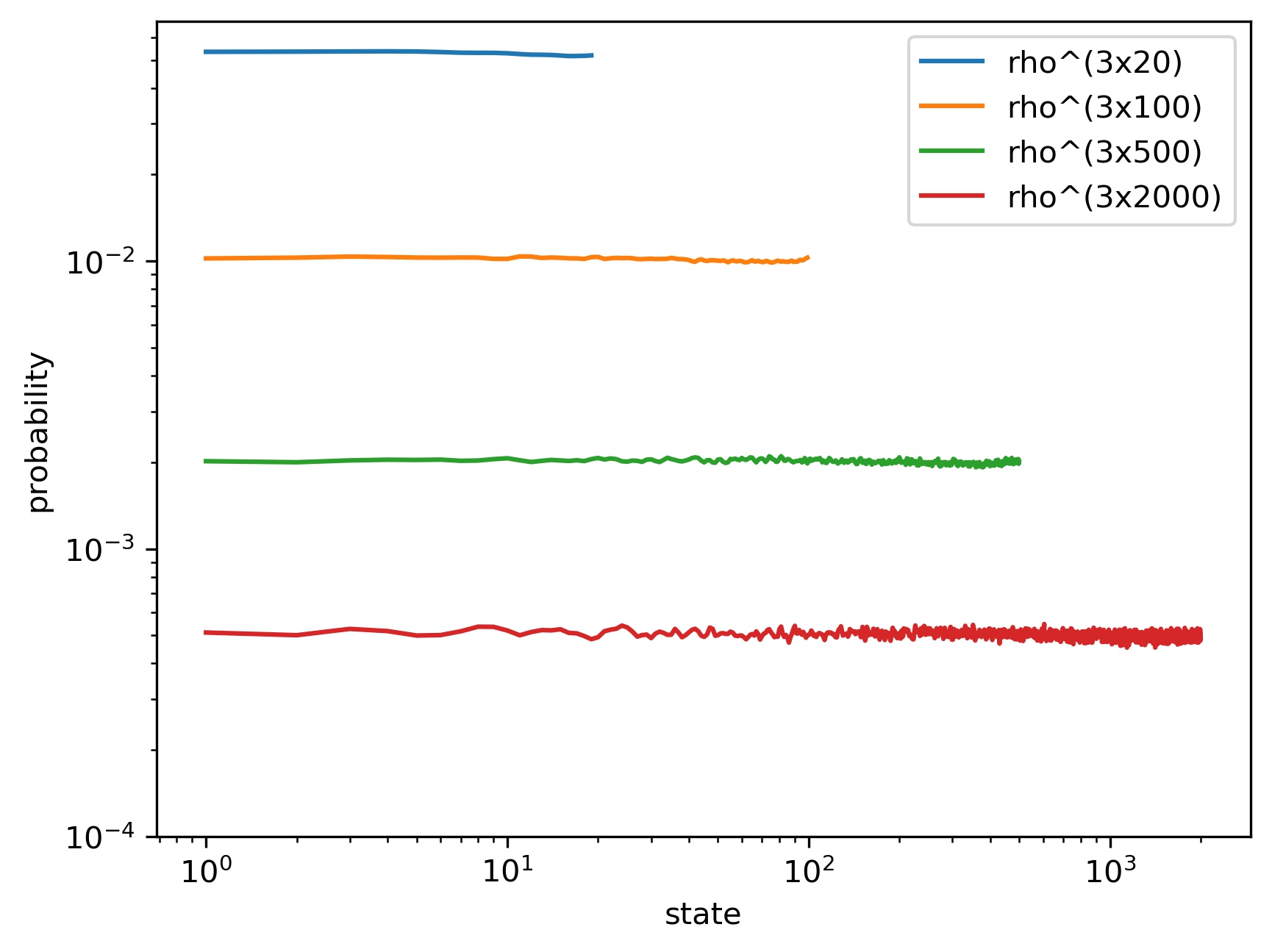}
	}
	\hfill
	\subfloat[HumanoidBulletEnv-v0 (raw)\label{fig:mixing_humanoid_raw}]{
		\includegraphics[width=0.5\linewidth]{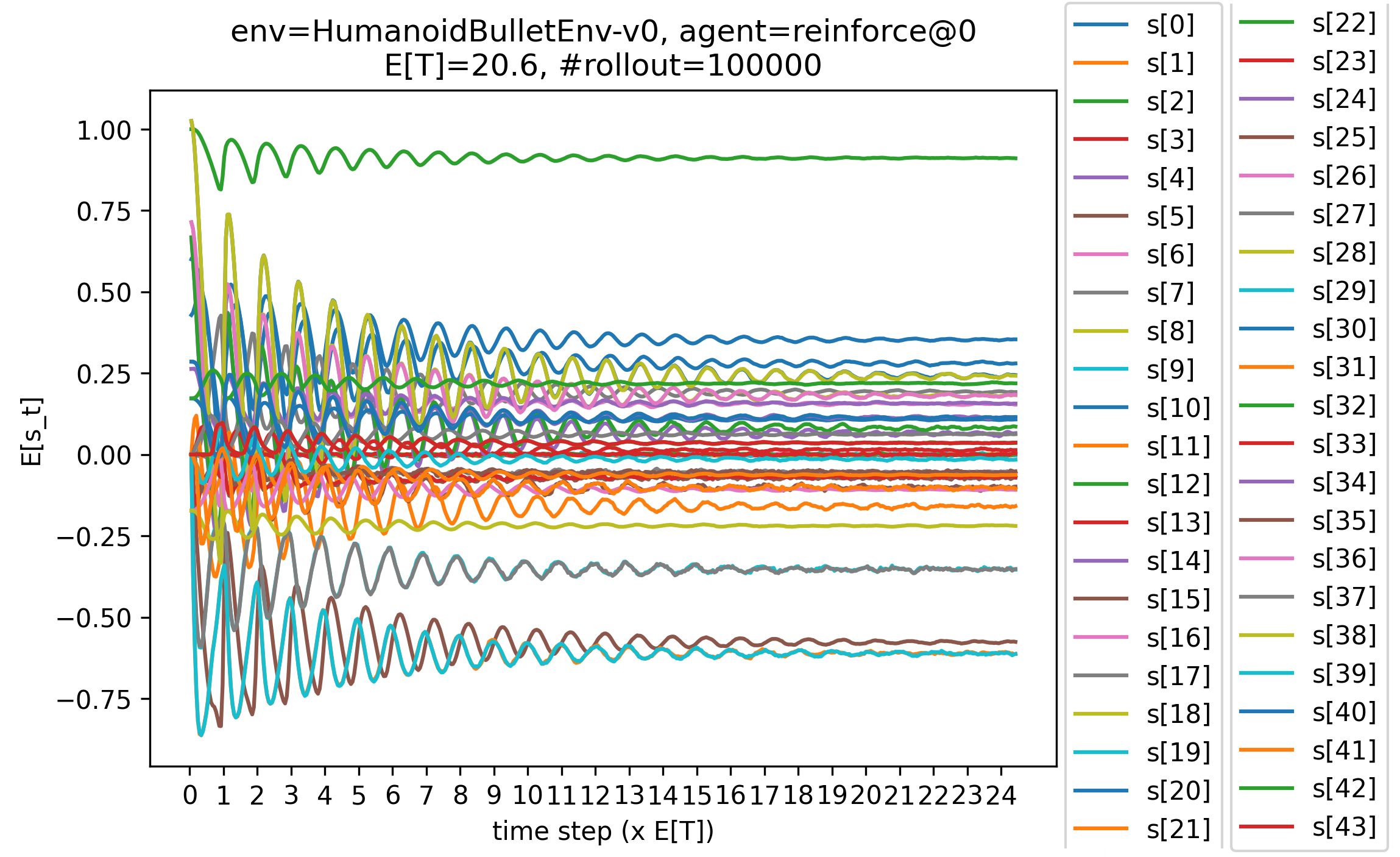}
	}
	\subfloat[HalfCheetahBulletEnv-v0 (raw)\label{fig:mixing_halfcheetah_raw}]{
		\includegraphics[width=0.5\linewidth]{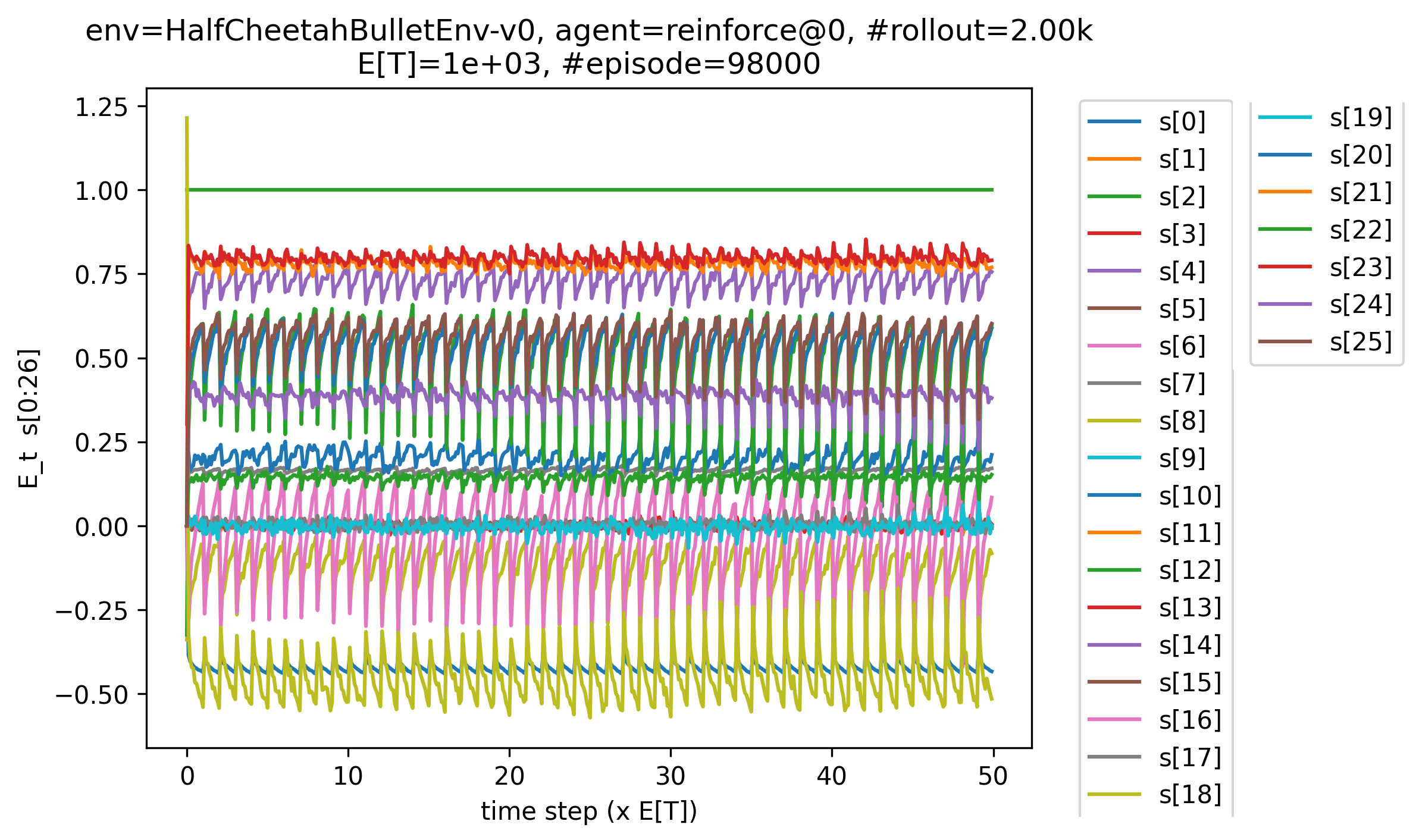}
	}
	\hfill
	\subfloat[HumanoidBulletEnv-v0 (perturbed)\label{fig:mixing_humanoid_perturbed}]{
		\includegraphics[width=0.5\linewidth]{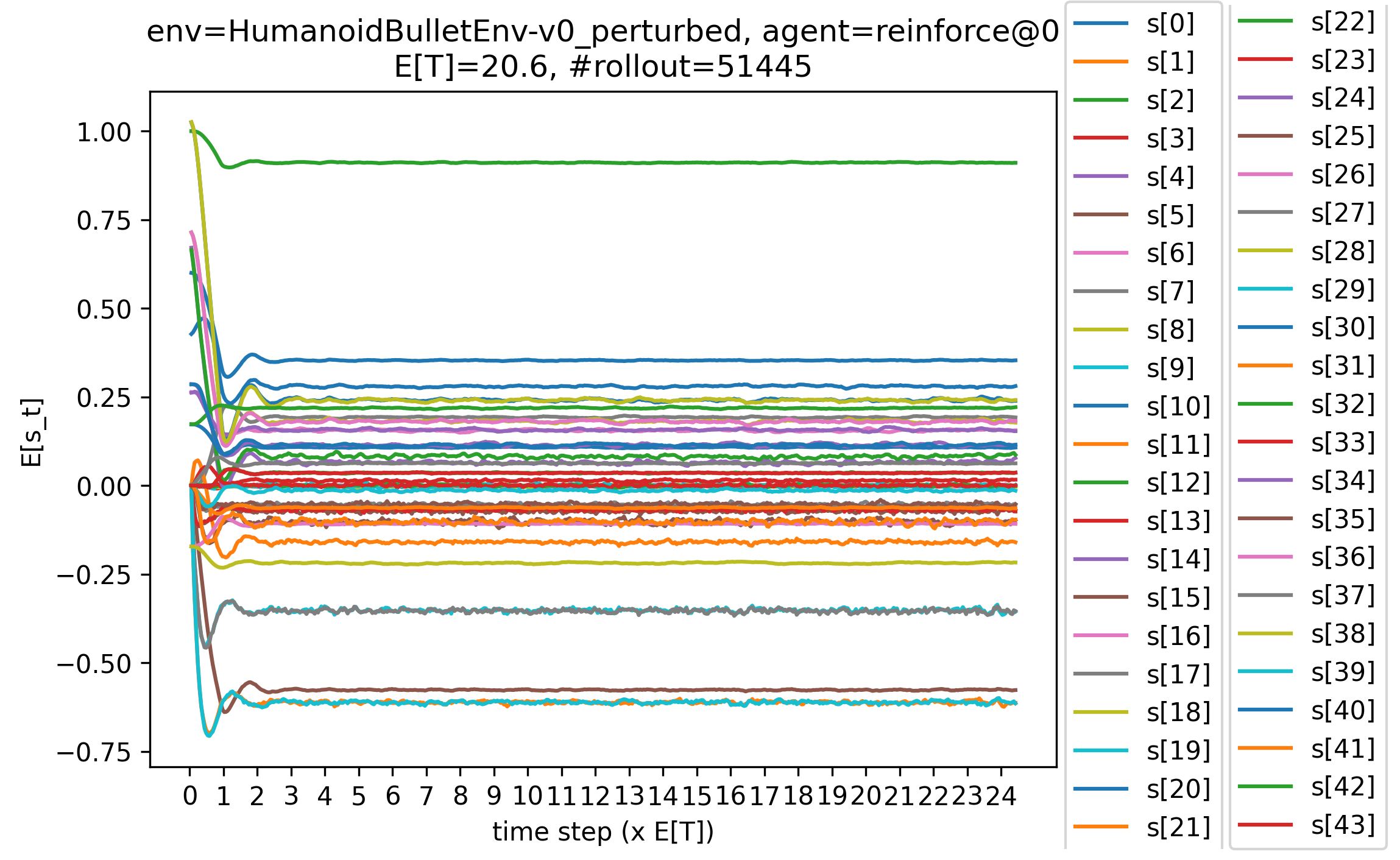}
	}
	\subfloat[HalfCheetahBulletEnv-v0 (perturbed)\label{fig:mixing_halfcheetah_perturbed}]{
		\includegraphics[width=0.5\linewidth]{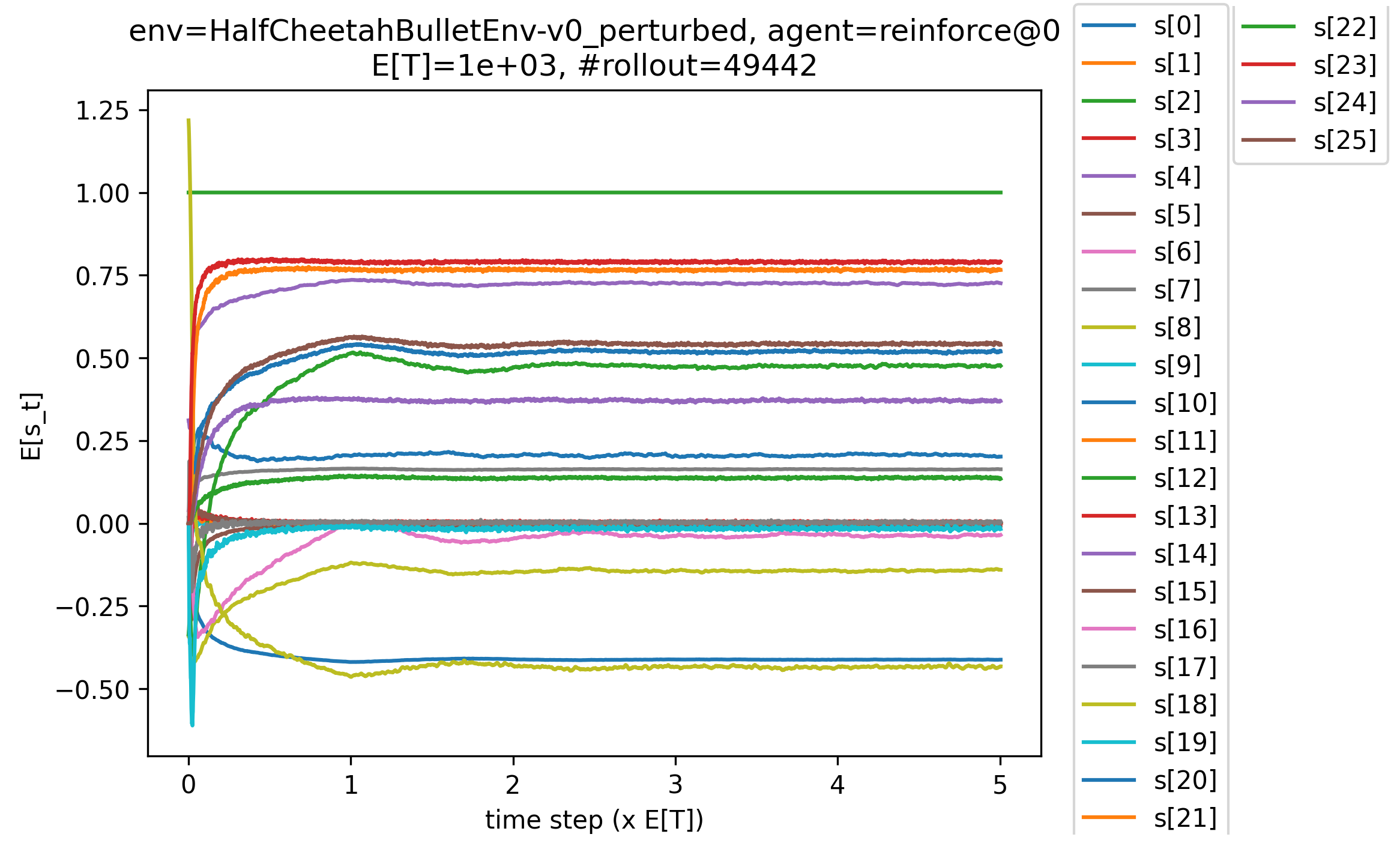}
	}
	\flushleft
	\caption{
		The ``$3$-AEL convergence'' phenomenon
	}
	\label{fig:mixing}
\end{figure}

\end{document}